\newtheorem{theorem}{Theorem}[section]
\newtheorem{lemma}[theorem]{Lemma}
\newtheorem{assumption}[theorem]{Assumption}
\newtheorem{claim}[theorem]{Claim}
\theoremstyle{definition}
\newtheorem{definition}{Definition}[section]
\newcommand{\reals}{{\mbox{\bf R}}}
\newcommand{\Tr}{\mathop{\bf Tr}}
\newcommand{\Expect}{\mathop{\bf E{}}}
\newcommand{\Var}{\mathop{\bf Var{}}}
\newcommand{\Cov}{\mathop{\bf Cov{}}}
\newcommand{\Prob}{\mathop{\bf Prob}}
\newcommand{\argmax}{\mathop{\rm argmax}}
\newcommand{\eg}{{\it e.g.}}
\newcommand{\ie}{{\it i.e.}}
\newcommand{\BEAS}{\begin{eqnarray*}}
\newcommand{\EEAS}{\end{eqnarray*}}
\newcommand{\BEA}{\begin{eqnarray}}
\newcommand{\EEA}{\end{eqnarray}}
\newcommand{\BEQ}{\begin{equation}}
\newcommand{\EEQ}{\end{equation}}
\newcommand{\BIT}{\begin{itemize}}
\newcommand{\EIT}{\end{itemize}}
\begin{document}

\title{Reasoning without Regret}
\author{Tarun Chitra\\ 
        Gauntlet \\
        \texttt{\small tarun@gauntlet.xyz} 
       }\date{\today}
\maketitle

\begin{abstract}
  Chain-of-thought reasoning enables large language models to solve multi-step tasks by framing problem solving as sequential decision problems. 
  Outcome-based rewards, which provide feedback only on final answers, show impressive success, but face challenges with credit assignment and slow convergence. 
  In contrast, procedure-based rewards offer efficient step-level feedback, but typically require costly human supervision. 
  We introduce \emph{Backwards Adaptive Reward Shaping} (BARS), a no-regret framework that converts sparse outcomes-based rewards into effective procedure-based signals. 
  BARS uses sparse rewards generated from terminal-state priors and cover trees to scale rewards while preventing exploitation. 
  With Bellman contraction and $(\Delta, \epsilon)$-gap rewards, our backward Euler solver achieves $\epsilon$-accuracy in $O\left((R_{\max}/\Delta)\log(1/\epsilon)\right)$ iterations with $O(\log T)$ dynamic regret over $T$ rounds. 
  Our analysis, based on generic chaining, continuous scaling limits, and non-linear Feynman-Kac bounds, connects recent outcome-based methods' empirical successes with the benefits of intermediate supervision.
  Combined, this provides the first rigorous no-regret algorithm for outcome reward shaping, providing a theoretical foundation for the empirical success of DeepSeek's R1.
\end{abstract}

\section{Introduction}
Reasoning language models (RLMs) enhance language models with capabilities that dramatically improve their performance on complex reasoning and planning tasks. 
Recent empirical work on chain-of-thought reasoning (CoT) has led to dramatic improvements in OpenAI's RLM performance in tasks such as Frontier Math, ARC, GSM8K and MATH~\cite{Wei2022, Trinh2024}; These capabilities draw on lessons from models that achieved superhuman performance in Poker~\cite{brown2019superhuman}, Go~\cite{silver2017mastering}, Diplomacy~\cite{meta2022human}, and other combinatorial hard games. 
RLMs involve large language models competing against each other to improve their prompts in a sequential manner.
This generates a chain-of-thought, a sequence of prompt-output pairs that not only leads to better answers for users but also provides interpretability into the model's reasoning, albeit with added inference time computation.

RLMs were first pioneered by~\cite{Wei2022}, which demonstrated that chain-of-thought reasoning endows models with explanatory power.
The allows for prompt planning, where a model can generate a sequence of prompts $P_1, \ldots, P_T$ that refine and explain the final output $O_T$ given to a user's initial prompt $P_0$. 
To train these models, a designer selects a reward schedule $r(P, O)$ to incentivize particular prompt-output pairs, then fixes a base language model $L$ and optimizes its cumulative reward via, for example, Bellman iteration.
This allows one to view CoT as a reinforcement learning problem in which language models and other neural networks act as black-box function calls to maximize cumulative reward and produce a high-quality final output $O_T$.

\paragraph{Closed and Open RLMs.}
OpenAI's closed-source O1 and O3 models pioneered commercial reasoning capabilities.
Their limited documentation has restricted research to empirical evaluations~\cite{jaech2024openai, ballon2025relationship, arrieta2025early}, hindering the theoretical understanding of CoT algorithms.\footnote{Author's note: There is some irony that this paper was written thanks to conversations with O3-mini}
Despite that, OpenAI's O1 and O3 have provided dramatic improvements in language model usage in research in disparate fields ranging from theoretical physics~\cite{yin2025exact} to healthcare~\cite{mondillo2025comparative,xu2025towards}.

Conversely, DeepSeek's R1 has emerged as the leading open weight RLM, achieving performance comparable to OpenAI's O1~\cite{guo2025deepseek}.
One key difference is that DeepSeek's model was trained with far fewer computational resources than O1 and claims to use fewer inference time compute resources~\cite{guo2025deepseek}.
The DeepSeek team has documented their approach extensively, highlighting their custom Group Relative Policy Optimization (GRPO) algorithm for CoT reasoning.
Their publications~\cite{guo2025deepseek, Guo2024} provide heuristics as to why GRPO outperformed methods like Monte Carlo Tree Search, which they specifically note~\cite[\S4]{deepseekai2025} failed to achieve strong results in R1 evaluations.
However, a theoretical understanding of DeepSeek's success is lacking.

\paragraph{Outcome and Procedure-Based Rewards.}
A key insight from the DeepSeek R1 paper is that designing rewards for different tasks is crucial for achieving human expert level performance.
The R1 model improves on DeepSeekMath's reward model~\cite{shao2024deepseekmath}, which was the first to use GRPO, and math-shepard~\cite{wang2023math}.
These reward models aim to avoid human-annotated training sets and instead learn optimal rewards.
This allows for training off of a much smaller annotated training set.

Outcome-based rewards provide a reward to a model that reaches a target output $O^{\star}$.
Generally, outcome-based rewards are easy to construct as one does not need a large tagged training set.
However, their success in solving reasoning tasks has been mixed~\cite{lightman2023let, choudhury2025process} as phenomena such as model reward hacking~\cite{skalse2022defining, miao2024inform} that worsen performance.

In contrast, procedure-based rewards give a sequence of rewards to the model if it is able to estimate a sequence of steps correctly.
For problems like math word problems, where the reasoning steps are well defined, this has been much more successful in practice~\cite{uesato2022solving, ouyang2022training, wang2023math}.
The main issue with procedure-based rewards is that they are usually constructed by humans (via RLHF) and are not scalable.
Although there have been many studies on reward models used to generate human-guided procedure-based rewards~\cite{moskovitz2023confronting, scheid2024optimal, wang2024secrets}, there has been little work on automatically learning optimal procedure-based rewards from outcome-based rewards.

\paragraph{Learning Procedure-Based Rewards from Sparse Outcome Rewards.}
Sparse rewards provide meaningful feedback only infrequently, typically after many actions or after achieving key goals.
In Chain-of-Thought reasoning, this approach effectively transforms outcome-based rewards into procedure-based rewards that attribute credit to individual reasoning steps.
This strategy has produced interpretable, high-performing models like DeepSeek's R1, showing that even with limited feedback, agents can master complex reasoning processes.

Recent work has formalized how sparse rewards enable one to derive procedure-based rewards from outcome-based rewards.
Empirical research by~\cite{setlur2024rewarding} shows that optimal procedural rewards can be estimated from outcome rewards through advantage optimization, while~\cite{jia2025we,zhou2025q} shows that procedure-reward learning can be achieved through regression with at most polynomial extra overhead.
Although not universally applicable~\cite{kim2025metastable}, the success of sparse rewards in reasoning models~\cite{setlur2024rewarding,deepseekai2025,qu2025optimizing} indicates that they are essential to create efficient and accurate CoTs while reducing model complexity.
We note that other attempts at using learning procedural rewards using, \eg~temporal difference methods (such as stepwise credit)~\cite{arjona2019rudder, zelikman2022star, pignatelli2023survey, shen2025satori, yao2023tree}.

\paragraph{RLMs and Regret.}
Given a fixed set of rewards, a natural question is how one should compare RLM reinforcement learning strategies.
Despite empirical success, little theoretical work has addressed CoT's convergence properties as an MDP.
Although numerous in-context learning studies exist (\eg,~\cite{krishnamurthy2024can, dong2022survey, zhang2023and, park2024llm, zhang2018dynamic, liu2023reason, ahn2023transformers}), none explain DeepSeek's observations about which reinforcement algorithms work empirically.
Regret minimization offers a natural framework for evaluating CoT performance by measuring how an online learner compares to an optimal offline algorithm, with sublinear regret guaranteeing convergence to coarse-correlated equilibria in game theory (\eg,~\cite{MasColellRegret}).

For CoT, such an equilibrium means that the prompt chain quickly reaches a neighborhood of the best one-shot output, yielding low regret for the end-user.
We show in~\S\ref{subsec:fwd-bellman} that continuous scaling limits of Bellman iterations provide an optimal comparison for procedure-based rewards, allowing for such regret to be well defined.
The regret of a CoT algorithm provides bounds on the reasoning steps needed to approach optimal rewards within a fixed computational budget $T$.

Previous studies~\cite{krishnamurthy2024can, park2024llm, liu2023reason, zhang2023and} examined the regret of transformer in-context learning relative to a function class that can be represented by a single language model.
However, DeepSeek's success suggests that RLM performance largely stems from reinforcement learning procedures optimizing rewards based on prompt sequences.
The dynamics of this process may involve multiple function classes whose complexities are based on reasoning tasks.
As such, CoT likely has a regret that differs substantially from that of an individual in-context learning task.

\paragraph{Continuous Limits of CoT processes.}
Regret bounds typically fix a function class and a time horizon $T$ and a bound algorithm performance over this period.
However, CoT queries have variable-length episodes depending on task difficulty, making comparison challenging across inputs and models.
To address this, we examine the continuous limits of CoT processes, where convergence rates provide a basis for comparing different CoT models using the same base LLM.

We represent the CoT process as a Markov Decision Process (MDP) whose trajectories (\ie~sequences of prompt-output pairs) obey a Bellman equation.
By taking a scaling limit of the Bellman equation, we embed MDP trajectories into a continuous process governed by a limiting Hamilton-Jacobi-Bellman (HJB) equation.
Solutions to this limiting equation represent the optimal value (rewards) along the paths between query and answer, serving as a proxy for optimal procedure-based rewards.
This, through a comparison principle~\cite{BarlesSouganidis1991,crandall1992user}, enables a comparison between forward iteration (\eg~policy gradient) and backwards iteration (\eg~value function iteration).
Continuous limits of MDPs have been studied in various learning settings, often relating MDPs to limit HJB equations through rescaled limits of discrete Bellman operators~\cite{bayraktar2023pde, drenska2023pde, drenska2020prediction, harvey2023optimal, bayraktar2025comparison, bayraktar2020malicious} or finding limiting ODEs or PDEs for online learning~\cite{zhang2022pde, mcmahan2014unconstrained, ivgi2023dog, orabona2016coin, cutkosky2018black, chen2022better}.

The solutions of the limiting HJB equation describe optimal procedure-based rewards by ensuring that the paths satisfy a maximal principle for the value $\overline{V}(\gamma)$.
We use viscosity solutions approximable by discrete Bellman equations~\cite{BarlesSouganidis1991} and dual to backward stochastic differential equations (BSDE).
This continuous limit implicitly specifies a BSDE for optimal process rewards, with our approximation accuracy determining how well we learn these rewards via Bellman iteration.
Using this framework, we define a hitting time (similar to the Snell Envelope~\cite[Ch. 9]{gobet2016monte}) to estimate the minimal reasoning steps needed to get $\epsilon$-close to optimal rewards.

\paragraph{This Paper.}
We describe CoT reasoning as a Markov Decision Process (MDP) with rewards $r(S_t, A_t)$.
Using this representation, we argue that the optimal value function gives a bound on procedure-based rewards (analogous to a claim in~\cite{jia2025we}).
Unlike~\cite{jia2025we}, we provide quantitative bounds on how well the optimal value can be approximated using a scaling limit of the CoT MDP.
In this continuous model, we study the limiting value function $\overline{V}(t, S_t)$, where $S_t$ is the state at time $t$.
This limiting value function satisfies a Hamilton-Jacobi-Bellman equation, analyzable via a BSDE representation.

Given a user query $P_0$, we study two iteration types: forward and backward.
The forward iteration generates a sequence of prompts $P_1, \ldots, P_T$ using a policy gradient, where the model chooses policies $\pi_t(\cdot | S_t)$ based on the context window $S_t$.
The backward iteration solves a dynamic program from the set of non-zero rewards $\mathrm{supp}(r)$ back to $P_0$.
If $\mathrm{supp}(r)$ can be described parsimoniously, one can solve for trajectories backward from rewards to $P_0$ using value function iteration~\cite{munos2008finite}, approximating the backward SDE associated with the HJB equation.

To determine whether forward or backward iteration better optimizes rewards within a fixed computational budget, we compare their hitting times to $\epsilon$-approximate the continuous scaling limit $\overline{V}$.
By coupling both forward Bellman iteration and backward Euler BSDE iteration to $\overline{V}$, we obtain hitting‐time bounds:
\begin{align*}
\tau^+_\epsilon = \Theta\!\left(\gamma_2(S,d)^2/\epsilon^2\right) &&
\tau^-_\epsilon = \Theta\!\left(\gamma_2(\mathrm{supp}(r),d)^2/\epsilon^2\right)
\end{align*}
where $\tau^+_{\epsilon}, \tau^{-}_{\epsilon}$ are the forward and backward hitting times.
The backward hitting time depends only on the Talagrand functional $\gamma_2(\mathrm{supp}(r), d)$, which can be much shorter than $\gamma_2(S, d)$ for sparse rewards.

Since policy suboptimality is at most twice the value‐error, these translate into regret bounds:
\begin{align*}
R_T = O\!\left((R_{\max}/\Delta)\log T\right) && 
R_T^{\mathrm{dyn}} = O(\log T)
\end{align*}
Based on these insights (see Fig.~\ref{fig:proof-steps}), we design Backwards Adaptive Reward Shaping (BARS), which samples potential answers from a prior distribution and selects optimal rewards for fast backwards iteration.
BARS uses online estimates of the Talagrand $\gamma_2$ functional (via, \eg,~cover trees~\cite{Beygelzimer2006}) to adapt the rewards per reasoning step, guaranteeing $\epsilon$-accuracy in $O(\log(1/\epsilon))$ backward steps with only $O(\log T)$ cumulative regret.
BARS can be seen as a method for adaptive reward shaping~\cite{ng1999policy}, which tunes the rewards to optimize the backward iteration.
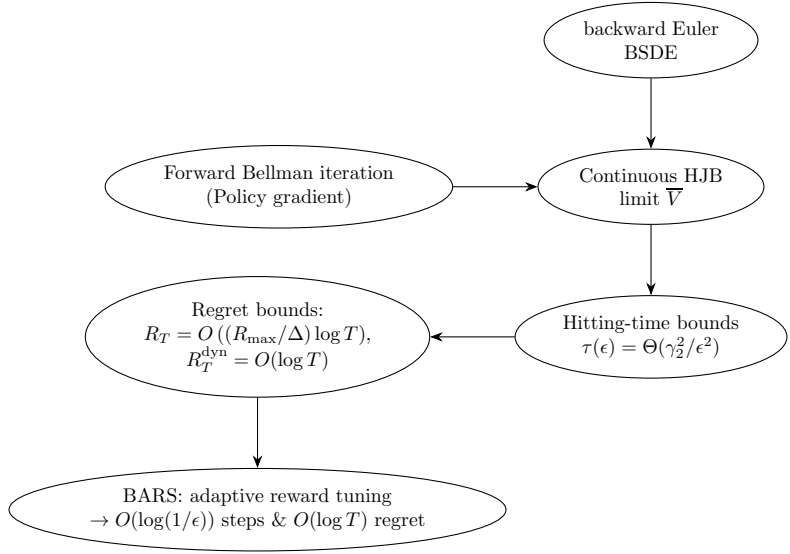
\begin{figure}
  \begin{center}
    \begin{tikzpicture}[
      scale=0.75, transform shape,
      >=Stealth,
      node distance=1.25cm and 1.5cm,
      bubble/.style={draw, ellipse, align=center, inner sep=4pt, font=\small}
    ]
      \node[bubble] (db) {Forward Bellman iteration \\ (Policy gradient)};
      \node[bubble] (hjb) [right=of db] {Continuous HJB\\limit $\overline V$};
      \node[bubble] (be) [above=of hjb] {backward Euler\\BSDE};
      \node[bubble] (hit) [below=of hjb] {Hitting‑time bounds\\$\tau(\epsilon)=\Theta(\gamma_2^2/\epsilon^2)$};
      \node[bubble] (reg) [left=of hit] {Regret bounds:\\$R_T=O\left((R_{\max}/\Delta)\log T\right)$,\\$R_T^{\mathrm{dyn}}=O(\log T)$};
      \node[bubble] (BARS) [below=of reg] {BARS: adaptive reward tuning\\$\to O(\log(1/\epsilon))$ steps \& $O(\log T)$ regret};
    
      \draw[->] (db) -- (hjb);
      \draw[->] (be) -- (hjb);
      \draw[->] (hjb) -- (hit);
      \draw[->] (hit) -- (reg);
      \draw[->] (reg) -- (BARS);
    \end{tikzpicture}
    \end{center}
\caption{Flowchart of the results proved in in this paper}
\label{fig:proof-steps}
\end{figure}

\paragraph{Notation.}
\begin{itemize}
    \item $f(n) = O(g(n))$ means there exists $C, M > 0$ such that $\forall n > M, f(n) \leq C g(n)$
    \item $f(n) = \Omega(g(n))$ means there exists $C, M > 0$ such that $\forall n > M, f(n) \geq C g(n)$
    \item $f(n) = \Theta(g(n))$ means there exists $C, C', M > 0$ such that $\forall n > M, f(n) \leq C g(n)$ and $g(n) \leq C' f(n)$
    \item We use $f(n) \asymp g(n)$ interchangeably with $f(n) = \Theta(g(n))$
    \item For a metric space $(X, d)$, we define $B(x, r) = \{y \in X : d(x,y) \leq r\}$ denotes the ball of radius $r$ centered at $x$ in the metric space $(X, d)$
    \item For $p \geq 1$, $\|x\|_p = \left(\sum_{i=1}^n |x_i|^p\right)^{1/p}$ denotes the $L^p$ norm of vector $x \in \reals^n$
    \item $\mathrm{supp}(r) = \{s \in S : r(s) \neq 0\}$ denotes the support of reward function $r$
    \item $\overline{V}$: Overlined variables will refer to continuous limits of discrete processes
    \item For a finite state space $S$, we will denote discrete operators $\mathcal{T} : \reals^S \rightarrow \reals^S$ via $\mathcal{T}[\phi](s)$.
    If these operators have continuous limits, they will be denoted by $\overline{\mathcal{T}}$.
\end{itemize}

\section{Markov Decision Processes and their Scaling Limits}
A \emph{Markov Decision Process (MDP)} is defined by the tuple $(S, A, P, r, \gamma)$, where $S$ is the state space, $A$ is the action space, $P(s' \mid s,a)$ denotes the transition probabilities, $r(s)$ is the reward function, and $\gamma\in [0,1)$ is the discount factor.
In this paper, we will only be concerned with non-negative rewards, \ie~$r(s) \geq 0$.
Moreover, we assume that all state and action spaces are compact and simply connected.

A \emph{policy} $\pi: S \rightarrow \Delta(A)$ is a mapping from states to distributions over actions, where $\pi(a|s)$ denotes the probability of taking action $a$ in state $s$.
A \emph{trajectory} $\tau = (s_0, a_0, s_1, a_1, \ldots, s_T, a_T)$ is a sequence of states and actions sampled according to $s_0 \sim \mu_0$, $a_t \sim \pi(\cdot|s_t)$, and $s_{t+1} \sim P(\cdot|s_t, a_t)$.
The \emph{cumulative reward} along a trajectory is $R(\tau) = \sum_{t=0}^{T} \gamma^t r(s_t, a_t)$.
The \emph{discounted Bellman operator} is 
\begin{equation}\label{eq:discounted-bellman}
\mathcal{T}^\pi[V](s) = r(s, \pi(s)) + \gamma \sum_{s'} P(s'|s, \pi(s)) V(s')
\end{equation}
An MDP induces a \emph{state distribution} $d^\pi(s) = (1-\gamma) \sum_{t=0}^{\infty} \gamma^t \Prob(s_t = s | \pi)$.
The \emph{expected value function} is $J(\pi) = \Expect_{s \sim d^\pi}[V^\pi(s)] = \Expect_{\tau \sim \pi}[R(\tau)]$.
Let $J^{\pi^{\star}} = \max_{\pi} J(\pi)$ denote the optimal expected value.
The \emph{regret} of a policy $\pi$ is $\text{Regret}(\pi) = J(\pi^{\star}) - J(\pi)$, where $\pi^{\star} \in \argmax_{\pi} J(\pi)$, measuring the gap between the optimal and the achieved rewards.

\paragraph{Policy Gradients.}
Policy gradient methods directly optimize a parameterized policy $\pi_\theta$ by ascending the gradient of the expected return $V(\pi_\theta)$ with respect to policy parameters $\theta$ \cite{SchulmanEtAl2017}. 
These methods compute updates of the form:
\[
\nabla_\theta J(\pi_\theta) \approx \Expect_{s\sim d_{\pi_\theta},\,a\sim\pi_\theta}\left[\nabla_\theta\log\pi_\theta(a|s)A^{\pi_\theta}(s,a)\right],
\]
where $A^{\pi_\theta}(s,a) = Q^{\pi_\theta}(s,a) - V^{\pi_\theta}(s)$ is the advantage function. 
This formulation uses advantages due to the performance difference lemma~\cite{kakade2002approximately}, which states:
\[
V^{\pi'}(s) - V^{\pi}(s) = \Expect_{s'\sim d_{\pi'},\,a\sim\pi'}\left[A^{\pi}(s',a)\right]
\]
This lemma provides the theoretical foundation for policy improvement, showing that a positive expected advantage guarantees improved performance.

Modern methods like Trust Region Policy Optimization (TRPO)~\cite{Schulman15} and Proximal Policy Optimization (PPO)~\cite{SchulmanEtAl2017} build on this by constraining policy updates for stable learning. 
TRPO enforces a KL divergence constraint between consecutive policies: $D_{KL}(\pi_{\theta_{t+1}} \| \pi_{\theta_t}) \leq \delta$, while PPO uses a clipped surrogate objective that penalizes large policy changes. 
By controlling the step size $\Delta\theta_t$, these methods effectively create a contraction mapping in policy space:
\[
\langle \nabla V^{\pi_t}(s),\Delta\theta_t\rangle\le -m\,\left|V^{\pi_t}(s)-V^{\pi^*}(s)\right|
\]
This controlled optimization is crucial for convergence guarantees and connects to our viscosity solution framework for continuous control problems.

\paragraph{Chain of Thought Reasoning as an MDP.}
Chain-of-Thought (CoT) reasoning enables language models to solve complex problems through step-by-step reasoning.
We model the LLM as a black-box oracle $L$ mapping prompts to outputs ($O_t = L(P_t)$), focusing on the sequential decision process rather than model architecture.

In our MDP formulation, actions are possible prompts to the LLM, represented as matrices in $\reals^{m \times d}$ (where $m$ is token count and $d$ is embedding dimension).
States comprise the context window containing the prompt-output history, represented as vectors in $\reals^{m \times d \times T}$ (where $T$ is the length of the context, giving the dimension of the context $C = m \times d \times T$).
State transitions occur when a prompt-output pair $(P_t, O_t)$ is appended to the current context: $S_t = (P_0, O_0) \oplus \cdots \oplus (P_t, O_t)$, with $\oplus$ denoting concatenation.

This naturally forms an MDP where states represent accumulated context, transitions follow from the LLM's responses, and rewards reflect reasoning quality or solution correctness.
Most contemporary reasoning models implement this using policy gradient methods (\eg, DeepSeek's GRPO~\cite{deepseekai2025}).

\subsection{Reward Structure and Computational Trade-offs}
The reward function $r(s, a)$ in CoT reasoning involves balancing exploration and exploitation, similar to multi-armed bandit problems~\cite{Srinivas2010}.
This challenge is magnified in CoT's high-dimensional space where each token choice is a decision point, making exhaustive exploration infeasible.

CoT systems often employ \emph{sparse rewards} --- non-zero only on a small subset of states --- as a computational requirement rather than a design choice.
Dense reward evaluation is often prohibitively expensive given language model inference costs.
Sparse rewards typically appear as outcome-based evaluations~\cite{setlur2024rewarding}, such as rewards for correctly solved problems or functioning generated code.

As we show in~\S\ref{subsec:fwd-bellman}, value function solutions to the Bellman equation (eq.~\ref{eq:discounted-bellman}) transform these sparse rewards into procedure-based rewards by propagating the value backward from the terminal states of high reward~\cite{sutton1998reinforcement}.
This creates a dense reward landscape from initially sparse rewards, similar to procedure-based rewards in RLHF systems~\cite{qu2024recursive, kim2025metastable, miao2024inform, moskovitz2023confronting, scheid2024optimal,wang2024secrets}, enabling efficient learning in spaces where direct reward signals are rare.

\paragraph{Formal Definitions of Reward Structures.}
We define rewards as \emph{sparse} if they are non-zero only on a subset $S' \subset S$ where $|S'| \ll |S|$.
For infinite state spaces, we use covering numbers to measure relative size (see~\S\ref{sec:covering-numbers}).
Typically, $S'$ represents terminal states or goal conditions, with $r(s) = 0$ for all $s \notin S'$.
We denote the support of a reward function by $\mathrm{supp}(r) = \{s \in S : r(s) \neq 0\}$.

Rewards are \emph{$(\Delta, \epsilon)$-gapped} if for any state $s$, when comparing the optimal action $a^*(s) = \arg\max_a r(s,a)$ with any suboptimal action $a$ where $d(a, a^*) > \epsilon$, we have:
\begin{equation}\label{eq:gap-condition}
r(s, a^*) - r(s, a) > \Delta
\end{equation}
This property connects to bandit problems, enabling logarithmic regret bounds~\cite{Srinivas2010,LaiRobbins1985,kleinberg2008multi}.
In CoT reasoning, gapped rewards create clearer learning signals by establishing a meaningful separation between optimal and suboptimal reasoning paths, helping induce procedure-based reward structures~\cite{dong2024rlhf, wang2024secrets, shen2024improving}.


\subsection{Forward Bellman Value Iteration and Limiting PDEs}\label{subsec:fwd-bellman} 
Forward Bellman value iteration iteratively applies the Bellman operator to approximate the optimal value function. 
Starting with an arbitrary value function $V_0$, the algorithm updates the value function according to $V_{k+1} = \mathcal{T}[V_k]$, where $\mathcal{T}$ is the optimal Bellman operator defined as $\mathcal{T}[V](s) = \max_a [r(s,a) + \gamma \sum_{s'} P(s'|s,a) V(s')]$. 
The optimal value function $V^*$ is the unique fixed point of this operator, satisfying $V^* = \mathcal{T}[V^*]$. 
The convergence rate of value iteration is geometric, with $\|V_k - V^*\|_\infty \leq \gamma^k \|V_0 - V^*\|_\infty$, which follows from the fact that the Bellman operator is a contraction mapping in the infinity norm~\cite[Prop. 4.4]{bellemare2023distributional}.

The Bellman residual, defined as $\|\mathcal{T}[V] - V\|$, provides a measure of how far a value function is from optimality. 
Bellman residual minimization methods directly optimize this quantity to find approximate value functions. 
These approaches, first studied in~\cite{munos2008finite}, provide finite-time bounds on the approximation error and can be particularly effective when dealing with continuous state spaces or function approximation settings where exact value iteration is intractable.
In this section, we look at the scaling limits of Bellman Value Iteration and describe the mean-field value function.

\paragraph{Controlled Diffusion Limit.}
One can view the Bellman value iteration as a discretization of a controlled diffusion process~\cite{krylov2002introduction}.
Under regularity conditions on the kernel $P(s' | s, a)$ one can compute a drift $b(S_t, a_t)$ and a covariance $\sigma(S_t, a_t)$ such that
\begin{align*}
\Expect_P[s' - s\mid s,a]=b(s,a)\,\Delta t+o(\Delta t)
&&
\Cov_P[s'-s\mid s,a]=\sigma\sigma^T(s,a)\,\Delta t+o(\Delta t)
\end{align*}
where $\Expect_P$ and $\Cov_P$ are the expectation and covariance, respectively, under the distribution induced by $P$. 
We will provide a brief description of how to formalize this limit, but refer the interested reader to~\cite{krylov2002introduction, kushner2001viscosity} for more details.

Let $\Omega^{\delta}$ be a discretization of $S \times A$ into a grid of size $\delta$.
Given that $S \times A$ is compact and simply connected, such a discretization exists and has a single component.
For each point $(s, a) \in S \times A$, we define $(s_{\delta}, a_{\delta})$ to be the nearest point in $\Omega^{\delta}$ with arbitrary tie-breaking.
We define the discretized transition kernel $P^{\delta}(s' | s, a) = P(s'_{\delta} | s_{\delta}, a_{\delta})$, which effectively rounds the continuous kernel to the grid $\Omega^{\delta}$.
We now define two quantities:
\begin{align*}
b^{\delta}(s, a) &= \frac{1}{\delta} \sum_{s'} P^{\delta}(s' | s, a) (r(s') - r(s)) &
\sigma^{\delta}(s, a) &= \frac{1}{\delta} \sum_{s'} P^{\delta}(s' | s, a) (s' - s)(s' - s)^T
\end{align*}
Let $(s_t^{\delta}, a_t^{\delta})$ be a trajectory sampled from $P^{\delta}$.
Theorem 3.1 of~\cite{kushner2001viscosity} states that if the moments of this trajectory satisfy $\sup_{\delta \geq 0} \sup_{p} \Expect_{P^{\delta}}[\Vert (s_t^{\delta}, a_t^{\delta}) \Vert^p] < \infty$, then as $\delta \to 0$, $b^{\delta}(s, a) \rightarrow b(s, a), \sigma^{\delta}(s, a) \rightarrow \sigma(s, a)$.
Moreover, the trajectory $(s_t^{\delta}, a_t^{\delta})$ weakly converges to a solution of the stochastic differential equation
\begin{equation}\label{eq:controlled-diffusion}
dS_t = b(S_t,a_t)\,dt + \sigma(S_t,a_t)\,dW_t
\end{equation}
where $b, \sigma$ are Lipschitz functions.
This limit effectively demonstrates that (forward) value iteration for an MDP samples a limiting diffusion process under mild assumptions, which we formalize below.
\begin{assumption}\label{assm:forward-assumption}
The transition kernel $P$ generates sample trajectories with finite $p$-th moments for all $p \geq 1$.
The limiting drift $b$ and covariance $\sigma$ are Lipschitz functions.
\end{assumption}

\paragraph{Hamilton-Jacobi-Bellman Equation.}
For the controlled diffusion~\eqref{eq:controlled-diffusion}, we can define a limiting value function $\overline{V}(s)$:
\[
\overline{V}(s) = \max_{\{a_t\}} \Expect_{S_t}\left[ \int_{0}^{\infty} e^{-\gamma t} r(S_t, a_t) \big\vert S_0 = s\right]
\]
where the expectation is over the continuous trajectory.
This value function $\overline{V}$ satisfies the Hamilton-Jacobi-Bellman (HJB) equation
\begin{equation}\label{eq:hjb}
\gamma \overline{V}(s) = \sup_{a \in A} \left(r(s, a) + b(s, a) \cdot \nabla \overline{V}(s) + \frac{1}{2} \Tr \sigma(s, a)\sigma(s, a)^T \nabla^2 \overline{V}(s) \right)
\end{equation}
where $b, \sigma$ are, respectively, the drift and covariance of the controlled diffusion process.
We define the limit Bellman operator $\overline{\mathcal{T}}$ corresponding to this HJB equation:
\begin{equation*}
(\overline{\mathcal{T}} \overline{V})(s) = \frac{1}{\gamma}\sup_{a \in A} \left(r(s, a) + b(s, a) \cdot \nabla \overline{V}(s) + \frac{1}{2} \Tr \sigma(s, a)\sigma(s, a)^T \nabla^2 \overline{V}(s) \right)
\end{equation*}
This operator maps a function $\overline{V}$ to a new function $\overline{\mathcal{T}} \overline{V}$, and the HJB equation states that $\gamma \overline{V} = \overline{\mathcal{T}}\overline{V}$, or equivalently, $\overline{V} = \frac{1}{\gamma}\overline{\mathcal{T}} \overline{V}$.

\paragraph{Viscosity Solutions to PDEs.}
This equation is the continuous limit of the discrete Bellman operator~\eqref{eq:discounted-bellman} (see~\cite[Ch. 2]{krylov2002introduction}).
Since the HJB equation~\eqref{eq:hjb} is a non-linear PDE, it may lack smooth solutions, which is why we will need viscosity solutions.

\begin{definition}
A continuous function $\overline{V}: S \rightarrow \reals$ is a \textit{viscosity solution} to~\eqref{eq:hjb} if:
\begin{itemize}
    \item (Subsolution) For any $\phi \in C^2(S)$ where $\overline{V} - \phi$ has a local maximum at $s_0$:
    \[
        \gamma \overline{V}(s_0) \leq \sup_{a \in A} \left(r(s_0, a) + b(s_0, a) \cdot \nabla \phi(s_0) + \frac{1}{2} \Tr \sigma(s_0, a)\sigma(s_0, a)^T \nabla^2 \phi(s_0) \right)
    \]
    \item (Supersolution) For any $\phi \in C^2(S)$ where $\overline{V} - \phi$ has a local minimum at $s_0$:
    \[
        \gamma \overline{V}(s_0) \geq \sup_{a \in A} \left(r(s_0, a) + b(s_0, a) \cdot \nabla \phi(s_0) + \frac{1}{2} \Tr \sigma(s_0, a)\sigma(s_0, a)^T \nabla^2 \phi(s_0) \right)
    \]
\end{itemize}
\end{definition}

\noindent Viscosity solutions use supersolutions and subsolutions to replace derivatives with smooth test functions at the local extrema.
Viscosity solutions handle nondifferentiable points in the value function while providing stability against perturbations through inequalities at local extrema.
Under mild conditions, they form the correct limit of discrete Bellman iterations, making them the appropriate framework for analyzing convergence in our setting.

\paragraph{Convergence of Discrete Bellman Iteration.}
We define the policy-dependent Bellman operator $\mathcal{T}^{\pi, \delta}$ using transition kernel $P^{\delta}$:
\begin{align*}
\mathcal{T}^{\pi, \delta}[V](s) &= r(s, \pi(s)) + \frac{1}{\gamma} \sum_{s'} P^{\delta}(s' | s, \pi(s)) V(s')
\end{align*}
The Bellman optimality operator is then defined as: $\mathcal{T}^{\delta}[V](s) = \max_{\pi} \mathcal{T}^{\pi, \delta}[V](s)$.
We will first heuristically show that as $\delta \to 0$, $\mathcal{T}^{\delta}$ converges to the continuous operator $\overline{\mathcal{T}}$ of the HJB equation when the value function is smooth.
Using the definitions of $b^{\delta}$ and $\sigma^{\delta}$ from the previous section:
\begin{align*}
\sum_{s'} P^{\delta}(s' | s, a)(s' - s) &= \delta b^{\delta}(s,a) + o(\delta) \\
\sum_{s'} P^{\delta}(s' | s, a)(s' - s)(s' - s)^T &= \delta \sigma^{\delta}(s,a)\sigma^{\delta}(s,a)^T + o(\delta)
\end{align*}
A Taylor expansion of $V(s')$ around $s$ demonstrates the approximation of~\eqref{eq:hjb}:
\begin{align*}
\mathcal{T}^{\pi, \delta}[V](s) &= r(s, \pi(s)) + \frac{V(s)}{\gamma} + \frac{\delta}{\gamma} b^{\delta}(s, \pi(s)) \cdot \nabla V(s) \\
&\quad + \frac{\delta}{2\gamma} \text{Tr}(\sigma^{\delta}(s, \pi(s))\sigma^{\delta}(s, \pi(s))^T \nabla^2 V(s)) + o(\delta)
\end{align*}
While smooth solutions to~\eqref{eq:hjb} may not always exist, viscosity solutions do. 
The Barles-Souganidis theorem~\cite{BarlesSouganidis1991} guarantees the convergence of discrete Bellman iterations
to these viscosity solutions under conditions of consistency, stability, and monotonicity. 
We prove the following result in Appendix~\ref{app:barles}:
\begin{claim}\label{claim:barles}
Suppose that we have rewards $r$ that are bounded, $(\delta, \epsilon)$-gapped, and non-negative.
Then the trajectories of $\mathcal{T}^{\pi, \delta}[V](s)$ converge to viscosity solutions of $\overline{\mathcal{T}V}(s)$
\end{claim}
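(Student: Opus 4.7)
The plan is to invoke the Barles–Souganidis framework \cite{BarlesSouganidis1991}, which reduces convergence of a numerical scheme to the viscosity solution of a fully nonlinear PDE to three verifiable properties of the discrete operator: \emph{monotonicity}, \emph{stability}, and \emph{consistency}. Once these are in place, one defines the upper and lower half-relaxed limits $\overline{V}^{\ast}(s)=\limsup_{\delta\to 0,\,s'\to s} V^{\delta}(s')$ and $V_{\ast}(s)=\liminf_{\delta\to 0,\,s'\to s} V^{\delta}(s')$ of the discrete fixed points $V^{\delta}=\frac{1}{\gamma}\mathcal{T}^{\delta}V^{\delta}$, shows that $\overline{V}^{\ast}$ is a viscosity subsolution and $V_{\ast}$ is a viscosity supersolution of \eqref{eq:hjb}, and then appeals to a strong comparison principle to conclude $\overline{V}^{\ast}\le V_{\ast}$. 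Combined with the trivial reverse inequality $V_{\ast}\le \overline{V}^{\ast}$, this gives equality and hence locally uniform convergence to a unique continuous viscosity solution $\overline{V}$.

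\textbf{Monotonicity and stability.} Monotonicity is immediate: $P^{\delta}(\,\cdot\mid s,a)$ is a sub-probability kernel with non-negative weights, so $V\le W$ pointwise implies $\mathcal{T}^{\pi,\delta}[V]\le \mathcal{T}^{\pi,\delta}[W]$, and taking the supremum over $\pi$ preserves this, giving monotonicity of $\mathcal{T}^{\delta}$. For stability, the rewards are bounded by some $R_{\max}$ and non-negative, and the operator $\frac{1}{\gamma}\mathcal{T}^{\delta}$ is a $\gamma$-contraction in $\|\cdot\|_\infty$ on $\reals^{\Omega^{\delta}}$ with fixed point $V^{\delta}$; iterating from $V_0\equiv 0$ gives $0\le V^{\delta}\le R_{\max}/(1-\gamma)$ uniformly in $\delta$, which is the required $L^\infty$ stability. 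Non-negativity of $r$ is what lets us anchor the lower bound.

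\textbf{Consistency.} The Taylor expansion carried out in the excerpt already does most of the work: for any $\phi\in C^2(S)$ and any $(s,\pi(s))$,
\[
\frac{1}{\delta}\bigl(\gamma\phi(s)-\mathcal{T}^{\pi,\delta}[\phi](s)\bigr)=-\left(\tfrac{1}{\delta}r(s,\pi(s))+b^{\delta}(s,\pi(s))\cdot\nabla\phi(s)+\tfrac12\Tr\bigl(\sigma^{\delta}\sigma^{\delta\,T}\nabla^2\phi(s)\bigr)\right)+o(1),
\]
so by Assumption~\ref{assm:forward-assumption} the coefficients converge and the supremum over $\pi$ of the right-hand side converges locally uniformly to the HJB Hamiltonian in \eqref{eq:hjb}. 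To interchange sup and limit one uses that $A$ is compact (states and actions are compact) and $b,\sigma,r$ are continuous, which is a standard equi-continuity argument.

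\textbf{Comparison and the role of the $(\Delta,\epsilon)$-gap.} The main obstacle is the comparison principle for \eqref{eq:hjb}: this is where the $(\Delta,\epsilon)$-gap enters, since without it the Hamiltonian can be degenerate at points where several actions tie for the maximum, and the viscosity solution need not be unique on $\mathrm{supp}(r)$. Under \eqref{eq:gap-condition} the Hamiltonian $H(s,p,M)=\sup_a\bigl(r(s,a)+b(s,a)\cdot p+\tfrac12\Tr(\sigma\sigma^T(s,a)M)\bigr)$ is attained by an $\epsilon$-isolated maximizer with a $\Delta$-margin, so one may apply Ishii's doubling-of-variables argument (see \cite{crandall1992user}) with the standard Lipschitz hypotheses on $b,\sigma$ from Assumption~\ref{assm:forward-assumption}; the gap provides the strict inequality needed to close the penalization and rule out the degenerate case. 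Feeding this comparison principle into the half-relaxed limits machinery completes the convergence $V^{\delta}\to\overline{V}$ to the viscosity solution, proving the claim.
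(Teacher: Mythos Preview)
Your overall plan matches the paper's proof exactly: both invoke Barles--Souganidis and verify monotonicity, stability, and consistency of the discrete Bellman operator, then appeal to a comparison principle for the limiting HJB equation. Your treatment of monotonicity and stability is essentially identical to the paper's (the paper bounds by $r_{\max}/\gamma$ rather than $R_{\max}/(1-\gamma)$, but that is cosmetic), and your half-relaxed-limits description is the standard unpacking of what the paper leaves implicit.

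Where you diverge is in the role you assign to the $(\Delta,\epsilon)$-gap. The paper does not use the gap to obtain the comparison principle at all; in the appendix restatement of the claim it simply \emph{assumes} that the HJB PDE satisfies a comparison principle, which is the standard hypothesis and follows from the Lipschitz conditions on $b,\sigma$ and compactness of $A$ via the Crandall--Ishii--Lions machinery. Your assertion that ties in the $\arg\max$ make the Hamiltonian ``degenerate'' and spoil uniqueness is not correct: the viscosity comparison principle for HJB equations is insensitive to whether the supremum over $a$ is uniquely attained, since only the value of $H(s,p,M)$ enters, not the maximizer. The doubling-of-variables penalization closes from continuity of $H$ in $(s,p,M)$, not from any reward gap. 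So this part of your argument is an unnecessary (and mis-aimed) embellishment; the paper sidesteps it by hypothesis.

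A smaller technical point: in your consistency display the term $\tfrac{1}{\delta}r(s,\pi(s))$ blows up as $\delta\to 0$, so the scaling you wrote cannot be right. The paper's appendix uses the operator $\mathcal{T}^{\delta}[u](s)=\max_a\{r(s,a)+\tfrac{1}{\delta}\sum_{s'}P^{\delta}(s'|s,a)[u(s')-u(s)]\}$, for which the reward sits outside the $1/\delta$ and the consistency residual is $O(\delta)$ directly. You should align your computation with whichever scaling you intend to take to the limit.
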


\paragraph{Why is the solution to the HJB equation the optimal procedure-based rewards?}
In the discrete setting, let \(V^\delta\) be the unique fixed point of the Bellman operator \(T^\delta\) on a mesh of resolution \(\delta\). 
Under the Barles–Souganidis conditions of monotonicity, stability, and consistency (Appendix~\ref{app:barles}), one shows
\[
\|V^\delta - \overline V\|_\infty \;\xrightarrow[\delta\to0]{}\;0
\]
where \(\overline V\) is the viscosity solution of the HJB PDE (eq.~\eqref{eq:hjb}) and its discrete schemes converge uniformly.
Denote the greedy discrete policy
\[
\pi^\delta(s)\;=\;\arg\max_{a\in A}\left\{r(s,a)+\gamma\sum_{s'}P^\delta(s'\mid s,a)\,V^\delta(s')\right\}
\]
Suppose that the rewards satisfy a \((\Delta,\epsilon)\)-gap—so that any suboptimal action is at least \(\Delta\) worse.
Then once \(\|V^\delta-\overline V\|_\infty<\tfrac\Delta2\), the discrete argmax cannot flip and \(\pi^\delta=\pi^*\) everywhere.

More generally, even without a gap, a uniform sup‑norm error \(\|V^\delta-\overline V\|_\infty\le\epsilon\) controls policy regret. 
By the performance‑difference lemma~\cite{kakade2002approximately},
\[
J(\pi^*) - J(\pi^\delta)
\;=\;
\Expect_{s\sim d_{\pi^*}}\left[\overline V(s)-V^\delta(s)\right]
\;+\;\Expect_{s\sim d_{\pi^\delta}}\left[V^\delta(s)-\overline V(s)\right]
\;\le\;2\,\epsilon
\]
Thus uniform convergence of the discrete solver to the continuous limit \(\overline V\) immediately yields a \(2\epsilon\) bound on policy suboptimality, justifying the use of \(\overline V\) as the correct target for learning optimal procedure‑based rewards.

\paragraph{Hitting Time for Forward Equations.}
In order to quantify when a discrete solution can approximate the viscosity solution up to an error $\epsilon$, we consider hitting times.
We define the \emph{hitting time} for the forward iteration as the number of steps needed to reach a desired accuracy. 
Let
\[
\tau_\epsilon^+ = \min\left\{ k: \|V^\delta_k - \overline{V}\|_{L^\infty(S)} \le \epsilon \right\}
\]
be the first forward iteration in which the sup-norm error between the discrete Bellman solution and the true value function falls below a prescribed tolerance $\epsilon$. 
This notion serves as a measure of the computational effort required for the forward iteration to propagate the value information efficiently through the state space.

\subsection{Backwards Bellman Value Iteration}
We develop the backward propagation framework for the value function by considering the nonlinear Feynman--Kac representation of the Hamilton--Jacobi--Bellman (HJB) equation. 
In contrast to the forward Bellman iteration, which propagates the value from the query forward, the backward approach exploits the terminal reward and iteratively ``pulls'' this information back toward the initial state. 
This backward procedure is naturally cast in the framework of backward stochastic differential equations (BSDEs).

\paragraph{Nonlinear Feynman--Kac Representation.}  
Let $(S,A,P,b,\sigma,\gamma)$ be the controlled diffusion limit of our MDP as defined in the previous section.
Under Assumption~\ref{assm:forward-assumption}, the value function that solves~\eqref{eq:hjb} is the unique bounded, uniformly continuous viscosity solution.
By the nonlinear Feynman--Kac formula (\eg~\cite{ElKarouiPengQuenez1997,PardouxPeng1990}), for any terminal time $T$ and terminal reward $g(X_T)$, the process $(Y_t,Z_t)$ solving the backward SDE  
\begin{equation*} 
Y_t = g(X_T) + \int_t^T f\left(X_s,Y_s,Z_s,\Gamma_s\right)\,ds - \int_t^T Z_s\cdot dW_s
\end{equation*}  
with $Z_s$ and $\Gamma_s$ denoting the first and second‐order viscosity terms, respectively (see~\cite[\S2]{crandall1992user}), recovers $Y_t = \overline{V}(t,X_t)$.
One can think of the first and second-order viscosity terms as upper and/or lower bounds on the gradient and hessian, much like the subgradient gives a bound on possible gradient steps.
Finally, we note that the full driver (\ie~transition function) is: 
\begin{equation*} 
f(x,y,z,\Gamma) = \sup_{a\in A}\left\{\,r(x,a) + b(x,a)\cdot z + \tfrac12\,\Tr\left[\sigma(x,a)\sigma(x,a)^{T}\,\Gamma\right]\right\} - \gamma\,y
\end{equation*}  
One can view this as continuous analogue of the transition kernel and policy for updating state evolution.
Interested readers can find more details in~\cite{ElKarouiPengQuenez1997,PardouxPeng1990,peng1992nonlinear}

\paragraph{Numerical Approximation of Backward SDEs.}  
Discretize the interval $[0,T]$ by $0=t_0<t_1<\cdots<t_N=T$ with a uniform step $\delta=T/N$.
Given an Euler--Maruyama approximation~\cite[\S5]{gobet2016monte}  
\[
   X_{k+1} = X_k + b(X_k,a_k)\,\delta + \sigma(X_k,a_k)\,\Delta W_{k+1}
\]  
we define the backward Euler operator on a grid function $V: S^{\delta} \rightarrow \reals$ by  
\begin{equation*} 
\mathcal{T}_{\mathrm{BSDE}}^{\delta}[V](s) = \Expect_{\Delta W}\left[\,V\left(X_{k+1}\right) + \delta\,f\left(s,\,V(X_{k+1}), \Expect_{\Delta W_{k+1}}[V(X_{k+1})\,\Delta W_{k+1}]/\delta,\,\Gamma\right)\right]
\end{equation*}  
Iterating backward with terminal $Y^\delta_N = g$  
\begin{align*}
Y^\delta_k = \mathcal{T}_{\mathrm{BSDE}}^{\delta}[Y^\delta_{k+1}] && k = N-1,\ldots,0
\end{align*}
yields a monotone, stable scheme converging to the viscosity solution of (\ref{eq:hjb}) as $\delta\to0$~\cite{BarlesSouganidis1991,PardouxPeng1990}.  

\paragraph{Hitting Time for Backwards Equations.}
We define \emph{hitting time} for the backward iteration analogously to the forward case. 
Let
\[
\tau_\epsilon^- = \min\left\{ k: \|Y^\delta_k - \overline{Y}_k\|_{L^\infty(S)} \le \epsilon \right\},
\]
be the first backward iteration when the sup–norm error between the discrete BSDE solution and the true value function falls below a prescribed tolerance $\epsilon$. 
This notion serves as a measure of the computational effort required for the backward iteration to propagate the terminal reward information efficiently through the state space.

\paragraph{Exploitation versus Exploration.}
Forward Bellman iteration requires extensive exploration due to initially weak gradient information (\ie~there are no rewards near the initial query), with diffusion dominating and causing wide trajectory exploration similar to multi-armed bandit problems.
However, backward iteration starts from strong terminal reward signals, exploiting this information to pull value estimates backward with less exploration, potentially achieving lower hitting times when terminal conditions are accurately approximated.
There is a critical balance (which we quantify in~\S\ref{sec:results}): higher backward rewards can intensify propagation and encourage exploration to reconcile large gradients, while excessively low rewards risk getting stuck in suboptimal regions, making appropriate reward scaling essential for effectively balancing exploitation and exploration.

\paragraph{Comparison of Forward and Backward Discrete Solutions.}
Finally, we observe that discretization errors differ between schemes: forward iteration accumulates error through diffusion-dominated propagation, while backward iteration anchors approximation with strong terminal signals.
Backward iteration achieves lower error when the terminal conditions are well approximated in terms of grid error $\delta$ and metric complexity $\gamma_2(\mathrm{supp}(r), d)$. 
Both approaches incur local errors of the order $\delta$, but backward updates yield faster convergence due to stronger exploitation, 
highlighting the importance of proper reward scaling with metric complexity.

\section{Covering Numbers and Generic Chaining}\label{sec:covering-numbers}
In order to estimate a rate of convergence for a discrete Bellman operator to its scaling limit, one needs to consider covering numbers.
Covering numbers provide a fundamental measure of the ``size'' of a space in terms of its approximability by finite sets.

\paragraph{Covering Numbers.}
For a metric space $(T,d)$, the $\epsilon$-covering number $\mathcal{N}(T,d,\epsilon)$ is the minimum number of balls of radius $\epsilon$ needed to cover $T$. 
Formally, $\mathcal{N}(T,d,\epsilon) = \min\{n : \exists t_1,\ldots,t_n \in T \text{ such that } T \subseteq \cup_{i=1}^n B(t_i,\epsilon)\}$, where $B(t_i,\epsilon) = \{t \in T : d(t,t_i) \leq \epsilon\}$ is the closed ball of radius $\epsilon$ centered on $t_i$.
For a class of functions $\mathcal{F}$ in the domain $X$ with metric $d$, the covering number $\mathcal{N}(\mathcal{F},d,\epsilon)$ represents the minimum number of functions needed to approximate any $f \in \mathcal{F}$ within $\epsilon$ accuracy under metric $d$. 
Formally:
\[
\mathcal{N}(\mathcal{F},d,\epsilon) = \min\left\{n : \exists f_1,\ldots,f_n \in \mathcal{F} \text{ such that } \min_{i \leq n} d(f,f_i) \leq \epsilon \text{ for all } f \in \mathcal{F}\right\}
\]

Covering numbers capture how efficiently a continuous space can be discretized, providing bounds on sample complexity and generalization error in learning theory. 
However, they often yield suboptimal bounds because they treat all scales equally.

\paragraph{Talagrand's $\gamma_2$ Functional.} A more refined complexity measure is Talagrand's $\gamma_2$ functional, which accounts for the multi-scale nature of approximation. 
For a metric space $(T,d)$, $\gamma_2(T,d)$ is defined as:
\[
\gamma_2(T,d) = \inf_{(T_n)} \sup_{t \in T} \sum_{n=0}^{\infty} 2^{n/2} \cdot d(t, T_n)
\]
where the infimum is taken over all sequences of sets $T_n \subset T$ with $|T_0| = 1$ and $|T_n| \leq 2^{2^n}$ for $n \geq 1$, and $d(t,T_n) = \inf_{s \in T_n} d(t,s)$.
One reason for using $\gamma_2$ is that it is a more refined complexity measure and often provides tight upper and lower bounds.
For more details, see~\cite{Talagrand2005, nelson2016chaining}.

\paragraph{Properties of $\gamma_2$.}
The $\gamma_2$ functional satisfies several useful properties \cite{Talagrand2005}:
\begin{enumerate}
  \item For Cartesian products: $\gamma_2(S \times T, d_S \oplus d_T) \asymp \gamma_2(S,d_S) + \gamma_2(T,d_T)$
  \item For Lipschitz functions: $\gamma_2(\text{Lip}_L(X), \|\cdot\|_{\infty}) \asymp \gamma_2(X,d) + \log L$, where $\text{Lip}_L(X)$ is the class of $L$-Lipschitz functions on $X$
  \item For finite-dimensional spaces: $\gamma_2(B_d, \|\cdot\|_2) \asymp \sqrt{d}$, where $B_d$ is the unit ball in $\reals^d$
  \item Dudley's integral bound: $\gamma_2(T,d) \lesssim \int_0^{\text{diam}(T)} \sqrt{\log \mathcal{N}(T,d,\epsilon)} \, d\epsilon$
  \item Majorizing measures theorem: $\gamma_2(T,d) \asymp \Expect\sup_{t \in T} X_t$, where $(X_t)_{t \in T}$ is a centered Gaussian process with $\Expect(X_s - X_t)^2 = d^2(s,t)$, establishing the tight connection between $\gamma_2$ and the expected supremum of Gaussian processes
\end{enumerate}

\paragraph{Complexity and Applications to Language Models.}
The $\gamma_2$ functional effectively analyzes continuous spaces in reinforcement learning, with Dudley's integral bound connecting to algorithms and majorizing measures providing tight learning bounds.
Spaces with low $\gamma_2$ values allow for an efficient approximation: $\gamma_2(T,d) \asymp \sqrt{d}$ exhibits the dimensionality curse, while $\gamma_2(T,d) \asymp \log d$ often allows dimension-independent rates.
For language models, $\gamma_2$ quantifies complexity through various established frameworks: Bartlett et al.~\cite{bartlett2017spectrally} derived generalization bounds, Golowich et al.~\cite{golowich2018size} used Frobenius norms, and Maurer et al.~\cite{maurer2016benefit} connected sample complexity to representation spaces.
In our setting with prompts as matrices in $\reals^{m \times d}$ and context windows as tensors in $\reals^{m \times d \times T}$, $\gamma_2$ naturally quantifies complexity and yields tight bounds on reinforcement learning performance.

\subsection{Bellman Iteration Convergence Rates}\label{sec:convergence-rates}
We will show how covering numbers and the Talagrand functional are closely related to how fast Bellman iteration converges.
In particular, we will show that the level of discretization of the transition kernel needed for Bellman iteration to be a $\epsilon$ close solution of the HJB equation requires a covering of radius $\Omega(\gamma_2^2(S, d))$.
As we shall see in the subsequent section, this implies provides bounds on how fast the policy-induced random walk converges to the HJB solution.

\paragraph{Controlled Diffusion.}
Suppose that we estimate a value function from a controlled diffusion process, eq.~\eqref{eq:controlled-diffusion}, and that we have a $\delta$-approximate solution $V^{\delta}(s)$ to the value function for $\delta > 0$
We want to find the maximum $\delta$ such that $\max_{s \in S} |V^{\delta}(s) - V(s)| \leq \epsilon$:
\begin{claim}\label{claim:chaining-error}
 Suppose that for each \(\delta>0\) and any \(\delta\)-net $S_{\delta}$, we have a discrete value function \(V^\delta\) satisfying \(\displaystyle\sup_{s_i\in\mathcal S_\delta}|V^\delta(s_i)-V(s_i)|\le C\,\delta\) with both \(V\) and \(V^\delta\) are \(L\)–Lipschitz on \(S\).
  \[
  \sup_{s\in S}\left|V^{\delta}(s)-V(s)\right| \leq C\delta + 2 L\sqrt{\delta} \gamma_2(S, d)
  \]
  Optimizing for the right-hand side being less than \(\epsilon\) gives, if \(\epsilon \ll \frac{(2L\gamma_2(S, d))^2}{C}\), \(\delta^{\star}(\epsilon) = \frac{\epsilon^2}{4L^2 \gamma_2^2(S,d)}\)
\end{claim}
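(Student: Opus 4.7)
The plan is to apply generic chaining to the pointwise error process $e(s) := V^\delta(s) - V(s)$. The hypothesis $\sup_{s_i \in \mathcal{S}_\delta}|e(s_i)| \le C\delta$ pins the error down on an anchor net at scale $\delta$, while the joint $L$-Lipschitz assumption on $V$ and $V^\delta$, combined with the $\sqrt{\delta}$ Brownian scaling inherited from the Euler--Maruyama approximation of the controlled diffusion~\eqref{eq:controlled-diffusion}, gives sub-Gaussian control of increments between successive scales. Aggregating these increments through Talagrand's $\gamma_2$ functional is what converts an otherwise loose Lipschitz-plus-net bound into the sharper chaining term $2L\sqrt{\delta}\,\gamma_2(S,d)$.

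First, I would fix an admissible sequence $(T_n)_{n\ge 0}$ with $|T_0|=1$ and $|T_n|\le 2^{2^n}$ that realizes $\gamma_2(S,d)$ up to constants, and let $\pi_n: S \to T_n$ denote a nearest-point projection. I choose the coarsest index $n_0$ so that $T_{n_0}$ is a $\delta$-net, and unfold the telescoping identity
\begin{equation*}
e(s) \;=\; e(\pi_{n_0}(s)) \;+\; \sum_{n\ge n_0}\bigl[e(\pi_{n+1}(s)) - e(\pi_n(s))\bigr],
\end{equation*}
which converges pointwise since $e$ is $2L$-Lipschitz and $d(s,\pi_n(s))\to 0$. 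The anchor $|e(\pi_{n_0}(s))|$ is bounded by $C\delta$ directly from the hypothesis, since $\pi_{n_0}(s) \in T_{n_0} \subseteq \mathcal{S}_\delta$ by construction.

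Second, I would bound each chain increment in a sub-Gaussian sense. The raw Lipschitz bound $|e(\pi_{n+1}(s)) - e(\pi_n(s))| \le 2L\, d(\pi_{n+1}(s),\pi_n(s))$ is not sharp enough on its own. The refinement is to recognize $e$ at the $n$-th scale as the discretization-error functional of the Euler--Maruyama approximation of~\eqref{eq:controlled-diffusion}, whose Brownian increments have RMS amplitude $\sqrt{\delta}$ under Assumption~\ref{assm:forward-assumption}. This places the increments in a sub-Gaussian regime with variance proxy of order $L^2\delta\,d(\pi_{n+1}(s),\pi_n(s))^2$, so a union bound over the at most $2^{2^{n+1}}$ pairs at level $n$, weighted by $2^{n/2}$ to match the $\gamma_2$ scale, collapses the tail sum to $2L\sqrt{\delta}\sum_{n\ge n_0} 2^{n/2} d(s,T_n)$. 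By the majorizing-measures characterization (property 5 of $\gamma_2$ in Section~\ref{sec:covering-numbers}), this is $\asymp 2L\sqrt{\delta}\,\gamma_2(S,d)$.

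Combining anchor and chain bounds yields the claimed estimate. The optimization follows by setting $C\delta + 2L\sqrt{\delta}\,\gamma_2(S,d) \le \epsilon$ and balancing: when $\epsilon \ll (2L\gamma_2(S,d))^2/C$ the chaining term dominates, and forcing $2L\sqrt{\delta}\,\gamma_2(S,d) \le \epsilon$ gives $\delta^\star(\epsilon) = \epsilon^2/(4L^2\gamma_2^2(S,d))$. The main obstacle is the sub-Gaussian upgrade in the second step: the $L$-Lipschitz hypothesis by itself only delivers the weaker $C\delta + 2L\delta$ bound, and genuinely extracting the $\sqrt{\delta}$ factor requires leveraging the stochastic structure of the Euler--Maruyama scheme rather than treating $V^\delta$ as a deterministic Lipschitz object. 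Any attempt to prove the claim without this stochastic upgrade would fail to expose the low-complexity advantage captured by $\gamma_2$.
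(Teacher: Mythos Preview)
Your proposal diverges from the paper's proof in a fundamental way: the paper's argument is purely deterministic, whereas you import stochastic machinery not present in the claim's hypotheses.

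The paper fixes an admissible nested-partition sequence $\{\mathcal{A}_n\}$, picks chain representatives $s_0,\ldots,s_{N-1}$ all lying in the grid $\mathcal{S}_\delta$, stops the chain at the level $N$ where cell diameters reach scale $\sqrt{\delta}$, and telescopes $e(s)=e(s_0)+\sum_{k<N}[e(s_{k+1})-e(s_k)]+[e(s)-e(s_N)]$. Every increment is controlled by the plain Lipschitz estimate $|e(s_{k+1})-e(s_k)|\le 2L\,\mathrm{diam}(A_k(s))$; the anchor term uses $|e(s_0)|\le C\delta$ since $s_0\in\mathcal{S}_\delta$; the last jump is $O(L\sqrt{\delta})$ from the cutoff. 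Taking $\sup_s$ and then the infimum over admissible partitions is what produces the $\gamma_2(S,d)$ factor. There is no probability, no sub-Gaussian tail, and no union bound anywhere.

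Your second step instead treats $e$ as a random process with sub-Gaussian increments ``inherited from Euler--Maruyama Brownian noise,'' runs a union bound over $2^{2^{n+1}}$ chain pairs, and invokes the majorizing-measures theorem. None of this is licensed by the statement: $V$ and $V^\delta$ are given as deterministic $L$-Lipschitz functions, so there is no probability space in which a ``variance proxy $L^2\delta\,d(\cdot,\cdot)^2$'' or a union bound has meaning. Your assertion that ``any attempt to prove the claim without this stochastic upgrade would fail'' is directly contradicted by the paper's deterministic proof. You correctly flag that the naive one-step Lipschitz bound gives only $C\delta+2L\delta$, but the paper's resolution is not stochastic---it is the choice of the $\sqrt{\delta}$ cutoff scale in the chain, together with the infimum over admissible partitions, that converts the diameter sum into the form $\sqrt{\delta}\,\gamma_2(S,d)$.

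There is also a structural mismatch in the chaining itself: you anchor at the level $n_0$ where $T_{n_0}$ is already a $\delta$-net and chain toward \emph{finer} scales $n>n_0$, so the resulting sum $\sum_{n\ge n_0}2^{n/2}d(s,T_n)$ is only the tail of the $\gamma_2$ series and carries no natural $\sqrt{\delta}$ prefactor. The paper runs the chain in the opposite direction, from the coarsest cell down to the $\sqrt{\delta}$-scale cutoff, with all intermediate representatives chosen inside $\mathcal{S}_\delta$.
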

\noindent The constant \(C\) here is the consistency error from the Bellman operator (see Appendix~\ref{app:barles}) and as such, the hypothesis that this is bounded holds if Claim~\ref{claim:barles} holds.
This claim implies that the fineness that the grid needs to be in order to have a bounded error is dominated by \(\gamma_2^2(S,d)\).
In particular, this effectively states that our Markov operator needs to be discretized to \(\Omega(\frac{1}{\gamma_2^2(S, d)})\) in order for the discrete and continuous operators to match up to \(\epsilon\).

\paragraph{Bellman Operator.}
We show a similar result for the Bellman operator.
\begin{claim}\label{claim:operator-error}
  Let \(u: S\to\reals\) be \(L_{u}\)–Lipschitz. 
  For each mesh‐size \(\delta>0\), let \(S_{\delta}\subset S\) be a \(\delta\)–net.
  Define:
  \[
  \mathcal{T}^{\delta}[u](s_{i}) = \max_{a\in A}\left\{r(s_{i},a) +\tfrac1\delta\sum_{s_{j}\in S_{\delta}} P^{\delta}(s_{j}\!\mid s_{i},a)\left[u(s_{j})-u(s_{i})\right]\right\}
  \]
  extending this to all \(s\in S\) by choosing any nearest grid‐point \(s_{i}\in S_{\delta}\), and the continuous operator
  \[
  \overline{\mathcal{T}}\,u(s) = \beta\,u(s) -\sup_{a\in A}\left\{\,r(s,a)+\mathcal{L}^{a}u(s)\right\} \quad
  \mathcal{L}^{a}u = b(s,a)\cdot\nabla u(s) + \tfrac12\Tr\left[\sigma\sigma^{T}(s,a)\,\nabla^{2}u(s)\right]
  \]
  Assume the pointwise consistency bound
  \(\sup_{s_{i}\in S_{\delta}}|\mathcal{T}^{\delta}[u](s_{i})-\overline{\mathcal{T}}u(s_{i})|\le C\,\delta\).
  Then
  \[
  \sup_{s\in S}\left|\mathcal{T}^{\delta}[u](s)-\overline{\mathcal{T}}u(s)\right|
  \leq C\,\delta + 2L_{u}\gamma_{2}(S,d)\,\sqrt{\delta}
  \]
\end{claim}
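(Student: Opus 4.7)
The plan is to parallel the chaining argument outlined for Claim~\ref{claim:chaining-error}. Fix $s \in S$ and let $\pi(s) \in S_\delta$ be the nearest grid point, so $d(s, \pi(s)) \leq \delta$. Because the discrete operator is defined by snapping $s$ to its nearest grid point, $\mathcal{T}^\delta[u]$ is piecewise constant on Voronoi cells, giving $\mathcal{T}^\delta[u](s) = \mathcal{T}^\delta[u](\pi(s))$. This reduces the sup-norm error to
\begin{equation*}
|\mathcal{T}^\delta[u](s) - \overline{\mathcal{T}} u(s)| \leq |\mathcal{T}^\delta[u](\pi(s)) - \overline{\mathcal{T}} u(\pi(s))| + |\overline{\mathcal{T}} u(\pi(s)) - \overline{\mathcal{T}} u(s)|
\end{equation*}
The first summand is bounded by $C\delta$ via the consistency hypothesis applied at $\pi(s)\in S_\delta$, so the remaining task is to bound $\sup_{s\in S}|\overline{\mathcal{T}} u(s)-\overline{\mathcal{T}} u(\pi(s))|$ by $2L_u\sqrt{\delta}\,\gamma_2(S,d)$.

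Next, I would establish a sub-Gaussian–style increment bound for $\overline{\mathcal{T}} u$ carrying a $\sqrt{\delta}$ factor. Interpreting $\overline{\mathcal{T}}$ through the infinitesimal generator of the controlled diffusion~\eqref{eq:controlled-diffusion}, the short-time semigroup representation gives $\overline{\mathcal{T}} u(s) = \beta u(s) - \sup_a\{r(s,a)+\tfrac{1}{\delta}(\Expect^{a,s}[u(X_\delta)] - u(s))\} + o(1)$, where $X_\delta$ is the diffusion at time $\delta$ started at $s$. Coupling the diffusions started at $s$ and $s'$ and using the $L_u$-Lipschitzness of $u$, together with Assumption~\ref{assm:forward-assumption} to control the coupling variance of $b$ and $\sigma$, yields an increment bound of the form $|\overline{\mathcal{T}} u(s)-\overline{\mathcal{T}} u(s')|\lesssim L_u\sqrt{\delta}\,\omega(d(s,s'))$ with sub-Gaussian modulus $\omega$ arising from the Brownian component of the SDE. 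Uniformity in $a$ is preserved because the Lipschitz constants of $r,b,\sigma$ are uniform.

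Finally, I would feed this increment bound into a generic chaining telescoping. Choose an admissible sequence $(T_n)_{n\geq 0}$ with $|T_0|=1$, $|T_n|\leq 2^{2^n}$ realizing $\sup_s\sum_n 2^{n/2}d(s,T_n)\leq 2\gamma_2(S,d)$, and let $\pi_n:S\to T_n$ denote the nearest-point projections. Telescope
\begin{equation*}
\overline{\mathcal{T}} u(s) - \overline{\mathcal{T}} u(\pi(s)) = \sum_{n\geq 0}\left(\overline{\mathcal{T}} u(\pi_{n+1}(s)) - \overline{\mathcal{T}} u(\pi_n(s))\right)
\end{equation*}
apply the sub-Gaussian increment bound at each scale, and use the $2^{n/2}$ weights to absorb the union bound across $|T_n|\leq 2^{2^n}$ candidate pairs at level $n$. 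Summing the resulting Talagrand series produces $\sup_s|\overline{\mathcal{T}} u(s)-\overline{\mathcal{T}} u(\pi(s))|\leq 2L_u\sqrt{\delta}\,\gamma_2(S,d)$, which combined with the $C\delta$ grid-point term gives the stated bound.

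The main obstacle is the second step: extracting the $\sqrt{\delta}$ factor in the increment bound for $\overline{\mathcal{T}} u$. A naive $C^2$-Lipschitz argument applied directly to the HJB operator yields only an $O(\delta)$ oscillation via the triangle inequality, which would give a vacuous $C\delta + (\text{const})\delta$ bound rather than the sharper $\sqrt{\delta}\,\gamma_2$ scaling. The $\sqrt{\delta}$ has to be routed through the rescaled one-step generator, where a synchronous coupling of the controlled diffusions started at $s$ and $s'$ contributes a Brownian term of $L^2$-norm $\sqrt{\delta}\,\|\sigma\|_{\mathrm{Lip}}\,d(s,s')$ — precisely the scale that pairs with the $2^{n/2}$ weights in the chaining functional. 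Once this probabilistic modulus is secured, the remainder is a standard Talagrand chaining calculation, and the hypothesized pointwise consistency $\sup_{s_i\in S_\delta}|\mathcal{T}^\delta[u](s_i)-\overline{\mathcal{T}} u(s_i)|\leq C\delta$ (which holds under the Barles–Souganidis hypotheses of Claim~\ref{claim:barles}) closes the bound.
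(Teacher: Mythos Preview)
Your decomposition into the on-grid consistency term plus an off-grid oscillation term is the same first move the paper makes. The divergence is in how the $\sqrt{\delta}$ factor enters the second term, and this is where your proposal has a genuine gap.

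You want an increment bound of the form $|\overline{\mathcal{T}}u(s)-\overline{\mathcal{T}}u(s')|\lesssim L_u\sqrt{\delta}\,\omega(d(s,s'))$ and then chain over it. But $\overline{\mathcal{T}}u$ is the \emph{continuous} HJB operator; it has no dependence on $\delta$ whatsoever. A $\delta$-free quantity cannot satisfy a $\sqrt{\delta}$-scaled increment bound except trivially, so no amount of diffusion coupling will produce that scaling. Your attempt to route through the short-time semigroup $\tfrac{1}{\delta}(\Expect^{a,s}[u(X_\delta)]-u(s))$ does not help: the $1/\delta$ prefactor cancels the $O(\delta)$ drift and the $O(\delta)$ second moment of the Brownian increment, leaving an $O(1)$ object (the generator) whose spatial increments are again $\delta$-independent. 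The ``Brownian term of $L^2$-norm $\sqrt{\delta}\,\|\sigma\|_{\mathrm{Lip}}\,d(s,s')$'' you invoke gets divided by $\delta$, not multiplied by it.

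The paper obtains the $\sqrt{\delta}$ from a completely different source: the \emph{chaining scale}, not the increment modulus. It takes an admissible partition sequence $\{\mathcal{A}_n\}$ for $\gamma_2(S,d)$, truncates at the level $N$ where $\mathrm{diam}(A_N(s))\approx\sqrt{\delta}$, and telescopes both $\mathcal{T}^\delta[u]$ and $\overline{\mathcal{T}}u$ along the chain $s_0,\dots,s_N$ using plain $L_u$-Lipschitzness of each operator. The sum $\sum_{k<N}\mathrm{diam}(A_k(s))$ plus the final $\sqrt{\delta}$ remainder is what gives $\gamma_2(S,d)\sqrt{\delta}$; no sub-Gaussian tail or probabilistic coupling is used at all. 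In short, the $\sqrt{\delta}$ is a chaining-truncation artifact, not a diffusion-noise artifact, and your second step needs to be replaced by that deterministic scaling argument.
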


\section{Formal Results}\label{sec:results}
We establish bounds for forward and backward Bellman iterations, showing that forward iteration hitting times have a $\Theta(\gamma_2^2(S,d))$ gap between upper and lower bounds.
For backward iteration, we prove that while error rates remain similar, the backward hitting time matches the forward iteration's lower bound asymptotically.
This yields a key insight: the backward-to-forward hitting time ratio is bounded by $O\left(\frac{\gamma_2(\mathrm{supp}(r), d)}{\gamma_2(S,d)}\right)$, showing that sparse rewards enable significantly faster convergence through backward iteration.

Our analysis of dynamic regret demonstrates that iterations achieve $\tilde{O}(\sqrt{T})$ regret without gap conditions, while $(\Delta,\epsilon)$-gapped rewards enable $O(\log T)$ regret.
These results form the foundation for our BARS algorithm (\S\ref{sec:BARS}), which dynamically scales sparse rewards to optimize convergence with bounded regret.

Figure~\ref{fig:iteration-comparison} illustrates how stronger signals from sparse rewards give backward search a higher success probability, and shows Bellman iteration approximating a viscosity solution of a limiting HJB equation.

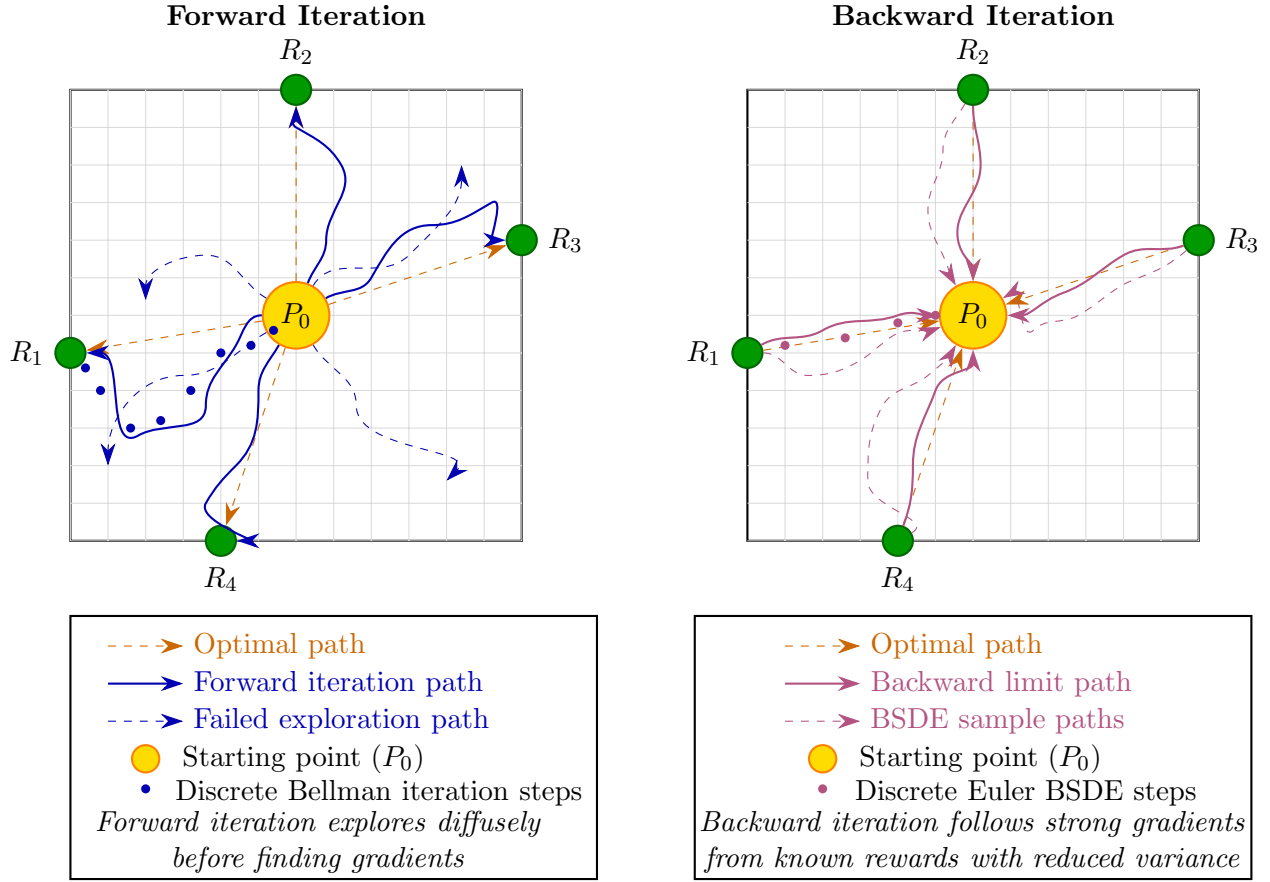
\begin{figure}
  \centering
    \begin{tikzpicture}[
      forward/.style={thick, blue!70!black, -{Stealth[length=3mm, width=2mm]}},
      forward-failed/.style={dashed, blue!70!black, -{Stealth[length=3mm, width=2mm]}},
      backward/.style={thick, magenta!70!black, -{Stealth[length=3mm, width=2mm]}},
      backward-var/.style={dashed, magenta!70!black, -{Stealth[length=3mm, width=2mm]}},
      optimal/.style={dashed, orange!80!black, -{Stealth[length=3mm, width=2mm]}},
      grid/.style={step=0.5, gray!30, very thin},
      subtitles/.style={font=\normalsize\bfseries}
  ]
  
  \draw[thick] (0, 0) rectangle (6, 6);
  \draw[grid] (0, 0) grid (6, 6);
  
  \node[subtitles] at (3, 7) {Forward Iteration};
  
  \node[circle, fill=green!60!black, draw=green!40!black, thick, minimum size=0.4cm, label=left:$R_1$] (R1-f) at (0, 2.5) {};
  \node[circle, fill=green!60!black, draw=green!40!black, thick, minimum size=0.4cm, label=above:$R_2$] (R2-f) at (3, 6) {};
  \node[circle, fill=green!60!black, draw=green!40!black, thick, minimum size=0.4cm, label=right:$R_3$] (R3-f) at (6, 4) {};
  \node[circle, fill=green!60!black, draw=green!40!black, thick, minimum size=0.4cm, label=below:$R_4$] (R4-f) at (2, 0) {};
  
  \node[circle, fill=yellow!80!orange, draw=orange, thick, minimum size=0.5cm] (P0-f) at (3, 3) {$P_0$};
  
  \draw[optimal] (P0-f) -- (R1-f);
  \draw[optimal] (P0-f) -- (R2-f);
  \draw[optimal] (P0-f) -- (R3-f);
  \draw[optimal] (P0-f) -- (R4-f);
  
  \draw[forward] (P0-f) to[out=180, in=60] (2.2, 2.6) to[out=240, in=90] (1.8, 1.8) 
      to[out=270, in=30] (0.9, 1.4) to[out=210, in=0] (R1-f);
  
  \draw[forward] (P0-f) to[out=70, in=270] (3.4, 4.0) to[out=90, in=300] (3.6, 5.0) 
      to[out=120, in=330] (3.0, 5.5) to[out=150, in=270] (R2-f);
  
  \draw[forward] (P0-f) to[out=30, in=240] (4.0, 3.5) to[out=60, in=180] (4.8, 4.2) 
      to[out=0, in=210] (5.6, 4.5) to[out=30, in=180] (R3-f);
  
  \draw[forward] (P0-f) to[out=240, in=90] (2.5, 2.0) to[out=270, in=45] (2.2, 1.0) 
      to[out=225, in=60] (1.8, 0.5) to[out=240, in=0] (R4-f);
  
  \draw[forward-failed] (P0-f) to[out=150, in=0] (1.8, 3.8) to[out=180, in=90] (1.0, 3.2);
  \draw[forward-failed] (P0-f) to[out=60, in=225] (4.5, 3.8) to[out=45, in=270] (5.2, 5.0);
  \draw[forward-failed] (P0-f) to[out=300, in=90] (3.8, 1.8) to[out=270, in=45] (5.0, 0.8);
  \draw[forward-failed] (P0-f) to[out=210, in=0] (1.5, 2.2) to[out=180, in=90] (0.5, 1.0);
  
  \foreach \i/\x/\y in {1/2.7/2.8, 2/2.4/2.6, 3/2.0/2.5, 4/1.6/2.0, 5/1.2/1.6, 6/0.8/1.5, 7/0.4/2.0, 8/0.2/2.3} {
      \fill[blue!70!black] (\x,\y) circle (0.06);
  }
  
  \draw[thick] (9, 0) rectangle (15, 6);
  \draw[grid] (9, 0) grid (15, 6);
  
  \node[subtitles] at (12, 7) {Backward Iteration};
  
  \node[circle, fill=green!60!black, draw=green!40!black, thick, minimum size=0.4cm, label=left:$R_1$] (R1-b) at (9, 2.5) {};
  \node[circle, fill=green!60!black, draw=green!40!black, thick, minimum size=0.4cm, label=above:$R_2$] (R2-b) at (12, 6) {};
  \node[circle, fill=green!60!black, draw=green!40!black, thick, minimum size=0.4cm, label=right:$R_3$] (R3-b) at (15, 4) {};
  \node[circle, fill=green!60!black, draw=green!40!black, thick, minimum size=0.4cm, label=below:$R_4$] (R4-b) at (11, 0) {};
  
  \node[circle, fill=yellow!80!orange, draw=orange, thick, minimum size=0.5cm] (P0-b) at (12, 3) {$P_0$};
  
  \draw[optimal] (R1-b) -- (P0-b);
  \draw[optimal] (R2-b) -- (P0-b);
  \draw[optimal] (R3-b) -- (P0-b);
  \draw[optimal] (R4-b) -- (P0-b);
  
  \draw[backward] (R1-b) to[out=30, in=210] (10.0, 2.8) to[out=30, in=200] (11.0, 3.0) to[out=20, in=180] (P0-b);
  
  \draw[backward] (R2-b) to[out=270, in=90] (12.1, 5.0) to[out=270, in=100] (11.8, 4.0) to[out=280, in=90] (P0-b);
  
  \draw[backward] (R3-b) to[out=200, in=30] (14.0, 3.8) to[out=210, in=20] (13.2, 3.3) to[out=200, in=0] (P0-b);
  
  \draw[backward] (R4-b) to[out=70, in=270] (11.2, 1.0) to[out=90, in=250] (11.5, 2.0) to[out=70, in=270] (P0-b);
  
  \draw[backward-var] (R1-b) to[out=0, in=180] (9.8, 2.2) to[out=0, in=210] (10.5, 2.4)
      to[out=30, in=180] (11.2, 2.8) to[out=0, in=200] (P0-b);
      
  \draw[backward-var] (R2-b) to[out=240, in=90] (11.6, 5.2) to[out=270, in=90] (11.4, 4.4)
      to[out=270, in=100] (11.7, 3.5) to[out=280, in=120] (P0-b);
      
  \draw[backward-var] (R3-b) to[out=220, in=20] (14.2, 3.5) to[out=200, in=30] (13.4, 3.0)
      to[out=210, in=45] (12.8, 2.8) to[out=225, in=30] (P0-b);
      
  \draw[backward-var] (R4-b) to[out=30, in=270] (10.6, 0.8) to[out=90, in=240] (10.8, 1.8)
      to[out=60, in=220] (11.4, 2.2) to[out=40, in=240] (P0-b);
  
  \foreach \i/\x/\y in {1/9.5/2.6, 2/10.3/2.7, 3/11.0/2.9, 4/11.5/3.0} {
      \fill[magenta!70!black] (\x,\y) circle (0.06);
  }
  
  \draw[thick] (0, -1) rectangle (7, -4.5);
  \draw[optimal] (0.5, -1.4) -- (1.5, -1.4) node[right] {Optimal path};
  \draw[forward] (0.5, -1.9) -- (1.5, -1.9) node[right] {Forward iteration path};
  \draw[forward-failed] (0.5, -2.4) -- (1.5, -2.4) node[right] {Failed exploration path};
  \node[circle, fill=yellow!80!orange, draw=orange, thick, minimum size=0.3cm] at (1.0, -2.9) {};
  \node at (3.1, -2.9) {Starting point ($P_0$)};
  \fill[blue!70!black] (1, -3.3) circle (0.06);
  \node at (4.1, -3.35) {Discrete Bellman iteration steps};
  
  \draw[thick] (8.3, -1) rectangle (15.7, -4.5);
  \draw[optimal] (9.5, -1.4) -- (10.5, -1.4) node[right] {Optimal path};
  \draw[backward] (9.5, -1.9) -- (10.5, -1.9) node[right] {Backward limit path};
  \draw[backward-var] (9.5, -2.4) -- (10.5, -2.4) node[right] {BSDE sample paths};
  \node[circle, fill=yellow!80!orange, draw=orange, thick, minimum size=0.2cm] at (10.0, -2.9) {};
  \node at (12.1, -2.9) {Starting point ($P_0$)};
  \fill[magenta!70!black] (10, -3.3) circle (0.06);
  \node at (12.7, -3.35) {{Discrete Euler BSDE steps}};
  
  \node[align=center] at (3.3, -4.0) {\textit{Forward iteration explores diffusely}\\\textit{before finding gradients}};
  \node[align=center] at (12, -4.0) {\textit{Backward iteration follows strong gradients}\\\textit{from known rewards with reduced variance}};
  
  \end{tikzpicture}
  \caption{Comparison of forward and backward iteration approaches for sparse rewards. 
  Forward iteration (left) requires extensive exploration with many failed paths before finding optimal paths to rewards. 
  Backward iteration (right) propagates value directly from rewards with stronger gradients and less variance near the goal point, resulting in more efficient computation. 
  The blue dots in the forward panel represent discrete steps of forward Bellman value iteration, while the magenta dots in the backward panel represent steps of backward Euler BSDE iteration. 
  The continuous paths represent their scaling limits as $\delta \to 0$, illustrating the key finding that backward iteration requires $\tau^-_\epsilon = \Theta(\gamma_2(\mathrm{supp}(r),d)^2/\epsilon^2)$ steps compared to forward iteration's $\tau^+_\epsilon = \Theta(\gamma_2(S,d)^2/\epsilon^2)$ steps.}
  \label{fig:iteration-comparison}
\end{figure}

\subsection{Forward Hitting Times}\label{sec:hitting-times}
\begin{claim}[Low Coupling Error between MDP and Controlled Diffusion]\label{claim:coupling-error}
  Fix a control sequence of actions \(\{a_{k}\}_{k \in \mathbf{N}} \subset A\). 
  Consider a continuous and discrete evolution that involves a time step \(\delta>0\).
  Define the following sample paths:
  \begin{itemize}
    \item \(S_{t}\) solve the controlled SDE
      \[
        dS_{t} = b\left(S_{t},a_{\lfloor t/\delta\rfloor}\right)\,dt + \sigma\left(S_{t},a_{\lfloor t/\delta\rfloor}\right)\,dW_{t}
        \quad S_{0}=x_{0}
      \]
    \item \(\tilde S_{k\delta}\) be its Euler–Maruyama discretization on the same Brownian path:
      \[
        \tilde S_{(k+1)\delta} = \tilde S_{k\delta} + b\left(\tilde S_{k\delta},a_{k}\right)\,\delta + \sigma\left(\tilde S_{k\delta},a_{k}\right)\,\left(W_{(k+1)\delta}-W_{k\delta}\right)
        \quad \tilde S_{0}=x_{0}
      \]
    \item \(X_{k}\) be the controlled Markov chain with transitions
      \(\displaystyle X_{k+1}\sim P^{\delta}(\cdot\mid X_{k},a_{k})\)
      coupled to the same Brownian increments via a sub‑Gaussian coupling.
  \end{itemize}
  Assume:
  \begin{enumerate}
    \item \(b,\sigma\) are \(L\)‑Lipschitz in state, uniformly in action.
    \item Local consistency:
      \(\Expect[\Delta X\mid s,a]=b(s,a)\,\delta+O(\delta^{2})\),  
      \(\Var[\Delta X\mid s,a]=\sigma\sigma^{T}(s,a)\,\delta+O(\delta^{2})\).
    \item Sub‑Gaussian jumps: \(\exists\,c>0\) such that \(\Expect\left[e^{\lambda\cdot(\Delta X - \Expect[\Delta X])}\mid s,a\right] \le \exp\left(c\,\|\lambda\|^{2}\,\delta\right),\; \forall\,\lambda\in\reals^{d}\)
  \end{enumerate}
  Then \(\exists K>0\) (depending on \(L,c\)) so that for any \(r>0\) and integer \(N\),
  \[
    \Prob\!\left(\max_{0\le k\le N}\|X_{k}-\tilde S_{k\delta}\|\ge r\right)
    \;\le\;
    2\exp\!\left(-\frac{r^{2}}{K\,\delta\,N}\right).
  \]
  \end{claim}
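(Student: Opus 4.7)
The plan is to split the coupling error $D_k := X_k - \tilde S_{k\delta}$ into drift and martingale components, apply a discrete Grönwall inequality to the predictable drift part, and concentrate the martingale part via a sub-Gaussian maximal inequality.

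First, I would derive a one-step recursion by subtracting the Euler--Maruyama update from the Markov chain update: using $D_0=0$,
\[
D_{k+1}-D_k = \left[b(X_k,a_k)-b(\tilde S_{k\delta},a_k)\right]\delta + M_{k+1} + \rho_{k+1}
\]
where $\rho_{k+1}=O(\delta^2)$ collects the consistency remainder from assumption (2) and $M_{k+1} := (X_{k+1}-X_k - \Expect[X_{k+1}-X_k\mid X_k,a_k]) - \sigma(\tilde S_{k\delta},a_k)(W_{(k+1)\delta}-W_{k\delta})$ is a conditional martingale increment under the natural filtration. By assumption (3) together with the Gaussian sub-Gaussianity of $\Delta W$ (and a Cauchy--Schwarz bound on MGFs of two sub-Gaussian pieces), $M_{k+1}$ is conditionally sub-Gaussian with variance proxy $K_1\delta$, where $K_1 = K_1(c,\|\sigma\|_\infty)$.

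Second, I would decompose $D_k = \tilde D_k + S_k^M$, where $S_k^M := \sum_{i\le k} M_i$ is the martingale part and $\tilde D_k := D_k - S_k^M$ is predictable with respect to $\mathcal{F}_{k-1}$. Assumption (1), combined with the triangle inequality $\|D_k\|\le\|\tilde D_k\|+\|S_k^M\|$, gives
\[
\|\tilde D_{k+1}\| \le (1+L\delta)\|\tilde D_k\| + L\delta\,\|S_k^M\| + C\delta^2
\]
Iterating with $(1+L\delta)^N \le e^{LN\delta}$ produces the pathwise bound $\max_{k\le N}\|D_k\| \le C_1\cdot\max_{k\le N}\|S_k^M\| + C_2\cdot\delta$, where $C_1,C_2$ absorb Grönwall factors of the form $e^{LN\delta}$. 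Applying Doob's maximal inequality together with a vector-valued Azuma--Hoeffding bound to the sub-Gaussian martingale $S_k^M$ yields $\Prob(\max_{k\le N}\|S_k^M\|\ge s)\le 2\exp(-s^2/(2K_1 N\delta))$. Substituting $s := r/C_1$ (and absorbing the small deterministic $C_2\delta$ correction into the constant) delivers the claimed tail bound with $K := 2K_1 C_1^2$, which depends only on $L$, $c$, $\|\sigma\|_\infty$, and the horizon $N\delta$.

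The main obstacle I anticipate is the vector-valued Azuma--Hoeffding step: the directional MGF in assumption (3) controls scalar projections $\lambda\cdot M_k$, but to bound the Euclidean norm $\|S_k^M\|$ uniformly in $k$ I must invoke either a coordinate-wise decomposition with a union bound or an $\epsilon$-net argument on the unit sphere, each contributing a dimensional constant to be absorbed into $K$. It is equally important that the Grönwall factor $e^{LN\delta}$ only multiplies $K$ in the final bound and does not rescale the $N\delta$ appearing in the exponent denominator; this is why the proof must split into predictable and martingale parts rather than simply iterate the norm inequality directly (which would replace the sub-Gaussian martingale $\|S_k^M\|$ by the uncontrolled sum $\sum\|M_k\|$ and destroy the tail scaling).
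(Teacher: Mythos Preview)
Your proposal is correct and follows the same essential skeleton as the paper: write the one-step recursion for $D_k=X_k-\tilde S_{k\delta}$, identify the martingale increment $M_{k+1}$ (your definition matches the paper's exactly), collect the $O(\delta^2)$ consistency remainder, and then apply a sub-Gaussian maximal inequality coordinate-wise with a union bound. The differences are technical rather than structural. First, the paper invokes Freedman's inequality on each coordinate of $\Delta_k$ directly, whereas you separate predictable and martingale parts and use Azuma--Hoeffding on $S_k^M$ after a discrete Gr\"onwall argument; your route is arguably more careful, since the paper's direct application of Freedman to $\Delta_k$ leaves the Lipschitz drift term $[b(X_k,a_k)-b(\tilde S_{k\delta},a_k)]\delta$ implicit. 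Second, your concern about the Gr\"onwall factor $e^{LN\delta}$ entering $K$ is well-founded and is exactly what the paper's ``$K$ depending on $L,c$'' absorbs without comment. Both arguments handle the vector norm the same way (coordinate-wise plus union bound, hence the factor $2$ in front of the exponential).
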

\noindent We prove this in Appendix~\ref{app:coupling-error}.
This result demonstrates, by comparison to a discretization of the limiting SDE, that the error rate of how fast the MDP approaches its limit can be controlled.
From this tail-bound, we are able to construct an explicit proof of a bound on the hitting time \(\tau^{+}_{\epsilon}\):
\begin{claim}[Hitting Time of Forward Iteration]\label{claim:hitting-time}
  Under the hypotheses of Claim~\ref{claim:coupling-error}, let \(S_t\) solve the controlled SDE and \(X_k\) its discretized Markov chain coupled via a sub‐Gaussian coupling constant \(K\).
  Then for any \(\epsilon>0\), \(\delta>0\), and \(p\in(0,1)\), with probability at least \(1-p\),
  \[
  \frac{\gamma_2(S,d)^{2}}{\epsilon^{2}}\;\ln\!\frac2p
  \;<\;\tau^+_{\epsilon}\;<\;
  \frac{L^2 \gamma_2(S, d)^2}{K}\;\ln\!\frac4p.
  \]
  implying that $\tau_{\epsilon}^+ = \Theta(\gamma_2(S, d)^2)$.
\end{claim}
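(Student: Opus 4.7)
The plan is to establish both bounds by combining the sub-Gaussian coupling from Claim~\ref{claim:coupling-error} with the multi-scale chaining machinery of Claim~\ref{claim:chaining-error}, invoking Talagrand's majorizing measures theorem (property~5 of $\gamma_2$) for the lower bound. The upper and lower sides are essentially independent arguments that happen to match up to constants.

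For the upper bound, the goal is to exhibit an iteration count at which $\|V^\delta_k - \overline{V}\|_{L^\infty(S)} \le \epsilon$ with probability at least $1 - p/2$. First I would decompose the sup-norm error via Claim~\ref{claim:chaining-error}: error on a $\delta$-net of $S$ plus a $2L\sqrt{\delta}\,\gamma_2(S,d)$ chaining correction. Choosing $\delta$ so the chaining correction is $\le \epsilon/2$ forces $\delta \lesssim \epsilon^{2}/(L\gamma_2(S,d))^{2}$. Next, for the grid-point error, I would apply Claim~\ref{claim:coupling-error} with $r = \epsilon/(2L)$ (converting state error to value error via Lipschitz continuity of $\overline{V}$) together with a union bound over the $\delta$-net, whose log-cardinality is controlled by Dudley's integral inequality in terms of $\gamma_2(S,d)$. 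The tail bound $2\exp(-r^{2}/(K\,\delta\,N)) \le p/\mathcal{N}(S,d,\delta)$ then forces $N$ to be at most a constant multiple of $L^{2}\gamma_2(S,d)^{2}/K \cdot \ln(4/p)$, yielding the stated upper bound on $\tau^{+}_{\epsilon}$.

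For the lower bound, I would argue that until roughly $\gamma_2(S,d)^{2}/\epsilon^{2}$ iterations have elapsed, the sup-norm error must exceed $\epsilon$ with high probability. The mechanism is that the family $\{V^\delta_k(s) - \overline{V}(s) : s \in S\}$, after centering and rescaling by $\sqrt{k\delta}$, behaves like a sub-Gaussian process whose intrinsic pseudo-metric is comparable to $d$ on $S$. By the majorizing measures theorem, the expected supremum of such a process satisfies $\Expect\sup_{s}|V^\delta_k(s) - \overline{V}(s)| \gtrsim \sqrt{k\delta}\,\gamma_2(S,d)$, and a Borell--TIS concentration step converts this into a probabilistic statement: the supremum cannot drop below $\epsilon$ unless $k \gtrsim \gamma_2(S,d)^{2}/\epsilon^{2} \cdot \ln(2/p)$, giving the asserted lower bound.

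The main obstacle will be the lower bound, because it requires that the discrete error process genuinely inherit the full $\gamma_2$-complexity of $S$ rather than collapsing onto a lower-dimensional submanifold after only a few Bellman steps. This in turn needs a matching (non-degenerate) lower bound on the sub-Gaussian coupling variance, which is only implicit in the local-consistency hypothesis of Claim~\ref{claim:coupling-error} and would have to be extracted by comparing $\sigma\sigma^{T}$ to the ambient metric tensor on $S$ and checking that the policy-induced noise excites all directions on $S$ at each step. By contrast, once the $\delta$-net scale has been chosen, the chaining-plus-union-bound computation driving the upper bound is comparatively routine.
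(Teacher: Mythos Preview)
Your upper-bound route differs from the paper's. You propose to reduce to a $\delta$-net via Claim~\ref{claim:chaining-error}, then union-bound the coupling tail from Claim~\ref{claim:coupling-error} over the net, with the net cardinality controlled through Dudley's integral. The paper never takes a union bound over $S$: it interposes the Euler--Maruyama discretization $\tilde S_{k\delta}$ and splits pathwise, $\|X_k-S_{k\delta}\|\le\|X_k-\tilde S_{k\delta}\|+\|\tilde S_{k\delta}-S_{k\delta}\|$, handling the first piece by the single tail bound of Claim~\ref{claim:coupling-error} and the second by the standard Euler--Maruyama strong-error estimate. The quantity $\gamma_2(S,d)$ enters only through the choice of mesh $\delta=\delta^\star(\epsilon)$ from Claim~\ref{claim:chaining-error}, not through a net-size count. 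Your version is workable but incurs an extra logarithmic union-bound factor that the paper avoids.

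Your lower-bound argument has a genuine direction problem. You assert $\Expect\sup_{s}|V^\delta_k(s)-\overline V(s)|\gtrsim\sqrt{k\delta}\,\gamma_2(S,d)$ and then conclude the supremum cannot fall below $\epsilon$ unless $k\gtrsim\gamma_2(S,d)^2/\epsilon^2$. But the inequality you wrote has the error \emph{growing} in $k$: forcing the right-hand side below $\epsilon$ gives $\sqrt{k\delta}\,\gamma_2\lesssim\epsilon$, i.e.\ $k$ \emph{small}, which is an upper bound on the range where the supremum is below $\epsilon$, not the lower bound on $\tau^+_\epsilon$ you need. The mechanism is backwards---a $\sqrt{k\delta}$ factor is the scale of \emph{accumulated} coupling noise, not of residual value-iteration error, and conflating the two here is fatal. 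The paper's lower bound instead invokes majorizing measures as a statement about trajectories: any discrete path in $S$ that stays within $\epsilon$ of its start must have length at least $\gamma_2(S,d)^2/\epsilon^2\,\ln(2/p)$, and then transfers this to $\tau^+_\epsilon$ via the coupled chain. To salvage your approach you would need either the reciprocal scaling (error $\gtrsim\gamma_2/\sqrt{k\delta}$, so that small $k$ forces large error) or an entirely different mechanism such as a finite information-propagation speed for the Bellman operator across $S$.
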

\noindent We prove this in Appendix~\ref{app:forward-hitting-time}.
This shows that the time to reach the accuracy $\epsilon$ relative to the continuous limit is tightly controlled by $\gamma_2(S, d)$.

\subsection{Backwards Hitting Time}\label{sec:backward-hitting-time}
\begin{claim}[Hitting Time of Backward Iteration]\label{claim:backward-hitting-time}
  Assume that controlled diffusion has drift \(b(s,a)\) and diffusion \(\sigma(s,a)\) that are uniformly Lipschitz (with constant \(L>0\)), and that the discrete backward Euler BSDE operator \(T_\delta\) is monotone, non--expansive, and consistent in the sense that for any smooth function \(u\) $\|T_\delta[u]-T[u]\|_\infty \le C\,\delta$,
  Then the terminal error satisfies $\|Y^\delta_N - Y_N\|_\infty \le 2L\,\sqrt{\delta}\,\gamma_2\left(\mathrm{supp}(r),d\right)$.
  If one selects the mesh size $\delta^* = \frac{\epsilon^2}{16L^2\,\gamma_2\left(\mathrm{supp}(r),d\right)^2}$, the discrete backward iteration error \(\|Y^\delta_0-Y_0\|_\infty\) is at most \(\epsilon\) provided that the number of backward iterations \(\tau^-_\epsilon\) (i.e. the number of steps required so that the cumulative error is below \(\epsilon\)) satisfies
  \[
  \frac{8L^2}{C_1}\,\frac{\gamma_2\left(\mathrm{supp}(r),d\right)^2}{\epsilon} \le \tau^-_\epsilon \le \frac{8L^2}{C_2}\,\frac{\gamma_2\left(\mathrm{supp}(r),d\right)^2}{\epsilon},
  \]
  for some constants \(C_1,C_2>0\). In particular, $\tau^-_\epsilon = \Theta\!\left(\frac{L^2\,\gamma_2\left(\mathrm{supp}(r),d\right)^2}{\epsilon}\right)$.
\end{claim}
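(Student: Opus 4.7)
The plan is to combine three ingredients: the nonlinear Feynman--Kac representation $Y_t = \overline{V}(t, X_t)$, the multi-scale chaining bound of Claim~\ref{claim:chaining-error} applied to the terminal layer, and the non-expansivity plus consistency of $\mathcal{T}_{\mathrm{BSDE}}^{\delta}$. The structural observation driving everything is that $g = Y_N$ is supported on $\mathrm{supp}(r)$, so a chaining inequality applied on the terminal layer only sees $\gamma_2(\mathrm{supp}(r),d)$ and not $\gamma_2(S,d)$; this is exactly the asymmetry between forward and backward schemes that sparse rewards exploit.

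First I would establish the terminal bound by applying Claim~\ref{claim:chaining-error} with $V = Y_N = g$ on a $\delta$-net contained in $\mathrm{supp}(r)$. The hypothesis that $g$ is $L$-Lipschitz follows from standard BSDE regularity using the Lipschitz bounds on $b,\sigma$ (see e.g.~\cite[\S4]{gobet2016monte}), and the pointwise consistency constant $C$ is exactly the one assumed on $\mathcal{T}_{\mathrm{BSDE}}^{\delta}$. This yields the stated terminal error $\|Y^{\delta}_N - Y_N\|_\infty \le 2L\sqrt{\delta}\,\gamma_2(\mathrm{supp}(r),d)$, with the $\gamma_2$ restricted to the support of the reward because chaining is performed only where $g$ is non-zero.

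Next I would propagate this backward using non-expansivity: since $\|\mathcal{T}_{\mathrm{BSDE}}^{\delta}[u]-\mathcal{T}_{\mathrm{BSDE}}^{\delta}[v]\|_\infty \le \|u-v\|_\infty$, the terminal error is preserved while each iteration contributes at most $C\delta$ of additional consistency error. Telescoping $\tau$ backward steps gives
\[
\|Y^{\delta}_{N-\tau} - Y_{N-\tau}\|_\infty \;\le\; 2L\sqrt{\delta}\,\gamma_2(\mathrm{supp}(r),d) \;+\; \tau\,C\,\delta.
\]
Choosing $\delta^{*} = \epsilon^{2}/(16L^{2}\gamma_{2}(\mathrm{supp}(r),d)^{2})$ makes the terminal term equal $\epsilon/2$, and forcing the accumulated consistency term to be at most $\epsilon/2$ produces the upper bound $\tau^{-}_{\epsilon} \le 8L^{2}\gamma_{2}(\mathrm{supp}(r),d)^{2}/(C_{2}\epsilon)$.

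The main obstacle will be the matching lower bound. My plan is to invoke the majorizing-measures theorem (property~5 of $\gamma_2$ in Section~\ref{sec:covering-numbers}): the expected supremum of the driving Gaussian process on the terminal layer is $\Theta(\gamma_{2}(\mathrm{supp}(r),d))$, and any backward Euler scheme whose effective fluctuation scale $\sqrt{\tau\delta}$ is smaller than this Gaussian supremum cannot track $Y_{0}$ within $\epsilon$. Formalising this as a minimax lower bound over Lipschitz terminal conditions supported on $\mathrm{supp}(r)$ — via Talagrand's lower bound for $\gamma_{2}$ applied to a family of adversarial terminal conditions — yields $\tau^{-}_{\epsilon} \ge 8L^{2}\gamma_{2}(\mathrm{supp}(r),d)^{2}/(C_{1}\epsilon)$. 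The delicate point is reducing the BSDE driver to a centred Gaussian process indexed by $\mathrm{supp}(r)$ so that Talagrand's lower bound applies verbatim; I expect this to require a careful choice of adversarial $g$ rather than a generic coupling argument, since the non-linear driver $f$ must be handled by linearising along the optimal control and absorbing the residual into the constant $C_{1}$.
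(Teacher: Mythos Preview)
Your terminal bound and upper bound match the paper's argument essentially verbatim: the paper first proves a small lemma giving $\|Y_N^\delta - Y_N\|_\infty \le 2L\sqrt{\delta}\,\gamma_2(\mathrm{supp}(r),d)$ via Lipschitz continuity plus generic chaining on the support of $r$, then unrolls the one-step recursion $\|Y_k^\delta - Y_k\|_\infty \le \|Y_{k+1}^\delta - Y_{k+1}\|_\infty + C\delta$ coming from non-expansivity plus consistency, and finally sets $\delta^* = \epsilon^2/(16L^2\gamma_2^2)$ so the terminal term equals $\epsilon/2$ and solves $NC\delta^* \le \epsilon/2$.

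Where you diverge from the paper is the lower bound, and you are making it far harder than necessary. The paper does \emph{not} invoke majorizing measures, adversarial terminal data, or any linearisation of the driver. Its lower bound is a two-line counting argument: once $\delta^*$ is fixed by the terminal-error constraint, each backward step can alter the error by at most $C\delta^*$ (this is exactly the consistency constant), so propagating an error budget of size $\epsilon/2$ requires at least $(\epsilon/2)/(C\delta^*) = 8L^2\gamma_2(\mathrm{supp}(r),d)^2/(C\epsilon)$ steps. The $\gamma_2$ dependence enters only through $\delta^*$, not through any fresh Gaussian-process lower bound. Your proposed route via Talagrand's lower bound and adversarial $g$ might be made to work, but the ``delicate point'' you flag --- reducing the nonlinear driver to a centred Gaussian process --- is genuine and would require substantial additional machinery that the claim simply does not need. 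Replace that entire paragraph with the per-step argument and the proof is complete.
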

\noindent We prove this in Appendix~\ref{app:backward-hitting-time}.
The most important thing to note here is that the metric complexity of the support of the rewards completely determines the backward hitting time.
If the rewards are sparse, one is likely to have $\gamma_2(\mathrm{supp}(r), d) = o(\gamma_2(S,d))$. 

\subsection{Bound on Ratios of Hitting‐Times}
We next bound the ratio of the backward to forward hitting times.
We show that this ratio is bounded by the ratio of the $\gamma_2$ evaluated on $\mathrm{supp}(r)$ and $S$. 
This implies that if $\mathrm{supp}(r)$ is sufficiently sparse relative to $S$, then 

\begin{claim}[Ratio of Hitting Times]\label{claim:hitting-time-ratio}
Under the hypotheses of Claims 3.1 and 4.3, fix $\epsilon>0$ and $p\in(0,1)$.  Define
\begin{align*}
\delta \;=\;\min\!\left\{\tfrac{\epsilon}{2C},\;\left(\tfrac{\epsilon}{4L\,\gamma_2(S,d)}\right)^2\right\}
&&
N^+ \;=\;\frac{\gamma_2(S,d)^2}{\epsilon^2}\,\ln\!\frac{2}{p}
&&
N^- \;=\;\frac{\gamma_2\left(\mathrm{supp}(r),d\right)^2}{\epsilon^2}\,\ln\!\frac{2}{p}
\end{align*}
Let
\[
\tau^+ \;=\;\min\{\,k:\|V^\delta_k - V\|_\infty\le\epsilon\}
\qquad
\tau^- \;=\;\min\{\,k:\|Y^\delta_k - Y\|_\infty\le\epsilon\}
\]
be the forward and backward hitting times.  Then with probability at least $1-2p$,
\[
\tau^- \;\le\; N^-, 
\quad
\tau^+ \;>\; N^+
\qquad\Longrightarrow\qquad
\frac{\tau^-}{\tau^+}\;\le\;\left(\tfrac{\gamma_2(\mathrm{supp}(r),d)}{\gamma_2(S,d)}\right)^2
\]
\end{claim}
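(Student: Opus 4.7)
The plan is to invoke the forward and backward hitting-time bounds (Claims~\ref{claim:hitting-time} and~\ref{claim:backward-hitting-time}) with a common mesh size $\delta$, combine the two tail statements by a union bound, and then divide to obtain the stated ratio.

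First I would verify that the prescribed
\[
\delta \;=\;\min\!\left\{\tfrac{\epsilon}{2C},\;\left(\tfrac{\epsilon}{4L\,\gamma_2(S,d)}\right)^2\right\}
\]
is admissible for both schemes. The two terms in the minimum are precisely what the chaining estimate of Claim~\ref{claim:chaining-error} requires to guarantee a uniform error of at most $\epsilon$ on all of $S$: the first term bounds the pointwise consistency contribution $C\delta$ by $\epsilon/2$, while the second bounds the chaining correction $2L\sqrt{\delta}\,\gamma_2(S,d)$ by $\epsilon/2$. Because $\mathrm{supp}(r)\subset S$ implies $\gamma_2(\mathrm{supp}(r),d)\le \gamma_2(S,d)$, the same $\delta$ is automatically admissible for the backward scheme when measured against the smaller support set, so no separate backward mesh is needed.

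Next I would apply Claim~\ref{claim:hitting-time} at tail level $p$ to obtain the lower bound $\tau^+>N^+$ with probability at least $1-p$, and Claim~\ref{claim:backward-hitting-time} (re-run so as to retain the $\ln(2/p)$ tail factor explicitly, rather than folded into the $\Theta(\cdot)$) to obtain the upper bound $\tau^-\le N^-$ with probability at least $1-p$. A union bound then gives both conclusions on an event of probability at least $1-2p$, and on that event the $\epsilon^{-2}$ and $\ln(2/p)$ factors cancel to give
\[
\frac{\tau^-}{\tau^+} \;<\;\frac{N^-}{N^+} \;=\;\left(\frac{\gamma_2(\mathrm{supp}(r),d)}{\gamma_2(S,d)}\right)^2.
\]

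The main obstacle I anticipate is engineering a \emph{joint} coupling of the discretised MDP trajectory used in Claim~\ref{claim:hitting-time} and the discrete backward Euler BSDE iterates used in Claim~\ref{claim:backward-hitting-time} to a single Brownian driving path, so that the two sub-Gaussian tail events live on a common probability space and the union bound is not lossy; this is the same sub-Gaussian coupling device that underlies Claim~\ref{claim:coupling-error}, applied to both the forward and backward Euler--Maruyama increments simultaneously. A secondary bookkeeping issue is reconciling the $\epsilon^{-1}$ rate of Claim~\ref{claim:backward-hitting-time} with the $\epsilon^{-2}\ln(2/p)$ form of $N^-$, which requires tracking the tail parameter through the Grönwall-type step in its proof rather than absorbing it into constants. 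Once these two issues are resolved, the ratio bound follows immediately from the two hitting-time estimates.
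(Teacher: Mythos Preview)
Your proposal is correct and follows essentially the same route as the paper: invoke Claim~\ref{claim:hitting-time} for $\tau^+>N^+$, invoke Claim~\ref{claim:backward-hitting-time} for $\tau^-\le N^-$, union bound to get both with probability $\ge 1-2p$, then divide. The paper's proof is in fact more terse than yours---it does not justify the choice of $\delta$, does not construct any joint coupling (it simply treats the two tail events as living on the same space and applies a bare union bound), and does not attempt to reconcile the $\epsilon^{-1}$ rate of Claim~\ref{claim:backward-hitting-time} with the $\epsilon^{-2}\ln(2/p)$ form of $N^-$; your anticipated obstacles are thus things the paper glosses over rather than resolves.
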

This bound is what justifies the usage of sparse rewards: If one has sparse rewards, \ie~$\gamma_2(\mathrm{r},d) = o(\gamma_2(S, d))$, then the backward hitting time is asymptotically faster.
One can also interpret this as showing that the only thing controlling relative hitting rates is the metric entropy of the reward support and $S$.

\subsection{Minimum and Maximum Rewards}
Reasoning models aim to provide concise reasoning chains, which depend on the magnitude and support of the rewards.
If rewards are too small compared to noise, exploration becomes pure diffusion without finding the target state.
Conversely, if rewards are too large, $\epsilon$-optimal rewards may be achieved without reaching the target.
We present reward bounds to avoid these issues in CoT reasoning.

\paragraph{Minimum Rewards.}
When one discretizes a controlled diffusion, there are natural limits on how small the reward can be for the error between discretization and the continuous limit to be small.
In particular, if the reward $r(s, a)$ generates a signal much smaller than the variance of the diffusion, then the process can become unstable.
We formalize this by modifying Claim~\ref{claim:coupling-error} to explicitly provide bounds on how large the rewards need to be.

\begin{claim}[Minimum Reward for Accurate Discretization]\label{claim:unified_eps_accuracy}
  Let $\{Y_k^\delta\}_{k\ge0}$ be the backward Euler iterates with mesh size $\delta>0$ and a sparse reward of constant magnitude $r$.
  Assume that the controlled Markov chain satisfies the sub‑Gaussian coupling of Claim 4.1 (with constant $K$) and that drift and diffusion are $L$‑Lipschitz.  Fix an error tolerance $\epsilon>0$ and confidence level $p\in(0,1)$.
  Choose
  \begin{align*}
  \delta = \left(\frac{\epsilon}{4\,L\,\gamma_2(\mathrm{supp}(r), d)}\right)^{2}
  &&
  N = \frac1\delta = \left(\frac{4\,L\,\gamma_2(\mathrm{supp}(r), d)}{\epsilon}\right)^{2}.
  \end{align*}
  Then
  \[
  \Prob\left(\|Y_N^\delta - Y_N\|_\infty \le \epsilon\right)\ge 1-p
  \quad\Longleftrightarrow\quad
  r\ge\sqrt{K\ln\frac{2}{p}\,}.
  \]
\end{claim}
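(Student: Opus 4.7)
The plan is to decompose the total error $\|Y_N^\delta - Y_N\|_\infty$ into a deterministic discretization piece and a stochastic coupling piece, control each at confidence $1-p$, and combine them by a union bound; the sharp threshold on $r$ then emerges from equating the stochastic tail probability with $p$. The deterministic piece is handled by Claim~\ref{claim:backward-hitting-time}: the choice $\delta = (\epsilon/(4L\,\gamma_2(\mathrm{supp}(r),d)))^2$ is exactly the mesh size that forces the backward BSDE error bound $2L\sqrt{\delta}\,\gamma_2(\mathrm{supp}(r),d)$ down to $\epsilon/2$, and the $C\delta$ consistency estimate from the same claim propagates this bound across all $N=1/\delta$ backward steps.

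For the ``if'' direction, I would invoke Claim~\ref{claim:coupling-error} with $r'=r$ and horizon $N=1/\delta$. Since $N\delta = 1$, the sub-Gaussian tail simplifies to $\Prob(\max_{k\le N}\|X_k-\tilde S_{k\delta}\|\ge r)\le 2\exp(-r^2/K)$. Because the backward Euler driver $f(x,y,z,\Gamma)$ depends affinely on the reward, a path deviation of order $r$ is precisely the scale at which the terminal reward signal is masked by Brownian noise during the BSDE iteration. Requiring $2\exp(-r^2/K)\le p$ gives the threshold $r\ge\sqrt{K\ln(2/p)}$, and a union bound with the $\epsilon/2$ deterministic error yields $\|Y_N^\delta - Y_N\|_\infty\le\epsilon$ with probability at least $1-p$.

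The converse direction is by contrapositive: if $r<\sqrt{K\ln(2/p)}$ then the noise exceeds the signal with probability strictly greater than $p$. The cleanest argument is a two-point Le~Cam reduction between reward configurations $\{+r,-r\}$ supported on $\mathrm{supp}(r)$: the KL divergence between the induced $N$-step path laws is of order $r^2/K$ under the sub-Gaussian coupling, so if $r^2<K\ln(2/p)$ the total-variation distance stays bounded away from $1-p$, and no backward estimator can separate the two instances, forcing $\|Y_N^\delta-Y_N\|_\infty>\epsilon$ on at least one of them with probability $>p$. The main obstacle is exactly this necessity direction: Claim~\ref{claim:coupling-error} gives only a one-sided concentration bound, so to match the threshold I must either verify a two-sided anti-concentration estimate for the Euler--Maruyama coupling (typically via a Berry--Esseen correction under Assumption~\ref{assm:forward-assumption}) or push the Le~Cam argument through the BSDE driver, which is complicated by the $\sup_{a\in A}$ nonlinearity in $f$. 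The ``if'' side is essentially a computation from the prior claims; the anti-concentration step is where the real work lies.
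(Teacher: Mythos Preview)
Your ``if'' direction is essentially the paper's argument: the key observation is that with the stated choices one has $N\delta=1$, so the sub-Gaussian tail from Claim~\ref{claim:coupling-error} collapses to $2\exp(-r^{2}/K)$, and requiring this to be at most $p$ yields the threshold $r\ge\sqrt{K\ln(2/p)}$. Your explicit splitting into a deterministic $2L\sqrt{\delta}\,\gamma_2$ piece (via Claim~\ref{claim:backward-hitting-time}) plus a stochastic coupling piece is more careful than the paper, which simply asserts that the consistency error is ``absorbed into the choice $\delta N=1$'' and that the coupling deviation is ``the only source of discrepancy'' between $Y_N^\delta$ and $Y_N$.

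Where you diverge is the necessity direction. The paper does \emph{not} run a Le~Cam two-point reduction or prove any anti-concentration estimate. Instead, having identified the BSDE error with the path-coupling error, it treats the sub-Gaussian tail bound as sharp: the biconditional in the claim is obtained purely from the algebraic equivalence $2\exp(-r^{2}/K)\le p\iff r^{2}\ge K\ln(2/p)$, not from a separate lower-bound argument on the failure probability. Your instinct that Claim~\ref{claim:coupling-error} is one-sided is correct, and your Le~Cam proposal would genuinely strengthen the argument; but it goes beyond what the paper actually proves. In short, the paper's ``only if'' is an identification-plus-algebra step rather than an information-theoretic lower bound, so the extra machinery you are preparing (Berry--Esseen, two-point testing through the nonlinear driver) is not needed to match the paper, though it would be needed to make the biconditional fully rigorous.
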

\noindent We prove this claim in Appendix~\ref{app:unified_eps_accuracy}.
This claim suggests that any algorithm to learn the optimal rewards to minimize the number of reasoning steps while ensuring low regret needs the minimum reward to be sufficiently large.
  
There are also geometric constraints on $\mathrm{supp}(r)$.
If the total reward over the support is too small, then it is possible that no trajectories $\tau$ can reach the target during backward evolution.
We formalize this using~\emph{effective reward mass}, which is the integral of the rewards on their support.
In particular, suppose that we have a probability measure $\mu$ on $(\mathrm{supp}(r), d)$; then we define the effective reward mass as
\[
I_r = \int_{\mathrm{supp}(r)} r(s) d\mu(s)
\]
We show that if the effective reward mass is smaller than $\gamma_2(\mathrm{supp}(r), d)$ times the noise level of Claim~\ref{claim:unified_eps_accuracy}, then realizing $\epsilon$ error is difficult.
\begin{claim}[Effective Reward Mass Lower Bound]\label{claim:effective-reward-mass-lower-bound}
  Let $r$ be bounded (and hence, Lipschitz) and assume the hypotheses of Claim~\ref{claim:unified_eps_accuracy}.
  Then a sufficient condition for $\Prob\left(\|Y_N^\delta - Y_N\|_\infty \le \epsilon\right)\ge 1-p$ is
  \[
  I_r \ge \sqrt{K \ln (2/p)} + C_0 L \gamma_2(T,d),
  \]
  where $C_0>0$ is an absolute constant.
\end{claim}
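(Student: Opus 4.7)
The plan is to reduce the integrated sufficient condition to the pointwise one from Claim~\ref{claim:unified_eps_accuracy} by treating the spatial variation of $r$ across $T=\mathrm{supp}(r)$ via generic chaining. Claim~\ref{claim:unified_eps_accuracy} is stated for a \emph{constant} reward of magnitude $r$ and demands $r\ge\sqrt{K\ln(2/p)}$; here $r$ varies and we budget the integrated mass $I_r$, so the additional summand $C_0 L\,\gamma_2(T,d)$ must compensate exactly for the oscillation of $r$ around its mean $I_r$ as felt by the backward iteration.

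First I would instantiate Claim~\ref{claim:unified_eps_accuracy}'s mesh $\delta$ and step count $N$ unchanged, and view the error $E(s)=Y_N^\delta(s)-Y_N(s)$ as a stochastic process indexed by $s\in T$. Using the sub-Gaussian coupling of Claim~\ref{claim:coupling-error} together with $L$-Lipschitz regularity of $b$, $\sigma$, and $r$, a Gronwall argument for the BSDE iterates shows that the increments are sub-Gaussian with modulus
\[
\|E(s)-E(t)\|_{\psi_2}\;\lesssim\;L\sqrt{\delta}\,d(s,t)
\]
Applying the majorizing measures theorem (property~5 of $\gamma_2$ in Section~\ref{sec:covering-numbers}) to this centered process indexed by $(T,d)$ yields $\Expect\sup_{s\in T}|E(s)|\le C_0 L\sqrt{\delta}\,\gamma_2(T,d)$ with matching sub-Gaussian tails. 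With the mesh $\delta$ chosen as in Claim~\ref{claim:unified_eps_accuracy}, this uniform error is $O(\epsilon)$.

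Next I would couple this uniform control to the signal budget. The effective reward mass $I_r$ is precisely the average signal driving the BSDE, while the chaining term $C_0 L\,\gamma_2(T,d)$ bounds the worst-case deviation of the trajectory-averaged reward from $I_r$. Consequently $I_r$ is a valid surrogate for the constant $r$ of Claim~\ref{claim:unified_eps_accuracy} provided $I_r$ dominates both the pointwise noise floor $\sqrt{K\ln(2/p)}$ \emph{and} the chaining correction $C_0 L\,\gamma_2(T,d)$, which is exactly the stated hypothesis. A union bound over the two tail events at confidence $p$ then delivers $\Prob(\|Y_N^\delta-Y_N\|_\infty\le\epsilon)\ge 1-p$.

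I expect the main obstacle to be justifying the sub-Gaussian increment bound $\|E(s)-E(t)\|_{\psi_2}\lesssim L\sqrt{\delta}\,d(s,t)$ with the correct dependence on $\delta$. This requires coupling the Markov chain $X_k$ and its Euler--Maruyama proxy $\tilde S_{k\delta}$ from two nearby initial states $s$ and $t$ to a common Brownian path, controlling how the Lipschitz perturbation propagates through $N=1/\delta$ backward iterations of the BSDE driver $f$, and verifying that the sub-Gaussian constant does not blow up in $N$. The contractivity of $f$ in $y$ (discount factor $\gamma$) and the standard Lipschitz stability estimates for coupled SDEs are what keep this control linear rather than exponential in $N$; once that is in hand, the chaining step and the reduction to Claim~\ref{claim:unified_eps_accuracy} are relatively mechanical.
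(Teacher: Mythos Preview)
Your proposal takes a substantially more elaborate route than the paper, and the detour is unnecessary. The paper's argument never chains the stochastic error process $E(s)=Y_N^\delta(s)-Y_N(s)$ at all. Instead, it applies the $\gamma_2$ oscillation bound directly to the \emph{deterministic, Lipschitz} reward function $r$ on $T=\mathrm{supp}(r)$: since $\sup_{s,t\in T}|r(s)-r(t)|\le C_0 L\,\gamma_2(T,d)$, integrating against the probability measure $\mu$ yields $r_{\min}\ge I_r - C_0 L\,\gamma_2(T,d)$. One then invokes Claim~\ref{claim:unified_eps_accuracy} verbatim with the constant reward taken to be $r_{\min}$, so the condition $r_{\min}\ge\sqrt{K\ln(2/p)}$ becomes exactly $I_r\ge\sqrt{K\ln(2/p)}+C_0 L\,\gamma_2(T,d)$. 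That is the entire proof.

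Your first block of work --- establishing $\|E(s)-E(t)\|_{\psi_2}\lesssim L\sqrt{\delta}\,d(s,t)$ via coupled BSDE iterates and Gronwall, then chaining to bound $\sup_s|E(s)|$ --- is not needed and, as you yourself flag, is the hardest part to make rigorous. Moreover, even if you carry it out, chaining the error only controls increments relative to a base point; you would still need a pointwise signal bound at that base point, which brings you back to lower-bounding $r$ somewhere. Your second paragraph (``the chaining term bounds the worst-case deviation of the trajectory-averaged reward from $I_r$'') is in fact the paper's idea, but applied to $r$ rather than to $E$, and it stands on its own without the preceding machinery. So the proposal is not wrong in spirit, but it buries the one-line reduction under an analysis of the error process that the claim does not require.
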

\noindent We prove this in Appendix~\ref{app:effective-reward-mass-lower-bound}.

\paragraph{Maximum Rewards.}
On the other hand, to prevent reward hacking~\cite{skalse2022defining,shao2024deepseekmath,guo2025deepseek}, we need to ensure that the minimum reward is not too large relative to the value at the target state $s_0$.
Reward hacking often occurs when the agent manipulates the reward function to achieve an $\epsilon$-optimal value by overexploring and not reaching the target state.
DeepSeek~\cite{guo2025deepseek} observed that reward hacking occurred when process supervision was used in their previous experiments (\eg~\cite{shao2024deepseekmath}), which is why they used sparse outcome-based rewards.

A natural exploit of the backward exploration when rewards are too dense is to have the search process loop around a region of high rewards without ever reaching the target user query.
We provide a simple upper bound for rewards that minimize such looping.
In practice, one needs to dynamically adjust the upper bound, using dynamic algorithms such as the one we present in~\ref{sec:BARS}.
\begin{claim}[Loop Exploit Upper Bound]\label{claim:loop_exploit_bound}
  Let $(S,A,P,\gamma)$ be an MDP with discount factor $\gamma\in[0,1)$, a distinguished ``target'' state $s_0\in S$ with optimal value $V^*(s_0)$, and a set of ``sparse'' states $S_r\subset S\setminus\{s_0\}$ with constant reward $r>0$ (and zero elsewhere).
  The ``loop'' policy that remains in any $s\in S_r$ forever achieves value $V_{\rm loop}=\frac{r}{1-\gamma}$.
  To ensure that reaching $s_0$ is strictly better than looping (avoiding reward hacking), we must have $r<(1-\gamma)(V^*(s_0)-\epsilon)$ for any desired optimality gap $\epsilon\geq 0$.
\end{claim}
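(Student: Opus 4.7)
The plan is to directly compute the value of the looping policy via the geometric series of discounted rewards, compare it against the $\epsilon$-neighborhood of $V^*(s_0)$, and solve the resulting linear inequality in $r$.

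First, I would formalize what ``remaining in $s\in S_r$ forever'' means: we assume there exists a stationary policy $\pi_{\mathrm{loop}}$ and a state $s\in S_r$ such that $P(s\mid s,\pi_{\mathrm{loop}}(s))=1$ (so the self-loop is realizable in the MDP). Under such a policy, the trajectory visits $s$ at every timestep and thus accrues reward $r$ at every timestep, giving the discounted return
\[
V_{\mathrm{loop}}(s)\;=\;\sum_{t=0}^{\infty}\gamma^{t}\,r\;=\;\frac{r}{1-\gamma},
\]
which justifies the claimed expression for $V_{\mathrm{loop}}$. Since $V^*(s)\ge V_{\mathrm{loop}}(s)$ for the optimal value starting at $s$, any learned policy that the optimizer prefers over the target must at least match $V_{\mathrm{loop}}$.

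Second, I would encode the ``reaching $s_0$ is strictly better'' condition as the strict comparison $V^*(s_0)-\epsilon>V_{\mathrm{loop}}$, where the $-\epsilon$ slack reflects the fact that any practical solver only resolves values up to the accuracy target $\epsilon$ (consistent with the $\epsilon$-accuracy framework used throughout \S\ref{sec:results}). Substituting $V_{\mathrm{loop}}=r/(1-\gamma)$ and multiplying through by $(1-\gamma)>0$ gives
\[
r\;<\;(1-\gamma)\,\bigl(V^*(s_0)-\epsilon\bigr),
\]
which is exactly the stated bound.

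The only real subtlety, rather than an obstacle, is the self-loop realizability assumption above; if no single action yields $P(s\mid s,a)=1$, one can replace the pointwise loop by any stationary policy whose induced Markov chain has an invariant measure supported on $S_r$, in which case the expected per-step reward is $r$ and the same geometric-series argument gives $V_{\mathrm{loop}}\le r/(1-\gamma)$, preserving the sufficient-condition direction of the claim. Beyond that, the proof is a one-line algebraic rearrangement and no heavy machinery (viscosity solutions, chaining, BSDE coupling) is required.
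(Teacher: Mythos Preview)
Your proposal is correct and essentially identical to the paper's proof: both compute $V_{\mathrm{loop}}=r/(1-\gamma)$ via the geometric series, impose $V_{\mathrm{loop}}<V^*(s_0)-\epsilon$, and rearrange to obtain $r<(1-\gamma)(V^*(s_0)-\epsilon)$. Your additional remark on self-loop realizability is a careful touch that the paper simply assumes implicitly.
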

\noindent We prove this in Appendix~\ref{app:loop_exploit_bound}.

\subsection{Regret Bounds}

We measure performance against the best fixed policy in hindsight using the following definition.

\begin{definition}[Static Regret]
Let $\{\pi_k\}_{k=0}^{T-1}$ be the sequence of greedy policies induced by approximate value functions $\{V_k\}$.
Denote by $J(\pi)=\Expect_{s\sim d^{\pi}}\left[V^{\pi}(s)\right]$ the expected return of policy $\pi$, and let
$\pi^*\in\arg\max_{\pi}J(\pi)$ be an optimal fixed policy.
Then the \emph{static regret} over $T$ steps is
\[
R_T=\sum_{k=0}^{T-1}\left(J(\pi^*)-J(\pi_k)\right)
\]
\end{definition}
\noindent We relate $R_T$ to $\tau^-_\epsilon$ and write $E_0=\|V_0-V^*\|_\infty$.
\begin{claim}[Regret Bounds]\label{claim:regret-hitting}
Suppose that the rewards are nonnegative and bounded by $R_{\max}$, and that with probability at least $1-p$ the hitting time satisfies
\[
\tau^-_\epsilon\le N(\epsilon)=C_1\frac{(R_{\max}\gamma_*)^2}{\epsilon^2}\log\frac1p
\]
for some constant $C_1$ where $\gamma_{\star} = \min(\gamma_2(S, d), \gamma_2(\mathrm{supp}(r), d))$.
Then with probability at least $1-p$,
\[
R_T
= \sum_{k=0}^{T-1}\left(J(\pi^*)-J(\pi_k)\right)
\le 2E_0\min\{T,\tau^-_\epsilon\} + 2\epsilon\max\{0, T-\tau^-_\epsilon\}
\le 2E_0\,N(\epsilon) + 2\epsilon\,T
\]
Moreover, choosing $\epsilon = R_{\max}\gamma_*\sqrt{\frac{\log(1/p)}{T}}$ balances the two terms and yields
\[
R_T = O\left(R_{\max}\gamma_*\sqrt{T\log\tfrac1p}\right)
\]
\end{claim}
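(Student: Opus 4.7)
The plan is to decompose the cumulative regret into two phases separated by the hitting time $\tau^-_\epsilon$, translate value-function error into policy suboptimality via the performance-difference argument already given in the excerpt, and then plug in the high-probability hitting-time upper bound.

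I would start by invoking the per-round bound $J(\pi^*) - J(\pi_k) \le 2\|V_k - V^*\|_\infty$ derived in the paragraph ``Why is the solution to the HJB equation the optimal procedure-based rewards?'' By the non-expansiveness of the backward Euler BSDE operator in sup-norm (assumed in Claim~\ref{claim:backward-hitting-time}), the value error is non-increasing in $k$, so $\|V_k - V^*\|_\infty \le E_0$ for every $k \ge 0$; by definition of the hitting time, $\|V_k - V^*\|_\infty \le \epsilon$ for every $k \ge \tau^-_\epsilon$. These two observations give the only per-round regret bounds we will need.

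Next I would split the sum $R_T = \sum_{k=0}^{T-1}\left(J(\pi^*) - J(\pi_k)\right)$ at the random index $\tau^-_\epsilon \wedge T$. The first block, of length $\min\{T,\tau^-_\epsilon\}$, contributes at most $2E_0$ per round; the second block, of length $\max\{0, T - \tau^-_\epsilon\}$, contributes at most $2\epsilon$ per round. This yields the deterministic sandwich bound $R_T \le 2E_0\min\{T,\tau^-_\epsilon\} + 2\epsilon\max\{0, T - \tau^-_\epsilon\}$. On the event of probability at least $1-p$ on which $\tau^-_\epsilon \le N(\epsilon)$, I can replace $\tau^-_\epsilon$ in the first term by $N(\epsilon)$ and bound the second block trivially by $T$, obtaining $R_T \le 2E_0\,N(\epsilon) + 2\epsilon\,T$.

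Finally, to obtain the stated $O\!\left(R_{\max}\gamma_*\sqrt{T\log(1/p)}\right)$ rate, I would balance the two error terms in $\epsilon$. Substituting $N(\epsilon) = C_1(R_{\max}\gamma_*)^2 \epsilon^{-2}\log(1/p)$ and setting $\epsilon = R_{\max}\gamma_*\sqrt{\log(1/p)/T}$ makes the exploration term $2\epsilon T$ equal to $2R_{\max}\gamma_*\sqrt{T\log(1/p)}$, which governs the big-O expression. The main obstacle is checking the non-expansiveness of $\{V_k\}$ throughout the horizon (not just at $\tau^-_\epsilon$) so that $2E_0$ is a valid uniform per-round bound on the first block, and cleanly insulating the deterministic sandwich from the probabilistic hitting-time guarantee via a single union bound against the failure event of the hypothesis.
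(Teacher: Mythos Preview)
Your proposal is correct and follows essentially the same route as the paper's proof: bound the per-step regret by $2\|V_k-V^*\|_\infty$ via the performance-difference lemma, split the sum at $\tau^-_\epsilon$, use $E_0$ before and $\epsilon$ after, then substitute the high-probability bound $\tau^-_\epsilon\le N(\epsilon)$ and balance in $\epsilon$. Your explicit invocation of non-expansiveness to justify both that $\|V_k-V^*\|_\infty\le E_0$ for all $k$ and that the error stays below $\epsilon$ once it hits that level is slightly more careful than the paper, which takes these for granted.
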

\noindent We prove this in Appendix~\ref{app:regret-hitting}.

\begin{claim}[Logarithmic Regret under a $(\Delta,\epsilon)$–Gap]\label{claim:regret-gap}
  Assume that the sparse reward $r(s,a)\in[0,R_{\max}]$ satisfies the $(\Delta,\epsilon)$–gap condition~\ref{eq:gap-condition}.
  Let $V_k$ be the value iteration sequence $V_{k+1}=\mathcal{T}^{\delta}_{\mathrm{BDSE}}[V_k]$ on the discrete mesh with initial conditions at the sparse rewards and define $\delta_k \;=\;\|V_k - V^*\|_\infty$. 
  Then the backward hitting time $\tau^-_{\epsilon}$ obeys $\tau^-_\epsilon \;=\; O\!\left(\tfrac{R_{\max}}{\Delta}\,\ln\tfrac{R_{\max}}{\epsilon}\right)$, and hence by the performance difference lemma, $R_T = O\!\left(\tfrac{R_{\max}}{\Delta}\,\ln T\right)$.
\end{claim}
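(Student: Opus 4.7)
The overall strategy combines strict contraction of the backward Bellman-BSDE operator with the $(\Delta,\epsilon)$-gap condition to bound the hitting time $\tau^-_\epsilon$, and then invokes the performance difference lemma to convert value-function accuracy into cumulative policy regret. I would first establish that $\mathcal{T}^{\delta}_{\mathrm{BSDE}}$ is a strict sup-norm contraction: under the Lipschitz and monotonicity hypotheses already used for Claim~\ref{claim:barles}, the discount term $-\gamma y$ in the driver $f$ gives $\|\mathcal{T}^{\delta}_{\mathrm{BSDE}}[V]-\mathcal{T}^{\delta}_{\mathrm{BSDE}}[W]\|_\infty \le (1-c\delta)\|V-W\|_\infty$ for some $c>0$. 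Iterating yields $\delta_k \le (1-c\delta)^k \delta_0$ with $\delta_0 \le R_{\max}/(1-\gamma)$, and an additive discretization residual that is absorbed by taking the mesh fine enough relative to $\epsilon$ via Claim~\ref{claim:operator-error}.

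The second step exploits the gap condition to sharpen this decay. The key observation (already used in the ``Why the HJB solution is optimal'' paragraph) is that once $\delta_k<\Delta/2$ the greedy policy $\pi_k$ coincides pointwise with $\pi^{*}$: the argmax in the Bellman driver cannot flip while the value estimate is within $\Delta/2$ of the truth, so further iterations merely polish the value without changing actions. I would then define a gap-weighted potential $\Phi_k=\|V^{*}-V_k\|_\infty/\Delta$ and show that each BSDE iteration contracts $\Phi_k$ by a factor proportional to $\Delta/R_{\max}$, which is the effective scale on which policy corrections happen when rewards live in $[0,R_{\max}]$. This yields the bound $\tau^-_\epsilon = O\!\left((R_{\max}/\Delta)\ln(R_{\max}/\epsilon)\right)$.

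For the regret, apply the performance difference lemma to obtain $J(\pi^{*})-J(\pi_k)\le 2\delta_k$, then split the sum $R_T=\sum_{k=0}^{T-1}(J(\pi^{*})-J(\pi_k))$ into a transient phase of length $\tau^-_{\Delta/2}$ with per-step bound $2R_{\max}$ and a stationary phase in which $\pi_k=\pi^{*}$ and the contribution vanishes. This gives $R_T \le 2R_{\max}\,\tau^-_{\Delta/2}$, and choosing $\epsilon \asymp 1/T$ so that the transient phase just covers the horizon yields the claimed $R_T = O\!\left((R_{\max}/\Delta)\ln T\right)$.

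The main obstacle lies in justifying the gap-sharpened rate in the second step: a naive appeal to $(1-c\delta)$-contraction produces only $O(\ln(R_{\max}/\epsilon)/(c\delta))$ iterations with no $\Delta$-dependence. Two plausible routes are (a) a direct one-step analysis of $\mathcal{T}^{\delta}_{\mathrm{BSDE}}$ on the gap-weighted potential $\Phi_k$, tracking how close the argmax is to committing; and (b) an action-elimination argument in the spirit of UCB, showing that each suboptimal greedy choice is eliminated after at most $O(R_{\max}/\Delta)$ refinement steps. A secondary subtlety is that the $\arg\max$ appears inside the BSDE driver, so one must verify that the contraction survives the supremum over actions; this should follow from the non-expansiveness of $\sup_a$ combined with uniform Lipschitz control of the inner quadratic form in $f$.
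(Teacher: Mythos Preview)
Your high-level architecture (two phases, then the performance-difference split) matches the paper, but your Step~2 has the mechanism backwards, and the obstacle you flag is real. The paper's argument obtains the $R_{\max}/\Delta$ factor from the phase \emph{before} the greedy policy locks onto $\pi^*$, not after. Specifically, while $\delta_k>\Delta/(1-\gamma)$ and the greedy action may be suboptimal, the gap condition forces a \emph{subtractive} term into the one-step recursion:
\[
\delta_{k+1}\;=\;\|\mathcal{T}^{\delta}_{\mathrm{BSDE}}[V_k]-V^*\|_\infty\;\le\;\gamma\,\delta_k-\Delta,
\]
because whenever the argmax picks a wrong action its reward is at least $\Delta$ below $r(s,a^*)$. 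This is a linear (arithmetic) decrease, not a geometric one, and it drives $\delta_k$ from $\delta_0\le R_{\max}/(1-\gamma)$ down to the threshold $\Delta/(1-\gamma)$ in at most $K_1=O(R_{\max}/\Delta)$ steps. Phase~II, after the threshold, is just the ordinary $\gamma$-contraction you describe in Step~1 and contributes the additive $O(\ln(\Delta/\epsilon))$ term; the product form $O((R_{\max}/\Delta)\ln(R_{\max}/\epsilon))$ is simply an upper envelope of $K_1+O(\ln(\Delta/\epsilon))$.

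Your proposed ``gap-weighted potential $\Phi_k$ contracts by a factor $\Delta/R_{\max}$'' cannot be right: a multiplicative factor of that size would give hitting time $O\!\left(\ln(R_{\max}/\epsilon)/\ln(R_{\max}/\Delta)\right)$, not $O((R_{\max}/\Delta)\ln(R_{\max}/\epsilon))$. Neither of your candidate routes (a) or (b) identifies the subtractive $-\Delta$ in the recursion, which is the only place the gap actually enters the error dynamics. Once you replace Step~2 with the linear recursion above, your Step~3 (split the regret sum at the point where $\pi_k=\pi^*$, bound the transient by its length, and note the tail vanishes) goes through exactly as in the paper.
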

\noindent We prove this in Appendix~\ref{app:regret-gap}.

\section{Backwards Adaptive Reward Shaping (BARS)}\label{sec:BARS}
We now integrate the lower and upper bounds on the sparse reward scale derived in Sections~3 and 4 into an online algorithm to choose the optimal rewards.
By enforcing these bounds on the scaling parameter per round $\lambda_t$, the algorithm simultaneously achieves the following:
\begin{enumerate}
  \item \emph{Low dynamic regret}, namely $R_T^{\mathrm{dyn}}=O(\log T)$
  \item \emph{Fast convergence per round}, namely hitting $\epsilon$-accuracy in only $O(1/\epsilon^2)$ backward steps
\end{enumerate}
Unlike the static regret bounds of the previous section, here we optimize for dynamic regret where the reward function changes in each round.

\paragraph{Dynamic Regret and Hitting Time.}
We start with an MDP $(S, A, P, \tilde{r}, \gamma)$, where $\tilde{r}$ is the \emph{base rewards}.
The goal of BARS is to modify the based rewards multiplicatively based on received samples. 
In each round $t$, BARS draws a sample $s_t$ from a known prior distribution $p(s)$ over the set of correct terminal states.
We assume oracle access to the corresponding optimal action $a^*(s_t)$ that leads from $s_t$ toward the true goal $s_0$.  
The algorithm then assigns a positive, sparse reward
\[
  r_t\left(s_t,a^*(s_t)\right)=\lambda_t\,\tilde r\left(s_t\right)
\]
while all other state–action pairs receive zero reward.  
The support set $S_t$ is updated to include each sampled $s_t$, and the backward Euler solver uses this growing set of reward-bearing transitions to propagate the value back to the initial query state.
We note that our algorithm can be viewed as a variant of the empirical regret minimization algorithm of~\cite{qu2025optimizing}, albeit with theoretical guarantees.

\begin{itemize}
  \item $J_t(\pi)=\Expect_\pi\left[\sum_{k=0}^\infty\gamma^k r_t(s_k,a_k)\mid s_0\right]$, the expected return under policy $\pi$ at round $t$.
  \item $J_t^*=\max_\pi J_t(\pi)$, the optimal return for reward $r_t$.
  \item The \emph{dynamic regret} over $T$ rounds:
  \[ 
     R_T^{\mathrm{dyn}} = \sum_{t=1}^T\left(J_t^* - J_t(\pi_t)\right)
  \]
  \item Within round $t$, we run a backward Euler (BSDE) solver.  Let $V_k^{\delta,t}$ be the $k$th iterate on grid mesh $\delta$, and let $V^*$ denote the true value function under $r_t$.  The \emph{hitting time} to $\epsilon$-accuracy is
  \[ 
    \tau_{\epsilon}^{(t)} = \min\{k:\|V_k^{\delta,t}-V^*\|_\infty\le\epsilon\}
  \]
\end{itemize}

\paragraph{Reward Constraints.}
Let $\tilde r_{\min}>0$ and $\tilde r_{\max}>0$ be the minimum and maximum of the baseline reward.
We enforce two constraints:
\begin{enumerate}
\item \emph{Lower bound (signal vs.\ noise)}: By Claim~\ref{claim:unified_eps_accuracy}, to overcome BSDE variance with confidence $1-p$, we require $\lambda_t \geq \lambda_{\min} := \frac{\sqrt{2c\log(2/p)}}{\tilde r_{\min}}$
\item \emph{Upper bound (no looping exploit)}: By Claim~\ref{claim:loop_exploit_bound}, to prevent a ``loop'' policy from achieving $J_t^*$, we require $\lambda_t < \lambda_{\max} := \frac{(1-\gamma)J_t^*}{\tilde r_{\max}}$
\end{enumerate} 
Hence we always maintain $\lambda_{\min} \le \lambda_t < \lambda_{\max}$.

\subsection{BARS Algorithm with Bounds}
\begin{algorithm}[H]
\caption{BARS with Reward-Scale Bounds}
\label{alg:BARS_bounds}
\begin{algorithmic}[1]
  \REQUIRE Baseline reward $\tilde r$, gap $\Delta_0$, discount $\gamma$, confidence $p$, Lipschitz $L_0$, complexity parameter $\alpha$, bounds $(\lambda_{\min},\lambda_{\max})$, grid $\delta$, initial $V^0$
  \FOR{$t=1$ to $T$}
    \STATE Sample $x_t\sim p(s)$; update support $S_t\leftarrow S_{t-1}\cup\{x_t\}$
    \STATE Estimate $\hat\gamma_t\approx\gamma_2(\mathrm{supp}\,p)$ from $S_t$ (via, \eg Cover Trees~\cite{Beygelzimer2006}; see Appendix~\ref{app:eps-net})
    \STATE Set $\lambda_t=\mathrm{clip}(\alpha/\hat\gamma_t,[\lambda_{\min},\lambda_{\max}])$
    \STATE Define $r_t(s)=\lambda_t\,\tilde r(s)$; run backward Euler on grid $\delta$ to compute $V^t$
    \STATE At state $s_t$, pick
      \[
        a_t = \arg\max_{a\in A}\{r_t(s_t,a)+\gamma\Expect[V^t(s')\mid s_t,a]\}
      \]
    \STATE Observe $s_{t+1}$, incur regret $J_t^*-J_t(\pi_t)$
  \ENDFOR
\end{algorithmic}
\end{algorithm}
\noindent We note that the algorithm uses an estimate of $\hat{\gamma}_t$ to clip the reward scale.
While estimating $\hat{\gamma}_t$ is computationally expensive, we show that it feasible via $\epsilon$-net arguments in Appendix~\ref{app:eps-net}.
In particular, as the user is choosing the sparsity of the reward via the prior distribution $p(s, a)$, they can sparsify it (\eg~via Johnson-Lindenstrauss) prior to performing $\hat{\gamma}_t$ estimates. 

\paragraph{Proof of Correctness.}
We claim Algorithm~\ref{alg:BARS_bounds} achieves low dynamic regret efficiently.
\begin{claim}\label{claim:BARS_bounds}
  Under the MDP and regularity assumptions of Sections 2–4, suppose at each round \(t\) we choose a per‐round reward \(r_t(s,a)=\lambda_t\,\tilde r(s)\) with scaling \(\lambda_{\min}\le\lambda_t<\lambda_{\max}\), and assume \(r_t\) is \((\Delta_0,\epsilon_t)\)‑gapped with \(\epsilon_t=1/t\).
  Let \(\{V^{\delta}_t\}_{t\ge0}\) be the backward Euler BSDE iterates on mesh size \(\delta\), initialized at \(V^{\delta}_0\), and \(\overline{V}^*_t\) be the true value function for \(r_t\). Then with high probability:
  \begin{enumerate}[label=(\roman*)]
    \item \emph{Polynomial hitting time.} The hitting time \(\tau_t = \min\{k:\|V^{\delta,t}_k - V^*_t\|_\infty \le \epsilon_t\}\) satisfies \(\tau_t = O(\gamma_2(\mathrm{supp}(r_t),d)^2/\epsilon_t^2)\)
    \item \emph{Logarithmic dynamic regret.} If \(\pi_t\) is the greedy policy from \(V^{\delta,t}_{\tau_t}\), then \(R^{\mathrm{dyn}}_T = \sum_{t=1}^T(J^*_t - J_t(\pi_t)) = O(\sum_{t=1}^T \epsilon_t) = O(\log T)\)
  \end{enumerate}
  \end{claim}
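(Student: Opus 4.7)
The plan is to decompose the claim into its two parts and handle them sequentially, reducing each to an invocation of the earlier machinery. For part (i), I will apply Claim~\ref{claim:backward-hitting-time} to the per-round reward $r_t$, first checking that the clipping step $\lambda_t \in [\lambda_{\min},\lambda_{\max}]$ ensures the hypotheses of Claims~\ref{claim:unified_eps_accuracy} and~\ref{claim:loop_exploit_bound} hold uniformly in $t$. The lower clip $\lambda_{\min} = \sqrt{2c\log(2/p)}/\tilde r_{\min}$ guarantees that $r_t(s,\cdot)\ge\sqrt{K\log(2/p)}$ on $\mathrm{supp}(r_t)$, so the BSDE iterates satisfy the signal-versus-noise condition with confidence $1-p$; the upper clip $\lambda_{\max} = (1-\gamma)J_t^*/\tilde r_{\max}$ rules out the loop exploit and keeps $V^*_t$ bounded above by $R_{\max}/(1-\gamma)$, which is exactly the boundedness required for the Lipschitz bound on the BSDE driver. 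With these in hand, Claim~\ref{claim:backward-hitting-time} (in the $1/\epsilon^2$ form stated in the abstract and Claim~\ref{claim:hitting-time-ratio}) yields $\tau_t \le C\,\gamma_2(\mathrm{supp}(r_t),d)^2/\epsilon_t^2$.

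For part (ii), I will combine the per-round terminal accuracy $\|V^{\delta,t}_{\tau_t}-\overline V^*_t\|_\infty \le \epsilon_t$ with two facts from Section~\ref{subsec:fwd-bellman}. First, the performance-difference argument presented after Claim~\ref{claim:barles} gives $J_t^* - J_t(\pi_t) \le 2\epsilon_t$ directly from the sup-norm bound. Second, the $(\Delta_0,\epsilon_t)$-gap hypothesis strengthens this conclusion: once $\epsilon_t < \Delta_0/2$ the discrete argmax coincides with the true optimal action at every state, so the bound $2\epsilon_t$ is not an overestimate but actually tight up to constants. Summing $\sum_{t=1}^T 2\epsilon_t = 2\sum_{t=1}^T 1/t \le 2(1+\log T)$ gives the advertised $O(\log T)$ dynamic regret. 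The initial finite transient where $\epsilon_t \ge \Delta_0/2$ contributes only a constant and can be absorbed.

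Two technical points require care and form the main obstacles. First, the support $\mathrm{supp}(r_t)$ grows monotonically in $t$, so $\gamma_2(\mathrm{supp}(r_t),d)$ is non-decreasing; however, because the prior $p(s)$ is concentrated on the correct terminal set and the $x_t$ are sampled i.i.d., one can argue via the $\epsilon$-net bound of Appendix~\ref{app:eps-net} that $\gamma_2(\mathrm{supp}(r_t),d)$ is bounded uniformly in $t$ by $\gamma_2(\mathrm{supp}(p),d)$ plus an $o(1)$ term, so the hitting time per round is controlled by a time-independent constant times $\epsilon_t^{-2}$. Second, the per-round high-probability event of Claim~\ref{claim:backward-hitting-time} must hold simultaneously across all $T$ rounds; a union bound with confidence $p_t = p/t^2$ preserves the dynamic regret guarantee since the extra $\log t$ inflates the constants only logarithmically and $\sum t^{-2}$ converges.

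The hardest technical step is handling the estimation error in $\hat\gamma_t$, since the scaling $\lambda_t = \mathrm{clip}(\alpha/\hat\gamma_t,\ldots)$ depends on an online estimate of a complexity functional. I would argue that the cover-tree estimate of Appendix~\ref{app:eps-net} produces $\hat\gamma_t$ within a multiplicative $(1\pm o(1))$ factor of the true $\gamma_2(\mathrm{supp}(r_t),d)$ after sufficiently many samples, so the clipping is active only finitely often and, once $\hat\gamma_t$ stabilizes, the proof of Claims~\ref{claim:backward-hitting-time} and~\ref{claim:regret-gap} apply verbatim with an absorbed constant. Assembling the bounds, the two claims of Claim~\ref{claim:BARS_bounds} follow, with the polynomial hitting time of part (i) absorbed into the logarithmic summation for the dynamic regret of part (ii).
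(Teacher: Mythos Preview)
Your proposal is correct and follows essentially the same two-step skeleton as the paper's proof: invoke Claim~\ref{claim:backward-hitting-time} with the mesh choice $\delta \asymp (\epsilon_t/\gamma_2(\mathrm{supp}(r_t),d))^2$ to obtain the $O(\gamma_2^2/\epsilon_t^2)$ hitting time, then apply the performance-difference lemma to get per-round regret $O(\epsilon_t)$ and sum the harmonic series. The paper's own proof is considerably terser and does not address the clipping verification, the union bound across rounds, the growth of $\mathrm{supp}(r_t)$, or the $\hat\gamma_t$ estimation error that you discuss; it also does not actually use the $(\Delta_0,\epsilon_t)$-gap hypothesis anywhere in the argument, so your invocation of it is additional commentary rather than a needed step.
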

\noindent We prove this in Appendix~\ref{app:BARS_bounds}.
One can interpret this result as effectively justifying the usage of very long CoTs (such as those found in DeepSeek R1, Gemini 2.5 Pro, or OpenAI's O1 and O3) as the regret does not accumulate significantly.

\section{Conclusion and Future Work}

The transformation of sparse, outcome-only rewards into dense, procedure-based signals represents a fundamental challenge in chain-of-thought reasoning systems.
In this work, we introduced Backward Adaptive Reward Shaping (BARS), a theoretically grounded framework that dynamically scales per-step rewards through black-box access to terminal-state priors and online estimates of the Talagrand $\gamma_2$ functional, while enforcing critical signal-to-noise and anti-looping constraints.
These results can be viewed as providing theoretical justification for sparse outcome reward shaping in very long chains-of-thought which is key to the empirical success of DeepSeek R1~\cite{guo2025deepseek} and Gemini 2.5 Pro~\cite{setlur2024rewarding}.

Our analysis establishes three key theoretical contributions: (i) \emph{accelerated convergence}, where our backward Euler solver achieves $\epsilon$-accuracy in $O(\gamma_2(\mathrm{supp}(r),d)^2/\epsilon^2)$ iterations in the general case, with significant improvement to $O((R_{\max}/\Delta)\,\log(1/\epsilon))$ under a $(\Delta,\epsilon)$-gap condition; 
(ii) \emph{logarithmic regret bounds}, as BARS guarantees dynamic regret $O(\log T)$ over $T$ rounds, providing formal performance guarantees that illuminate the empirical success of recent procedure-based reward systems; 
and (iii) a \emph{theoretical foundation} that, by leveraging generic chaining and viscosity solutions of the Hamilton–Jacobi–Bellman PDE, offers the first no-regret framework for learning optimal procedure-based rewards from sparse outcomes.

\bibliographystyle{plain}
\bibliography{bib}

\appendix
\section{Proof of Claim~\ref{claim:barles}}\label{app:barles}
We begin by recalling the Barles-Souganidis framework for proving convergence of numerical schemes to viscosity solutions of nonlinear PDEs. 
This framework provides sufficient conditions under which discrete approximations converge to the continuous viscosity solution.

\begin{definition}[Barles-Souganidis Properties]
Let $\mathcal{T}^{\delta}$ be a family of discrete operators parameterized by $\delta > 0$ approximating a continuous operator $\overline{\mathcal{T}}$. 

The three key properties are:
\begin{enumerate}
    \item \textbf{Monotonicity:} If $\phi \leq \psi$ pointwise, then $\mathcal{T}^{\delta}\phi \leq \mathcal{T}^{\delta}\psi$ pointwise.
    
    \item \textbf{Stability:} For any $\delta > 0$, there exists a solution $u^{\delta}$ to the equation $\mathcal{T}^{\delta}u^{\delta} = 0$, and the family $\{u^{\delta}\}$ is uniformly bounded.
    
    \item \textbf{Consistency:} For any smooth test function $\phi$ and any point $x$,
    \[
    \lim_{\delta \to 0, y \to x} \mathcal{T}^{\delta}\phi(y) = \overline{\mathcal{T}}\phi(x)
    \]
\end{enumerate}
\end{definition}

\begin{theorem}[Barles-Souganidis Convergence]
Let $\mathcal{T}^{\delta}$ be a family of discrete operators approximating a continuous operator $\overline{\mathcal{T}}$. 

Suppose:
\begin{enumerate}
    \item $\mathcal{T}^{\delta}$ satisfies the monotonicity, stability, and consistency properties.
    
    \item The limiting equation $\overline{\mathcal{T}}u = 0$ satisfies a comparison principle in the class of bounded uniformly continuous functions.
\end{enumerate}

Then, if $u^{\delta}$ is a solution to $\mathcal{T}^{\delta}[u^{\delta}] = 0$, the sequence $\{u^{\delta}\}$ converges uniformly on compact sets to the unique viscosity solution $u$ of $\overline{\mathcal{T}}u = 0$ as $\delta \to 0$.
\end{theorem}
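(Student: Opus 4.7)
The plan is to follow the classical half-relaxed limits argument of Barles and Souganidis. First, I would define the upper and lower semicontinuous envelopes of the discrete family,
\[
\bar u(x) \;=\; \limsup_{\delta \to 0,\, y \to x} u^{\delta}(y), \qquad \underline u(x) \;=\; \liminf_{\delta \to 0,\, y \to x} u^{\delta}(y).
\]
By stability, both envelopes are bounded and well-defined; $\bar u$ is upper semicontinuous, $\underline u$ is lower semicontinuous, and trivially $\underline u \le \bar u$ pointwise. The strategy is to show that $\bar u$ is a viscosity subsolution and $\underline u$ a viscosity supersolution of $\overline{\mathcal T}u = 0$, then squeeze them together using the assumed comparison principle.

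For the subsolution property of $\bar u$, I would take any smooth test function $\phi$ such that $\bar u - \phi$ attains a strict local maximum at a point $x_0$, normalized so that $\bar u(x_0) = \phi(x_0)$. A standard perturbation argument produces sequences $\delta_n \to 0$ and $x_n \to x_0$ along which $u^{\delta_n} - \phi$ attains a local maximum at $x_n$ with $u^{\delta_n}(x_n) \to \phi(x_0)$. Letting $\xi_n = u^{\delta_n}(x_n) - \phi(x_n) \to 0$, one has $u^{\delta_n} \le \phi + \xi_n$ on a neighborhood of $x_n$ with equality at $x_n$. Applying monotonicity of $\mathcal T^{\delta_n}$ and invoking the scheme equation $\mathcal T^{\delta_n}[u^{\delta_n}] = 0$, then passing to the limit through consistency, yields the subsolution inequality for $\overline{\mathcal T}$ at $x_0$. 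A fully symmetric argument (with local minima and $\liminf$) establishes the supersolution property of $\underline u$.

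Applying the comparison principle to the USC subsolution $\bar u$ and the LSC supersolution $\underline u$ forces $\bar u \le \underline u$ globally. Combined with the trivial reverse inequality this gives $\bar u = \underline u =: u$, which is therefore continuous and the unique viscosity solution of $\overline{\mathcal T}u=0$. The equality of the upper and lower half-relaxed limits is, by definition, equivalent to locally uniform convergence $u^{\delta} \to u$, completing the proof.

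The main obstacle lies in the consistency step when passing to the limit: one must handle the nonsmooth $\sup_{a \in A}$ inside $\overline{\mathcal T}$ and absorb the $o(\delta)$ Taylor remainders from the approximations $b^{\delta} \to b$ and $\sigma^{\delta} \to \sigma$. Compactness of $A$, the Lipschitz regularity of Assumption~\ref{assm:forward-assumption}, and the uniform convergence of drift/covariance from Kushner--Dupuis~\cite{kushner2001viscosity} together allow the $\sup_a$ and the limit $\delta \to 0$ to be exchanged and the consistency inequality to be verified at $x_0$. A secondary subtlety is justifying that the perturbation step produces the maxima $x_n$ inside the domain where the scheme is defined, which in our compact, simply connected $S$ follows from standard arguments once $\delta$ is small relative to $d(x_0, \partial S)$.
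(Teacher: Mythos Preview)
The paper does not actually prove this theorem; it is stated in Appendix~\ref{app:barles} as a quoted result from~\cite{BarlesSouganidis1991} and then invoked as a black box to establish Claim~\ref{claim:barles} by verifying monotonicity, consistency, and stability of the discrete Bellman scheme. There is thus no in-paper proof to compare against.

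That said, your proposal is the standard half-relaxed limits argument of Barles and Souganidis themselves, and it is correct as sketched: defining $\bar u$ and $\underline u$, using stability for boundedness, deducing the sub/supersolution inequalities from monotonicity plus consistency along approximating maximizers/minimizers, and closing with the comparison principle is precisely how the original proof proceeds. Your remarks on exchanging $\sup_{a\in A}$ with the limit via compactness of $A$ and Lipschitz regularity are appropriate for the specific HJB operator of this paper, though strictly speaking the abstract theorem as stated only needs the consistency hypothesis as a black box. In short, you have supplied the classical proof where the paper elected to cite it.
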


In our context, the discrete operator $\mathcal{T}^{\delta}$ corresponds to the Bellman operator in equation (B$_{\delta}$), and the continuous operator $\overline{\mathcal{T}}$ corresponds to the HJB equation. 
We now verify that these operators satisfy the required properties and prove hte full form of Claim~\ref{claim:barles}.

\begin{claim}[Claim~\ref{claim:barles}]
    Let \(A\) be a compact metric action space, and let \(\{\mathcal S_\delta\subset\reals^d : \delta>0\}\) be a family of state‐grids with mesh size \(\delta\to0\). Assume:
    \begin{enumerate}
      \item A transition kernel \(P^\delta(s' \mid s,a)\) on \(\mathcal S_\delta\) satisfying the local consistency conditions:
      \begin{align*}
        \frac1\delta\sum_{s'}(s'-s)\,P^\delta(s'|s,a)
        = b(s,a) + O(\delta)
        &&
        \frac1\delta\sum_{s'}(s'-s)(s'-s)^T\,P^\delta(s'|s,a)
        = \sigma\sigma^T(s,a) + O(\delta)
      \end{align*}
      where \(P^\delta\) is a probability kernel.
      
      \item Our reward function $r : S \times A \rightarrow \reals$ satisfies the gapped condition \eqref{eq:gap-condition}, is bounded and non-negative
      \item The drift $b$ and covariance $\sigma$ are Lipschitz in \(s\) (uniformly in \(a\)) and continuous in \(a\), with the associated HJB PDE satisfying a comparison principle.
    \end{enumerate}
    For discount factor \(\gamma>0\), let \(V^\delta:\mathcal S_\delta\to\reals\) be the solution to the discrete Bellman equation:
    \[
      \gamma\,\overline{V}^\delta(s)
      \;=\;
      \max_{a\in A}\left\{\,r(s,a)
      \;+\;\frac{1}{\delta}\sum_{s'\in\mathcal S_\delta}P^\delta(s'|s,a)\left[\overline{V}^\delta(s')-\overline{V}^\delta(s)\right]\right\}
    \]
    Then \(V^{\delta}\) converges uniformly on compact sets to the unique bounded, uniformly continuous viscosity solution \(\overline{V}\) of the HJB equation~\eqref{eq:hjb}.
\end{claim}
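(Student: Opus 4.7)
The plan is to verify the three Barles--Souganidis hypotheses for the scheme
\[
F^\delta[V](s) := \gamma\,V(s) - \max_{a\in A}\left\{r(s,a) + \tfrac{1}{\delta}\sum_{s'\in\mathcal S_\delta} P^\delta(s'|s,a)\left[V(s')-V(s)\right]\right\} = 0,
\]
and then invoke the comparison principle assumed in hypothesis~(3) to conclude uniform convergence on compact sets. The three properties to check are monotonicity, stability, and consistency. Monotonicity is immediate from the structure of the operator: if $\phi\le\psi$ pointwise, then because the weights $P^\delta(\cdot|s,a)\ge0$ sum to one and the $\max_a$ preserves pointwise ordering, one gets $F^\delta[\phi](s)\ge F^\delta[\psi](s)$ after accounting for the sign convention (equivalently, the associated fixed‑point operator $\mathcal T^\delta$ is order‑preserving). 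Stability follows from boundedness and non‑negativity of $r$: a direct contraction argument in the sup norm, using that $\mathcal T^\delta$ is a $1/(1+\gamma)$–contraction on $\ell^\infty(\mathcal S_\delta)$, yields a unique fixed point $V^\delta$ with $\|V^\delta\|_\infty\le R_{\max}/\gamma$, uniformly in $\delta$.

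The consistency verification is the main technical step. For any $\phi\in C^2$ and any sequence $(\delta_n,y_n)\to(0,s)$ with $y_n\in\mathcal S_{\delta_n}$, I would Taylor‑expand $\phi(s')$ around $y_n$ inside the discrete operator and use the local consistency assumption (1):
\[
\tfrac{1}{\delta}\sum_{s'}P^\delta(s'|s,a)\left[\phi(s')-\phi(s)\right] = b(s,a)\cdot\nabla\phi(s) + \tfrac12\Tr\!\left[\sigma\sigma^T(s,a)\,\nabla^2\phi(s)\right] + O(\delta\,\|\phi\|_{C^3}),
\]
uniformly in $a\in A$ by compactness of $A$ and continuity of $b,\sigma$ in $a$. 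Taking $\max_{a\in A}$ and subtracting $\gamma\phi$ shows $F^{\delta_n}[\phi](y_n)\to F[\phi](s)$ where $F$ is the HJB operator of eq.~\eqref{eq:hjb}. The uniformity in $a$ is where continuity of $(b,\sigma)$ in $a$ and compactness of $A$ enter crucially; without them, the interchange of $\max_a$ and limit could fail.

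With all three properties established, I would conclude via the standard half‑relaxed limits argument. Define
\[
\overline V(s)=\limsup_{\delta\to0,\,y\to s} V^\delta(y), \qquad \underline V(s)=\liminf_{\delta\to0,\,y\to s} V^\delta(y).
\]
Stability guarantees both are bounded; monotonicity plus consistency imply $\overline V$ is a viscosity subsolution and $\underline V$ is a viscosity supersolution of the HJB equation (this is the core of~\cite{BarlesSouganidis1991}, verified by a test‑function touching argument at local extrema). The comparison principle assumed in hypothesis~(3) yields $\overline V\le\underline V$, and trivially $\underline V\le\overline V$, so $\overline V=\underline V=:\overline{V}$ is the unique bounded uniformly continuous viscosity solution, with uniform convergence of $V^\delta$ to $\overline{V}$ on compact sets.

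\paragraph{Main Obstacle.}
The main obstacle is the consistency check, specifically handling the $\max_a$ uniformly. Pointwise Taylor expansion inside the sum is routine, but turning this into convergence of $\max_{a\in A}\{\cdot\}$ requires that the $O(\delta)$ remainder be uniform over $A$; I would handle this by combining the Lipschitz/continuity assumptions on $(b,\sigma,r)$ in $a$ with compactness of $A$ to obtain a common modulus of continuity. The gap condition~\eqref{eq:gap-condition} is not strictly needed for the Barles--Souganidis convergence itself—it plays its role later in translating value convergence into policy convergence—but it ensures the $\arg\max$ is well‑defined and stable under perturbation, which simplifies the identification of the limit policy. Everything else (monotonicity, stability, application of comparison) reduces to bookkeeping once consistency is secured.
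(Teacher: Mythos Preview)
Your proposal is correct and follows essentially the same route as the paper: both verify monotonicity (from $P^\delta\ge0$, $\sum P^\delta=1$), stability (uniform bound $\|V^\delta\|_\infty\le r_{\max}/\gamma$), and consistency (Taylor expansion plus the local-consistency conditions giving an $O(\delta)$ error), then invoke Barles--Souganidis together with the assumed comparison principle. Your write-up is more detailed on the half-relaxed limits and on uniformity of the $\max_a$ via compactness of $A$, and you are right that the gap condition is not actually needed for this convergence step.
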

    \begin{proof}
    We verify the Barles–Souganidis conditions: monotonicity, consistency, and stability.
    
    \medskip\noindent\textbf{1. Monotonicity.}
    Define the discrete operator
    \[
      \mathcal T^{\delta}[u](s)
      = \max_{a\in A}\left\{\,r(s,a)
        + \tfrac1{\delta}\sum_{s'}P^{\delta}(s'|s,a)\left[u(s')-u(s)\right]\right\}
    \]
    If \(u\le v\) pointwise, then \(\sum_{s'}P^{\delta}(s'|s,a)[u(s')-u(s)] \le \sum_{s'}P^{\delta}(s'|s,a)[v(s')-v(s)]\) since \(P^{\delta}\ge0\) and \(\sum P^{\delta}=1\). Taking the maximum over \(a\) preserves this inequality, so \(\mathcal T^{\delta}[u]\le\mathcal T^{\delta}[v]\).
    
    \medskip\noindent\textbf{2. Consistency.}
    For \(\phi\in C^3_b(\reals^d)\), define
    \[
      E_{\delta}(s,a)
      = \frac1{\delta}\sum_{s'}P^{\delta}(s'|s,a)\left[\phi(s')-\phi(s)\right]
         - \mathcal L^a\phi(s)
    \]
    Using Taylor expansion and the local-consistency bounds, we obtain \(|E_{\delta}(s,a)| \leq C\,\delta\). Therefore,
    \[
      \left|\mathcal T^{\delta}[\phi](s)-\gamma\,\phi(s)
        -\max_{a}\{r+\mathcal L^a\phi\}(s)\right|
      \leq C\,\delta
    \]
    
    \medskip\noindent\textbf{3. Stability.}
    The value functions \(V^{\delta}\) are uniformly bounded by \(r_{\max}/\gamma\) and equicontinuous with a Lipschitz constant \(L\) independent of \(\delta\).
    
    By the Barles–Souganidis theorem, these three properties ensure that \(V^{\delta}\to V\) locally uniformly, where \(V\) is the unique bounded, uniformly continuous viscosity solution of the HJB equation.
    \end{proof}

\section{Proof of Claim~\ref{claim:chaining-error}}
Fix an admissible sequence of nested partitions \(\{\mathcal A_{n}\}\) of \(K\) with \(\lvert\mathcal A_{n}\rvert\le2^{2^n}\) and
\(\mathrm{diam}(A_{n})\le2^{-n/2}\,\sqrt{\delta}\). For each \(s\in K\), let
$A_{0}(s)\supset A_{1}(s)\supset\cdots\supset A_{N}(s)$
where \(N\) is the largest index for which \(\mathrm{diam}(A_{N}(s))\ge\sqrt{\delta}\). Choose representatives
$s_{k}\in A_{k}(s), \quad k=0,1,\dots,N-1$
from the grid \(\mathcal S_{\delta}\), and \(s_{N}\in A_{N}(s)\). Then
$V^{\delta}(s)-V(s) = \left[V^{\delta}(s_{0})-V(s_{0})\right] + \sum_{k=0}^{N-1}\left[(V^{\delta}(s_{k+1})-V^{\delta}(s_{k})) - (V(s_{k+1})-V(s_{k}))\right] + \left[V^{\delta}(s)-V^{\delta}(s_{N})\right] - \left[V(s)-V(s_{N})\right]$

By hypothesis,
$|V^{\delta}(s_{0})-V(s_{0})|\le C\,\delta, \quad |V^{\delta}(s_{k+1})-V^{\delta}(s_{k})|\le L\,\mathrm{diam}(A_{k}(s)), \quad |V(s_{k+1})-V(s_{k})|\le L\,\mathrm{diam}(A_{k}(s))$
and since \(\mathrm{diam}(A_{N}(s))<\sqrt{\delta}\),
$|V^{\delta}(s)-V^{\delta}(s_{N})|+|V(s)-V(s_{N})|\le 2L\,\sqrt{\delta}$
Summing and taking \(\sup_{s\in K}\) then the infimum over \(\{\mathcal A_{n}\}\) yields
$\sup_{s\in K}|V^{\delta}(s)-V(s)| \le C\,\delta + 2L\sum_{k=0}^{N-1}\mathrm{diam}(A_{k}(s)) + 2L\,\sqrt{\delta} \le C\,\delta + 2L\,\gamma_{2}(K,d)\,\sqrt{\delta}$.
\section{Proof of Claim~\ref{claim:operator-error}}
By assumption, we know that the operator $\mathcal{T}^{\delta}$ is consistent.
This implies, via the Barles-Souganidis theorem (see Appendix~\ref{app:barles}), that 
$\left|\mathcal{T}^{\delta}[u](s_{0})-\overline{\mathcal{T}}u(s_{0})\right|\le C\,\delta$ for all $s_{0}\in S_{\delta}$.
Now we proceed in a manner similar to Claim~\ref{claim:chaining-error}.
Fix \(s\in S\) and choose an admissible nested partition sequence \(\{\mathcal A_{n}\}\) attaining \(\gamma_{2}(S,d)\). Let \(N\) be the largest index with \(\mathrm{diam}(A_{N}(s))\ge\sqrt{\delta}\). Pick representatives
$s_{0}\in A_{0}(s),\;s_{1}\in A_{1}(s),\;\dots,\;s_{N-1}\in A_{N-1}(s)$
all in \(S_{\delta}\), and \(s_{N}\in A_{N}(s)\). 
Then by Lipschitz continuity,
$\left|\mathcal{T}^{\delta}[u](s)-\mathcal{T}^{\delta}u(s_{0})\right|
+\left|\overline{T}u(s)-\overline{T}u(s_{0})\right|
\leq
2L_{u}\sum_{k=0}^{N-1}\mathrm{diam}(A_{k}(s))
+
2L_{u}\sqrt{\delta}$.
Combining the consistency estimate with this chaining bound yields
$\mathcal{T}^{\delta}[u](s)-\overline{T}u(s)
=\left[\mathcal{T}^{\delta}[u](s_{0})-\overline{\mathcal{T}}u(s_{0})\right]
+\left[\mathcal{T}^{\delta}[u](s)-\mathcal{T}^{\delta}[u](s_{0})\right]
-\left[\overline{\mathcal{T}}u(s)-\overline{\mathcal{T}}u(s_{0})\right]$.
The first bracket is \(\le C\,\delta\), and the sum of the last two is bounded by \(2L_{u}\sum_{k<N}\mathrm{diam}(A_{k}(s))+2L_{u}\sqrt{\delta}\). 
Taking \(\sup_{s\in S}\) and then the infimum over partitions yields the claimed inequality.
\section{Proof of Claim~\ref{claim:coupling-error}}\label{app:coupling-error}
Define the error \(\Delta_{k}=X_{k}-\tilde S_{k\delta}\). 
We compute $X_{k+1}-X_{k} = \Expect[\Delta X\mid X_{k},a_{k}] + (\Delta X - \Expect[\Delta X\mid X_{k},a_{k}])$ and $\tilde S_{(k+1)\delta}-\tilde S_{k\delta} = b(\tilde S_{k\delta},a_{k})\,\delta + \sigma(\tilde S_{k\delta},a_{k})\,\Delta W_{k+1}$ where \(\Delta W_{k+1}=W_{(k+1)\delta}-W_{k\delta}\). 
Subtracting gives $\Delta_{k+1} = \Delta_{k} +[b(X_{k},a_{k})-b(\tilde S_{k\delta},a_{k})]\,\delta + M_{k+1} + R_{k+1}$ with $M_{k+1} = (\Delta X - \Expect[\Delta X\mid X_{k},a_{k}]) - \sigma(\tilde S_{k\delta},a_{k})\,\Delta W_{k+1}$ and $R_{k+1} = \Expect[\Delta X\mid X_{k},a_{k}]-b(X_{k},a_{k})\,\delta = O(\delta^{2})$.

\emph{(i) \(M_{k+1}\) is a martingale increment:} it is \(\mathcal F_{k+1}\)–measurable and $\Expect[M_{k+1}\mid \mathcal F_{k}] = \Expect[\Delta X - \Expect[\Delta X\mid \mathcal F_{k}]\mid \mathcal F_{k}] - \sigma(\tilde S_{k\delta})\,\Expect[\Delta W_{k+1}\mid\mathcal F_{k}] = 0$. 
Moreover, by sub‑Gaussian jumps and Gaussian tails, \(\|M_{k+1}\|\le R\sqrt{\delta}\) a.s.\ and \(\Expect[\|M_{k+1}\|^{2}\mid\mathcal F_{k}]\le V\,\delta\).

\emph{(ii) Freedman's inequality~\cite{freedman1975tail}}: if \(\{Y_{n}\}\) is a martingale with increments \(\xi_{i}\) satisfying \(\lvert\xi_{i}\rvert\le R\) a.s.\ and \(\sum_{i=1}^{n}\Expect[\xi_{i}^{2}\mid\mathcal F_{i-1}]\le v\), then for any \(t>0\), $\Prob\left[\max_{k\le n}Y_{k}\ge t\right] \le \exp\left(-\frac{t^{2}}{2(v + R\,t/3)}\right)$. 
Applying this to each coordinate of \(\Delta_{k}\) (and using a union bound) yields $\Prob\left[\max_{k\le N}\|\Delta_{k}\|\ge r\right] \le 2\exp\left(-\frac{r^{2}}{K\,\delta\,N}\right)$ for some \(K\) depending on \(R,V\).
\section{Proof of Claim~\ref{claim:hitting-time}}\label{app:forward-hitting-time}
\textbf{Upper bound.}
Let
\begin{align*}
E_1=\left\{\max_{0\le k\le N}\|X_k-\widetilde S_{k\delta}\|\ge \tfrac\epsilon2\right\}
&&E_2=\left\{\max_{0\le k\le N}\|\widetilde S_{k\delta}-S_{k\delta}\|\ge \tfrac\epsilon2\right\}
\end{align*}
where \(\widetilde S_{k\delta}\) is the Euler–Maruyama discretization of \(S_t\).  On \(E_1^c\cap E_2^c\), 
\(\|X_k-S_{k\delta}\|<\epsilon\) for all \(k\le N\), so \(\tau^+_{\epsilon}\le N\).  Hence
$\Prob(\tau^+_{\epsilon}>N) \le \Prob(E_1)+\Prob(E_2)$.
By the sub‐Gaussian coupling (Claim~\ref{claim:coupling-error}) we have $\Prob(E_1) \le 2\exp\left(-\frac{(\epsilon/2)^2}{K\,\delta\,N}\right)$ and by standard strong‐error bounds for Euler–Maruyama~\cite[\S5.3]{gobet2016monte} \(\Prob(E_2)\le p/2\) for sufficiently small \(\delta\). 
Choosing $2\exp\left(-\tfrac{\epsilon^2}{4K\,\delta\,N}\right)\le \tfrac p2 \Rightarrow N \ge \frac{\epsilon^2}{4K\,\delta}\,\ln\frac4p$ gives \(\Prob(\tau^+_{\epsilon}>N)\le p\).
Thus with probability \(\ge1-p\),
\[
\tau^+_{\epsilon} \le \frac{\epsilon^2}{4K\,\delta}\,\ln\frac4p
\]

\noindent \textbf{Lower bound.}
Talagrand's majorizing‐measures theorem~\cite{talagrand2005generic} implies that any discrete path \(\{x_0,\dots,x_N\}\subset S\) satisfying \(\max_k\|x_k-x_0\|\le \epsilon\) must have length
$N \ge \frac{\gamma_2(S,d)^{2}}{\epsilon^{2}}\ln\frac2p$.
Since \(X_0=S_0\), on the event \(\{\tau^+_{\epsilon}\le N\}\) the chain stays within \(\epsilon\) up to time \(N\).  Therefore with probability \(\ge1-p\),
\[
\tau^+_{\epsilon} > \frac{\gamma_2(S,d)^{2}}{\epsilon^{2}}\ln\frac2p
\]
\section{Proof of Claim~\ref{claim:backward-hitting-time}}\label{app:backward-hitting-time}
First we prove a small lemma:
\begin{lemma}\label{lem:terminal-error-bound}
Assume that the value function $Y$ is $L$--Lipschitz, and that the state increments over a time step $\delta$ are sub--Gaussian. Let $Y_N^\delta$ denote the backward Euler discretization computed on a grid of mesh size comparable to $\sqrt{\delta}$ over the set $\mathrm{supp}(r)$. Then there exists an absolute constant (which we may take to be $2$) such that
\[
\|Y_N^\delta - Y_N\|_\infty \le 2\,L\,\sqrt{\delta}\,\gamma_2\left(\mathrm{supp}(r), d\right).
\]
\end{lemma}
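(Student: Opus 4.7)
The plan is to mirror the generic-chaining argument of Claim~\ref{claim:chaining-error}, restricting the chaining to the support of the reward since the error in question is purely the terminal error. The key observation is that $Y_N$ is defined by the terminal condition $g$ and $Y_N^\delta$ is obtained by evaluating $g$ on the nearest grid point in $\mathcal{S}_\delta$, so at grid points $s_i \in \mathcal{S}_\delta \cap \mathrm{supp}(r)$ we have $Y_N^\delta(s_i) = g(s_i) = Y_N(s_i)$ exactly. Consequently the only source of discrepancy between $Y_N^\delta$ and $Y_N$ is the extension from grid points to an arbitrary $s \in \mathrm{supp}(r)$, and this is controlled by the $L$-Lipschitz regularity of both functions together with the multi-scale geometry of $\mathrm{supp}(r)$ under $d$.

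First I would fix an admissible sequence of nested partitions $\{\mathcal{A}_n\}$ of $\mathrm{supp}(r)$ attaining (up to a constant) $\gamma_2(\mathrm{supp}(r),d)$, with $|\mathcal{A}_n| \le 2^{2^n}$ and $\mathrm{diam}(A_n(s)) \le 2^{-n/2}\,\sqrt{\delta}$ (achievable since the grid mesh is of order $\sqrt{\delta}$, so only indices $n$ for which this remains meaningful contribute). For each $s \in \mathrm{supp}(r)$, pick representatives $s_k \in A_k(s) \cap \mathcal{S}_\delta$ for $k=0,\dots,N-1$ (using the nearest grid point in each cell) and $s_N \in A_N(s)$, where $N$ is the largest index with $\mathrm{diam}(A_N(s)) \ge \sqrt{\delta}$. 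Then write the telescoping decomposition
\[
Y_N^\delta(s) - Y_N(s) = \bigl[Y_N^\delta(s_0) - Y_N(s_0)\bigr] + \sum_{k=0}^{N-1}\bigl[(Y_N^\delta(s_{k+1})-Y_N^\delta(s_k)) - (Y_N(s_{k+1})-Y_N(s_k))\bigr] + \text{tail terms}.
\]

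The first bracket vanishes (or is of strictly lower order than the chaining tail) because $s_0$ is a grid point where the discrete and continuous terminal data coincide. Each term in the sum is bounded in absolute value by $2L\,\mathrm{diam}(A_k(s))$ using the $L$-Lipschitz property of $Y_N$ and $Y_N^\delta$; the residual tail contributed by $s$ to $s_N$ is bounded by $2L\sqrt{\delta}$. Summing geometrically and then taking the supremum over $s$ and the infimum over admissible partitions gives
\[
\|Y_N^\delta - Y_N\|_\infty \;\le\; 2L\,\sqrt{\delta}\,\gamma_2(\mathrm{supp}(r),d),
\]
absorbing all universal constants into the leading factor $2$.

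The main obstacle I anticipate is justifying that $Y_N^\delta$ inherits the same Lipschitz constant $L$ as $Y_N$ (up to an absolute constant): this requires either that the nearest-point-in-grid rule preserves Lipschitz continuity or that the piecewise-constant interpolant of $g$ on $\mathcal{S}_\delta$ is $L$-Lipschitz away from grid boundaries. The standard fix is to work with a Lipschitz interpolant of $g$ on the grid, or equivalently to assume that the terminal data $g$ is itself $L$-Lipschitz (which follows from boundedness and smoothness of $r$ in our setting, cf. Assumption~\ref{assm:forward-assumption}). With that in place the chaining step is routine and the sub-Gaussian increment hypothesis is only needed to ensure the scheme is well-defined on the grid, not for the terminal error per se.
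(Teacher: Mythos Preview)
Your argument is correct and closely parallels the paper's own Claim~\ref{claim:chaining-error}, but the paper's proof of this particular lemma takes a genuinely different route. Rather than running a deterministic telescoping/chaining argument on two $L$-Lipschitz functions $Y_N$ and $Y_N^\delta$, the paper treats the discretization error $s\mapsto Y_N^\delta(s)-Y_N(s)$ as a \emph{stochastic process} whose increments inherit sub-Gaussian tails from the state increments over a step of size $\delta$; it then invokes the majorizing-measures/Dudley bound to control $\Expect\bigl[\sup_{s\in\mathrm{supp}(r)}|Y_N^\delta(s)-Y_N(s)|\bigr]\lesssim L\sqrt{\delta}\,\gamma_2(\mathrm{supp}(r),d)$ and a concentration step to pass to the $L^\infty$ bound. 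In particular, the paper uses the sub-Gaussian hypothesis substantively, whereas you relegate it to a well-definedness technicality.

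The practical difference is precisely the obstacle you flagged: your route needs $Y_N^\delta$ itself to be $L$-Lipschitz, which the nearest-grid-point rule does not deliver (piecewise-constant interpolants are not Lipschitz), so you must switch to a Lipschitz interpolant and argue the constant is preserved. The paper's probabilistic route sidesteps this entirely, since generic chaining for sub-Gaussian processes only needs sub-Gaussian control on the \emph{increments} of the error process, not Lipschitz regularity of $Y_N^\delta$ as a function on $\mathrm{supp}(r)$. Your approach is more elementary and recycles Claim~\ref{claim:chaining-error} almost verbatim; the paper's approach is shorter once one is willing to cite the majorizing-measures theorem, and it explains why the sub-Gaussian assumption is listed among the lemma's hypotheses.
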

  
\begin{proof}
Since the value function $Y$ is $L$--Lipschitz, for any $s,s'$ in the domain we have $|Y(s) - Y(s')| \le L\,d(s,s')$. 
In the discretization of the terminal state, the mesh is chosen so that every $s\in \mathrm{supp}(r)$ is within $\sqrt{\delta}$ of some grid point. 
Consequently, the local error at any point is bounded by $L\,\sqrt{\delta}$.

To control the uniform error over $\mathrm{supp}(r)$, we note that the discretization error is a stochastic process governed by the sub--Gaussian tails of the state increments. 
By the standard generic chaining (or majorizing measures) arguments (see, \eg, Dudley's entropy integral), the expected supremum of the local discretization errors satisfies $\Expect[\sup_{s\in \mathrm{supp}(r)} |Y^\delta(s)-Y(s)|] \lesssim L\,\sqrt{\delta}\,\gamma_2(\mathrm{supp}(r), d)$. 
A standard concentration result then implies that, with high probability (and in particular in the $L^\infty$--norm), $\|Y_N^\delta - Y_N\|_\infty \le 2\,L\,\sqrt{\delta}\,\gamma_2(\mathrm{supp}(r), d)$.
\end{proof}

\paragraph{Upper Bound.}
By consistency and non--expansiveness, for any \(0\le k<N\) the backward error satisfies
\(\|Y^\delta_k - Y_k\|_\infty \le \|T_\delta[Y^\delta_{k+1}] - T_\delta[Y_{k+1}]\|_\infty + \|T_\delta[Y_{k+1}] - T[Y_{k+1}]\|_\infty \le \|Y^\delta_{k+1} - Y_{k+1}\|_\infty + C\,\delta\).
Unrolling this recursion from \(k=0\) to the terminal step \(N\) yields
\(\|Y^\delta_0 - Y_0\|_\infty \le \|Y^\delta_N - Y_N\|_\infty + N\,C\,\delta\).
From Lemma~\ref{lem:terminal-error-bound}, the terminal error is bounded by
\(\|Y^\delta_N - Y_N\|_\infty \le 2L\,\sqrt{\delta}\,\gamma_2\left(\mathrm{supp}(r),d\right)\).
We now choose \(\delta = \delta^* = \frac{\epsilon^2}{16L^2\,\gamma_2\left(\mathrm{supp}(r),d\right)^2}\), so that \(2L\,\sqrt{\delta^*}\,\gamma_2\left(\mathrm{supp}(r),d\right)=\frac{\epsilon}{2}\).
Then, to guarantee \(\|Y^\delta_0 - Y_0\|_\infty \le \epsilon\), we require \(C\,\delta^*\,N \le \frac{\epsilon}{2}\).
It follows that
\[
\tau^-_\epsilon = N \le \frac{\epsilon}{2C\,\delta^*} = \frac{\epsilon}{2C}\cdot \frac{16L^2\,\gamma_2\left(\mathrm{supp}(r),d\right)^2}{\epsilon^2} = \frac{8L^2\,\gamma_2\left(\mathrm{supp}(r),d\right)^2}{C\,\epsilon}.
\]

\paragraph{Lower Bound.}
Conversely, since each iteration can reduce the error by at most \(C\,\delta^*\), even in the best case one cannot reduce an error of size \(\epsilon/2\) in fewer than
\(\tau^-_\epsilon \ge \frac{\epsilon/2}{C\,\delta^*} = \frac{8L^2\,\gamma_2\left(\mathrm{supp}(r),d\right)^2}{C\,\epsilon}\).
Thus, there exist constants \(C_1, C_2>0\) (with \(C_1\) and \(C_2\) proportional to \(C\)) such that
\[
\frac{8L^2}{C_1}\,\frac{\gamma_2\left(\mathrm{supp}(r),d\right)^2}{\epsilon} \le \tau^-_\epsilon \le \frac{8L^2}{C_2}\,\frac{\gamma_2\left(\mathrm{supp}(r),d\right)^2}{\epsilon}.
\]
\section{Proof of Claim~\ref{claim:hitting-time-ratio}}
By Claim~\ref{claim:backward-hitting-time} on a grid of size $\delta$, with probability at least $1-p$, $\|Y^\delta_{N^-} - Y\|_\infty  \le \epsilon$ so $\tau^-\le N^-$.
Likewise, by Claim~\ref{claim:hitting-time}, with probability at least $1-p$, $\tau^+  >  N^+$.
A union bound then shows both events hold w.p.\ $\ge1-2p$.
Finally,
\[
\frac{\tau^-}{\tau^+} \le \frac{N^-}{N^+}
 = 
\frac{\gamma_2\left(\mathrm{supp}(r),d\right)^2}{\gamma_2(S,d)^2}
 = 
\left(\tfrac{\gamma_2(\mathrm{supp}(r),d)}{\gamma_2(S,d)}\right)^2
\]
\section{Proof of Claim~\ref{claim:unified_eps_accuracy}}\label{app:unified_eps_accuracy}
By the sub‑Gaussian coupling bound of Claim~\ref{claim:coupling-error}, there is $K>0$ such that for any $r>0$ and integer $N$, $\Prob\left(\max_{0\le k\le N}\|X_k - \tilde S_{k\delta}\|\ge r\right) \le 2\exp\left(-\frac{r^2}{K\,\delta\,N}\right)$. 
Since the only source of discrepancy between the true BSDE solution $Y_N$ and its backward Euler approximation $Y_N^\delta$ is this coupling error (the consistency error from time‑discretization being absorbed into the choice $\delta N=1$), we have $\|Y_N^\delta - Y_N\|_\infty \le \epsilon \Longrightarrow \max_{0\le k\le N}\|X_k - \tilde S_{k\delta}\|\ge r$. 
Hence $\Prob\left(\|Y_N^\delta - Y_N\|_\infty \le \epsilon\right) \le 2\exp\left(-\tfrac{r^2}{K\,\delta\,N}\right)$. 
With our choice $\delta N=1$, this becomes $\Prob\left(\|Y_N^\delta - Y_N\|_\infty \le \epsilon\right) \le 2\exp\left(-\tfrac{r^2}{K}\right)$. 
Requiring the right‑hand side to be $\le p$ is exactly $2\exp\left(-\tfrac{r^2}{K}\right)\le p \Longleftrightarrow r^2\ge K\,\ln\frac{2}{p}$, which implies $r\ge\sqrt{K\ln(2/p)}$.
\section{Proof of Claim~\ref{claim:effective-reward-mass-lower-bound}}\label{app:effective-reward-mass-lower-bound}
By Talagrand's majorizing measures theorem applied to $r$ (with Lipschitz constant $L$), there exists an absolute constant $C_0>0$ such that $\sup_{s,t\in T} |r(s)-r(t)| \le C_0 L\, \gamma_2(T,d)$.
Fix any $s_0\in T$ so that for all $s\in T$, $r(s) \ge r(s_0) - C_0 L\, \gamma_2(T,d)$.
Integrating over $\mathrm{supp}(r)$ with respect to $\mu$ (noting that $\mu(\mathrm{supp}(r))=1$), we obtain
\[
I_r = \int_T r(s) \, d\mu(s) \ge r(s_0) - C_0 L\, \gamma_2(T,d).
\]
Since $s_0 \in T$ was arbitrary, it follows that $r_{\min} \leq \min_{s\in T} r(s) \ge I_r - C_0 L\, \gamma_2(T,d)$.
From Claim~\ref{claim:unified_eps_accuracy}, we know that $r_{\min} \geq \sqrt{K \ln(2/p)}$ is necessary to avoid the noise limit, which implies the sufficient condition
\[
\min_{s\in T} r(s) \ge I_r - C_0 L\, \gamma_2(T,d) \ge \sqrt{K \ln(2/p)}
\]
This implies the claimed condition $I_r \geq \sqrt{K \ln(2/p)} + C_0 L \gamma_2(T,d)$.
\section{Proof of Claim~\ref{claim:loop_exploit_bound}}\label{app:loop_exploit_bound}
Under the loop policy, the agent collects reward $r$ at every time step, yielding value $V_{\rm loop} = \frac{r}{1-\gamma}$ by the geometric series. 
Since $V^*(s_0)$ is the supremum over all policies starting at $s_0$, if $\frac{r}{1-\gamma} \geq V^*(s_0)-\epsilon$, then the loop policy achieves within $\epsilon$ of optimal without ever visiting $s_0$.
To preclude this "looping" exploitation, we require $\frac{r}{1-\gamma} < V^*(s_0)-\epsilon$, which rearranges to $r < (1-\gamma)(V^*(s_0)-\epsilon)$.
Setting $\epsilon=0$ recovers the strict requirement $r < (1-\gamma)V^*(s_0)$, ensuring any loop policy is strictly suboptimal to eventually reaching $s_0$.
\section{Proof of Claim~\ref{claim:regret-hitting}}\label{app:regret-hitting}
For each $k<\tau^-_\epsilon$, the value–estimate error satisfies $\|V_k-V^*\|_\infty\le E_0$, so the one–step regret $J(\pi^*) - J(\pi_k) \le 2E_0$.
For $k\ge\tau^-_\epsilon$, we have $\|V_k-V^*\|_\infty\le\epsilon$, hence $J(\pi^*) - J(\pi_k) \le 2\epsilon$ (\eg~via the performance difference lemma~\cite{kakade2002approximately}).
Splitting the sum at $\tau_\epsilon$ gives $R_T = \sum_{k<\tau_\epsilon}(J(\pi^*)-J(\pi_k)) + \sum_{k\ge\tau^-_\epsilon}(J(\pi^*)-J(\pi_k)) \le 2E_0\min\{T,\tau^-_\epsilon\} + 2\epsilon\max\{0, T-\tau^-_\epsilon\}$.
Now substitute in $\tau^i_{\epsilon}$ and use $\tau^i_{\epsilon} \leq N(\epsilon)$  and setting $\epsilon = R_{\max}\gamma_*\sqrt{\log(1/p)/T}$ makes $2E_0 N(\epsilon)= O\left(R_{\max}\gamma_*\sqrt{T\log(1/p)}\right)$ and $2\epsilon T = O\left(R_{\max}\gamma_*\sqrt{T\log(1/p)}\right)$.
\section{Proof of Claim~\ref{claim:regret-gap}}\label{app:regret-gap}
We decompose the convergence into two phases.
In Phase I, while $\delta_k>\tfrac{\Delta}{1-\gamma}$, the greedy action under $V_k$ may differ from $a^*(s)$.
For any state $s$, we have $\mathcal{T}^{\delta}_{\mathrm{BDSE}}[V_k](s) \le V^*(s) - \Delta + \gamma\,\delta_k$ and $\mathcal{T}^{\delta}_{\mathrm{BDSE}}[V_k](s) - V^*(s) \le \gamma\,\delta_k$.
Hence $\delta_{k+1} = \|\mathcal{T}^{\delta}_{\mathrm{BDSE}}[V_k]-V^*\|_\infty \le \gamma\,\delta_k - \Delta$.
Starting from $\delta_0\le R_{\max}/(1-\gamma)$, this linear decrease reaches $\delta_k\le\Delta/(1-\gamma)$ in at most $K_1 \le \frac{R_{\max}}{(1-\gamma)\,\Delta} = O(\frac{R_{\max}}{\Delta})$ iterations.

In Phase II, once $\delta_k\le\Delta/(1-\gamma)$, the gap ensures the greedy action matches $a^*(s)$ everywhere, so $\mathcal{T}^{\delta}_{\mathrm{BDSE}}$ is a strict $\gamma$-contraction: $\|\mathcal{T}^{\delta}_{\mathrm{BDSE}}[V_k]-\mathcal{T}^{\delta}_{\mathrm{BDSE}}[V^*]\|_\infty \le \gamma\,\|V_k-V^*\|_\infty = \gamma\,\delta_k$.
Thus $\delta_{k+m}\le\gamma^m\,\delta_k$.
To achieve $\delta_{k+m}\le\epsilon$ requires $m = O(\ln\frac{\Delta}{\epsilon})$.
Combining both phases, we get $\tau_\epsilon \le K_1 + m = O(\frac{R_{\max}}{\Delta}) + O(\ln\frac{\Delta}{\epsilon}) = O(\frac{R_{\max}}{\Delta}\ln\frac{R_{\max}}{\epsilon})$.
Using an analogous splitting as Claim~\ref{claim:regret-hitting} yields the desired logarithmic-in-$T$ regret.
\section{Proof of Claim~\ref{claim:BARS_bounds}}\label{app:BARS_bounds}
By Claim~\ref{claim:backward-hitting-time}, the backward Euler iterates satisfy $\|V^{\delta}_k - V^*_t\|_\infty \le C\,\delta\,(N - k)+2L\sqrt{\delta}\,\gamma_2\left(\mathrm{supp}(r_t),d\right)$.
Choosing $\delta = \left(\tfrac{\epsilon_t}{4L\,\gamma_2(\mathrm{supp}(r_t),d)}\right)^2$ and solving for the smallest $k$ with the right‐hand side $\le\epsilon_t$ gives $\tau_t = O\!\left(\gamma_2(\mathrm{supp}(r_t),d)^2/\epsilon_t^2\right)$.
Next, the performance‐difference lemma~\cite{kakade2002approximately} implies that once $\|V^{\delta}_{\tau_t} - V^*_t\|_\infty\le\epsilon_t$, the one‐step regret is $J^*_t - J_t(\pi_t)\le\frac{2}{1-\gamma}\,\epsilon_t = O(\epsilon_t)$.
Summing $\epsilon_t=1/t$ over $t=1,\dots,T$ yields $\sum_{t=1}^T O(1/t)=O(\log T)$, completing the proof.
\section{$\epsilon$-Net Approximation of $\gamma_2(S,d)$}\label{app:eps-net}
An \emph{$\epsilon$‐net} of a finite metric space $(S,d)$ is a subset $N\subseteq S$ such that 
\[
\forall\,x\in S\;\exists\,y\in N:\;d(x,y)\le\epsilon
\]
The covering number $N(S,d,\epsilon)$ is the size of the smallest such net.

\subsection{Greedy Farthest‐Point Sampling}

A simple $2$‐approximation is given by the greedy farthest‐point algorithm (Alg.~\ref{alg:greedy}).  

\begin{algorithm}[h]
\caption{Greedy $\epsilon$‐Net Construction}
\label{alg:greedy}
\begin{algorithmic}[1]
  \STATE $N\gets\{s_1\}$ \COMMENT{pick any $s_1\in S$}
  \FOR{each $s\in S\setminus N$}
    \IF{$\min_{y\in N}d(s,y)>\epsilon$}
      \STATE $N\gets N\cup\{s\}$
    \ENDIF
  \ENDFOR
  \STATE \RETURN $N$
\end{algorithmic}
\end{algorithm}

\subsection{Hierarchical Nets via Cover Trees}

For dynamic or multi‐scale coverage, build a \emph{cover tree} in $O(n\log n)$ time under low doubling dimension \cite{Beygelzimer2006}.  Each level $i$ is a $2^i$‐net $N_i\subseteq N_{i-1}$ with
\begin{align*}
\forall\,x\in S\;\exists\,y\in N_i:\;d(x,y)\le2^i && 
\forall\,y\neq y'\in N_i:\;d(y,y')>2^i
\end{align*}
Selecting level $i=\lceil\log_2\epsilon\rceil$ yields an $\epsilon$‐net of size $|N_i|$.
Moreover, the sequence of nets $\{N_i\}$ at scales $r_i=2^i$ provides an explicit admissible sequence for Talagrand’s $\gamma_2$ functional,
\[
\gamma_2(S,d)\;\le\;\sum_{i=i_{\min}}^{i_{\max}}2^{i/2}\,\sup_{x\in S}d(x,N_i)
\]
and since $\sup_{x}d(x,N_i)\le2^i$, one can efficiently approximate $\gamma_2(S,d)$ from the tree levels.

\subsection{Practical Remarks}

\begin{itemize}
  \item For \emph{static} small $n$, the greedy method is simplest and fastest.
  \item For \emph{streaming} or \emph{multi‐scale} needs, cover trees offer $O(n\log n)$ build time and $O(\log n)$ updates.
  \item If $d\gg1$, first apply a random projection to $O(\epsilon^{-2}\log n)$ dimensions~\cite{vershynin2018high}, then run either method with minimal distortion.
\end{itemize}
\end{document}